\newtheorem{theorem}{Theorem}
\newtheorem{definition}{Definition}
\newtheorem{proposition}{Proposition}
\newtheorem{Corollary}{Corollary}
\newtheorem{prop}{Proposition}
\newtheorem{fact}{Fact}
\newtheorem{theo}{Theorem}
\newcommand{\ie}{\textit{i}.\textit{e}. }
\newcommand{\eg}{\textit{e}.\textit{g}. }
\begin{document}
%
% paper title
% Titles are generally capitalized except for words such as a, an, and, as,
% at, but, by, for, in, nor, of, on, or, the, to and up, which are usually
% not capitalized unless they are the first or last word of the title.
% Linebreaks \\ can be used within to get better formatting as desired.
% Do not put math or special symbols in the title.
\title{On the Decision Boundaries of Neural Networks: A Tropical Geometry Perspective}
%
%
% author names and IEEE memberships
% note positions of commas and nonbreaking spaces ( ~ ) LaTeX will not break
% a structure at a ~ so this keeps an author's name from being broken across
% two lines.
% use \thanks{} to gain access to the first footnote area
% a separate \thanks must be used for each paragraph as LaTeX2e's \thanks
% was not built to handle multiple paragraphs
%
%
%\IEEEcompsocitemizethanks is a special \thanks that produces the bulleted
% lists the Computer Society journals use for "first footnote" author
% affiliations. Use \IEEEcompsocthanksitem which works much like \item
% for each affiliation group. When not in compsoc mode,
% \IEEEcompsocitemizethanks becomes like \thanks and
% \IEEEcompsocthanksitem becomes a line break with idention. This
% facilitates dual compilation, although admittedly the differences in the
% desired content of \author between the different types of papers makes a
% one-size-fits-all approach a daunting prospect. For instance, compsoc 
% journal papers have the author affiliations above the "Manuscript
% received ..."  text while in non-compsoc journals this is reversed. Sigh.

\author{
  \IEEEauthorblockN{
    Motasem Alfarra\IEEEauthorrefmark{1}\textsuperscript{\textsection},
    Adel Bibi\IEEEauthorrefmark{1}\IEEEauthorrefmark{2}\textsuperscript{\textsection},
    Hasan Hammoud\IEEEauthorrefmark{1},
    Mohamed Gaafar\IEEEauthorrefmark{3}\thanks{Part of the work was done while MG was at itemis AG.} and
    Bernard Ghanem\IEEEauthorrefmark{1}
  }
  \\
  \IEEEauthorblockA{\IEEEauthorrefmark{1} King Abdullah University of Science and Technology (KAUST), Thuwal, Saudi Arabia}
  \\
  \IEEEauthorblockA{\IEEEauthorrefmark{2} University of Oxford, Oxford, United Kingdom}\\
  \IEEEauthorblockA{\IEEEauthorrefmark{3} Zalando SE, Berlin, Germany}
  \\
  \{motasem.alfarra,adel.bibi,hasanabedalkader.hammoud,bernard.ghanem\}@kaust.edu.sa, mohamed.gaafar@zalando.de 
}

\IEEEtitleabstractindextext{%
% \begin{abstract}
% The abstract goes here.
% \end{abstract}
\begin{abstract}
This work tackles the problem of characterizing and understanding the decision boundaries of neural networks with piecewise linear non-linearity activations. We use tropical geometry, a new development in the area of algebraic geometry, to characterize the decision boundaries of a simple network of the form (Affine, ReLU, Affine). Our main finding is that the decision boundaries are a subset of a tropical hypersurface, which is intimately related to a polytope formed by the convex hull of two zonotopes. The generators of these zonotopes are functions of the network parameters. This geometric characterization provides new perspectives to three tasks. (\textbf{i}) We propose a new tropical perspective to the lottery ticket hypothesis, where we view the effect of different initializations on the tropical geometric representation of a network's decision boundaries. (\textbf{ii}) Moreover, we propose new tropical based optimization reformulations that directly influence the decision boundaries of the network for the task of network pruning. (\textbf{iii}) At last, we discuss the reformulation of the generation of adversarial attacks in a tropical sense. We demonstrate that one can construct adversaries in a new tropical setting by perturbing a specific set of decision boundaries by perturbing a set of parameters in the network.
\end{abstract}

% Note that keywords are not normally used for peerreview papers.
\begin{IEEEkeywords}
Tropical geometry, decision boundaries, lottery ticket hypothesis, pruning, adversarial attacks.
\end{IEEEkeywords}}

% make the title area
\maketitle
\begingroup\renewcommand\thefootnote{\textsection}
\begin{NoHyper}
\footnotetext{Equal contribution}
\end{NoHyper}

% \renewcommand\thefootnote{\textparagraph}{\partofwork}
% \begin{NoHyper}
% \footnotetext{Part of the work was done while MG was at itemis AG}
% \end{NoHyper}

% To allow for easy dual compilation without having to reenter the
% abstract/keywords data, the \IEEEtitleabstractindextext text will
% not be used in maketitle, but will appear (i.e., to be "transported")
% here as \IEEEdisplaynontitleabstractindextext when the compsoc 
% or transmag modes are not selected <OR> if conference mode is selected 
% - because all conference papers position the abstract like regular
% papers do.
% \IEEEdisplaynontitleabstractindextext
% \IEEEdisplaynontitleabstractindextext has no effect when using
% compsoc or transmag under a non-conference mode.

\IEEEdisplaynontitleabstractindextext

% For peer review papers, you can put extra information on the cover
% page as needed:
% \ifCLASSOPTIONpeerreview
% \begin{center} \bfseries EDICS Category: 3-BBND \end{center}
% \fi
%
% For peerreview papers, this IEEEtran command inserts a page break and
% creates the second title. It will be ignored for other modes.
\IEEEpeerreviewmaketitle

\IEEEraisesectionheading{\section{Introduction}\label{sec:introduction}}

\IEEEPARstart{D}{eep} Neural Networks (DNNs) have demonstrated outstanding performance across a variety of research domains, including computer vision \cite{krizhevsky2012imagenet}, speech recognition \cite{hinton2012speech}, natural language processing \cite{bahdanau2015translate, devlin2018bert}, quantum chemistry \cite{schuett2017quantum}, and healthcare \cite{ardila2019lung, zhou2019autism} to name a few \cite{lecun2015deep}. Nevertheless, a rigorous interpretation of their success remains elusive \cite{shalev2014understanding}. For instance, in an attempt to uncover the expressive power of DNNs, the work of \cite{montufar2014number} studied the complexity of functions computable by DNNs that have piecewise linear activations. They derived a lower bound on the maximum number of linear regions. Several other works have followed to improve such estimates under certain assumptions \cite{arora2016understanding}. In addition, and in attempt to understand some of the subtle behaviours DNNs exhibit, \eg the sensitive reaction of DNNs to small input perturbations, several works directly investigated the decision boundaries induced by a DNN for classification. The work of \cite{curvetare_reg} showed that the smoothness of these decision boundaries and their curvature can play a vital role in network robustness. Moreover, the expressiveness of these decision boundaries at perturbed inputs was
studied in \cite{he2018decision}, where it was shown that these boundaries do not resemble the boundaries around benign inputs. The work of \cite{li2018decision} showed that under certain assumptions, the decision boundaries of the last fully connected layer of DNNs will converge to a linear SVM. Moreover, several works \cite{Lyu2020Gradient} studied the behaviour of decision boundaries when training homogeneous networks with regularized gradient descent along with their generalization capacity compared to randomly initialized networks \cite{cao2019generalization}. \textcolor{black}{Also, \cite{beise2018decisionnarrow} showed that the decision regions of DNNs with strictly monotonic nonlinear activations can be unbounded when the DNNs' width is smaller than or equal to the input dimension.}

More recently, and due to the popularity of the piecewise linear ReLU as an activation function, there has been a surge in the number of works that study this class of DNNs in particular. As a result, this has incited significant interest in new mathematical tools that help analyze piecewise linear functions, such as tropical geometry. While tropical geometry has shown its potential in many applications such as dynamic programming \cite{joswig2019shortest}, linear programming \cite{allamigeon2013simplex}, multi-objective discrete optimization \cite{joswig2019multiobjective}, enumerative geometry \cite{mikhalkin2004enumerative}, and economics \cite{marianne2009games, tran2015auctions}, it has only been recently used to analyze DNNs. For instance, the work of \cite{zhang2018tropical} showed an equivalency between the family of DNNs with piecewise linear activations and integer weight matrices and the family of tropical rational maps, \ie ratio between two multi-variate polynomials in tropical algebra. This study was mostly concerned about characterizing the complexity of a DNN by counting the number of linear regions, into which the function represented by the DNN can divide the input space. This was done by counting the number of vertices of a polytope representation recovering the results of \cite{montufar2014number} with a simpler analysis. More recently, \cite{tropicalpruning} leveraged this equivalency to propose a heuristic for neural network minimization through approximating the tropical rational map.

\textbf{Contributions.} In this paper, we take the results of \cite{zhang2018tropical} several steps further and present a novel perspective on the decision boundaries of DNNs using tropical geometry. To that end, our contributions are three-fold. \textbf{(i)} We derive a geometric representation (convex hull between two zonotopes) for a super set to the decision boundaries of a DNN in the form (Affine, ReLU, Affine). \textbf{(ii)} We demonstrate a support for the lottery ticket hypothesis \cite{frankle2019lottery} from a geometric perspective. \textbf{(iii)} We leverage the geometric representation of the decision boundaries, referred to as the decision boundaries polytope, in two interesting applications: network pruning and adversarial attacks. For \emph{tropical pruning}, we design a geometrically inspired optimization to prune the parameters of a given network such that the decision boundaries polytope of the pruned network does not deviate too much from its original network counterpart. We conduct extensive experiments with AlexNet \cite{krizhevsky2012imagenet} and VGG16 \cite{simonyan2014very} on SVHN \cite{netzer2011reading}, CIFAR10, and CIFAR 100 \cite{krizhevsky2009learning} datasets, in which $90\%$ pruning rate is achieved with a marginal drop in testing accuracy. For \emph{tropical adversarial attacks}, we show that one can construct input adversaries that can change network predictions by perturbing the decision boundaries polytope.

\section{Preliminaries to Tropical Geometry}
\label{prelim_tg}

\begin{figure*}
\centering
  \includegraphics[width=0.99\textwidth]{./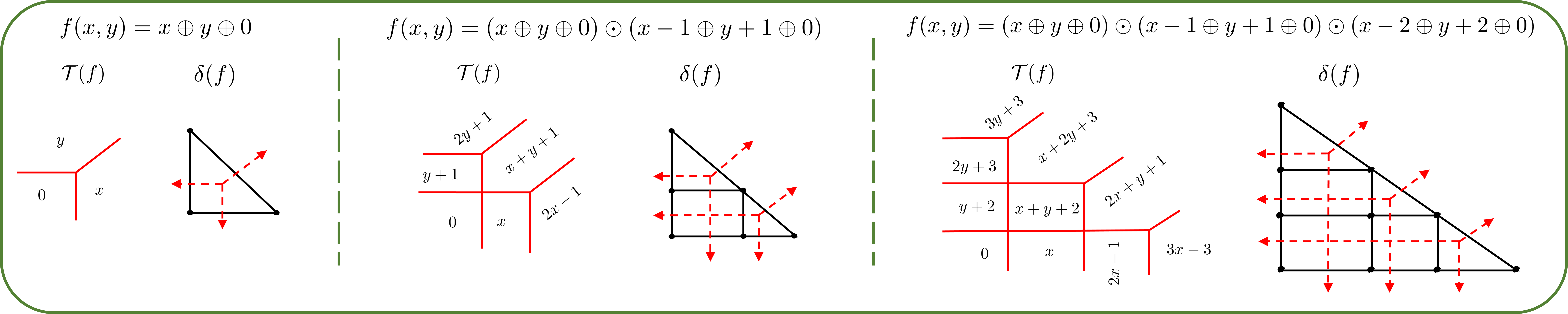}
  \caption{\footnotesize\textbf{Tropical Hypersurfaces and their Corresponding Dual Subdivisions.} We show three tropical polynomials, where the solid red and black lines are the tropical hypersurfaces $\mathcal{T}(f)$ and dual subdivisions $\delta(f)$ to the corresponding tropical polynomials, respectively. $\mathcal{T}(f)$ divides The domain of $f$ into convex regions where $f$ is linear. Moreover, each region is in one-to-one correspondence with each node of $\delta(f)$. Lastly, the tropical hypersurfaces are parallel to the normals of the edges of $\delta(f)$ shown by dashed red lines.} 
  \label{fig:examples_tg}
\end{figure*}

For completeness, we first provide preliminaries to tropical geometry and refer the interested readers to \cite{itenberg2009tropical, maclagan2015tg} for more details.

\begin{definition}(Tropical Semiring\footnote{A semiring is a ring that lacks an additive inverse.})
The tropical semiring $\mathbb{T}$  is the triplet $\{\mathbb{R} \cup \{-\infty\},\oplus,\odot\}$, where $\oplus$ and $\odot$ define \textit{tropical addition} and \textit{tropical multiplication}, respectively. They are denoted as: 
$$ x \oplus y = \max \{x,y\}, \qquad x \odot y = x + y,  \qquad \forall x,y \in \mathbb{T}. $$
It can be readily shown that $-\infty$ is the additive identity and $0$ is the multiplicative identity.
\end{definition}
Given the previous definition, a tropical power can be formulated as $x^{\odot a} = x \odot x \dots \odot x = a.x$, for $x \in \mathbb{T}$, $a \in \mathbb{N}$, where $a.x$ is standard multiplication. Moreover, a tropical quotient can be defined as: $x \varobslash y = x - y$, where $x - y$ is  standard subtraction. For ease of notation, we write $x^{\odot a}$ as $x^a$.

\begin{definition}(Tropical Polynomials) For $\mathbf{x} \in \mathbb{T}^{d}$, $c_i \in \mathbb{R}$ and $\mathbf{a}_i \in \mathbb{N}^d$, a $d$-variable tropical polynomial with $n$ monomials $f:\,\, \mathbb{T}^d \rightarrow \mathbb{T}^d$ can be expressed as:
\begin{align*}
f(\mathbf{x}) &= (c_1 \odot \mathbf{x}^{\mathbf{a}_1}) \oplus(c_2 \odot \mathbf{x}^{\mathbf{a}_2}) \oplus \dots \oplus (c_n \odot \mathbf{x}^{\mathbf{a}_n}), \\
& ~~ \forall ~~ \mathbf{a}_i \neq \mathbf{a}_j ~~\text{when} ~~i\neq j.
\end{align*}
We use the more compact vector notation $\mathbf{x}^\mathbf{\mathbf{a}} =  x_1^{a_1} \odot x_2^{a_2} \dots \odot x_d^{a_d}$. Moreover and for ease of notation, we will denote $c_i \odot \mathbf{x}^{\mathbf{a}_i}$ as $c_i\mathbf{x}^{\mathbf{a}_i}$ throughout the paper.
\end{definition}

\begin{definition}(Tropical Rational Functions) A tropical rational is a standard difference or a tropical quotient of two tropical polynomials: $f(\mathbf{x}) - g(\mathbf{x}) = f(\mathbf{x}) \varobslash g(\mathbf{x})$.
\end{definition}

\noindent Algebraic curves or hypersurfaces in algebraic geometry, which are the solution sets to polynomials, can be analogously extended to tropical polynomials too.

\begin{definition}(Tropical Hypersurfaces)\label{def:tropical_hypersurface} A tropical hypersurface of a tropical polynomial $f(\mathbf{x}) = c_1\mathbf{x}^{\mathbf{a}_1} \oplus \dots \oplus c_n \mathbf{x}^{\mathbf{a}_n}$ is the set of points $\mathbf{x}$ where $f$ is attained by two or more monomials in $f$, \ie
    \begin{align*}
        \mathcal{T}(f) := \{\mathbf{x} \in \mathbb{R}^d :  c_i \mathbf{x}^{\mathbf{a}_i} =  c_j \mathbf{x}^{\mathbf{a}_j}  = f(\mathbf{x}), 
           ~~ \text{for some} ~~ \mathbf{a}_i \neq \mathbf{a}_j\}.
    \end{align*}
\end{definition}

Tropical hypersurfaces divide the domain of $f$ into convex regions, where $f$ is linear in each region. Also, every tropical polynomial can be associated with a Newton polytope. 

\begin{definition}\label{def:newton_polytope}(Newton Polytopes) The Newton polytope of a tropical polynomial $f(\mathbf{x}) = c_1\mathbf{x}^{\mathbf{a}_1} \oplus \dots \oplus c_n \mathbf{x}^{\mathbf{a}_n}$ is the convex hull of the exponents $\mathbf{a}_i \in \mathbb{N}^d$ regarded as points in $\mathbb{R}^d$, \ie
\begin{align*}
    \Delta(f) := \text{ConvHull}\{ \mathbf{a}_i \in \mathbb{R}^d: i = 1,\dots,n \text{ and } c_i \neq -\infty\}.
\end{align*}
\end{definition}

A tropical polynomial determines a dual subdivision, which can be constructed by projecting the collection of upper faces (UF) in $\mathcal{P}(f) := \text{ConvHull} \{(\mathbf{a}_i,c_i) \in \mathbb{R}^d \times \mathbb{R}: i=1,\dots,n\}$  onto $\mathbb{R}^d$. That is to say, the dual subdivision determined by $f$ is given as $\delta(f) := \{\pi(p) \subset \mathbb{R}^d : p \in \text{UF}(\mathcal{P}(f))\}$, where $\pi: \mathbb{R}^d \times \mathbb{R} \rightarrow \mathbb{R}^d$ is the projection that drops the last coordinate. It has been shown by \cite{maclagan2015tg} that the tropical hypersurface $\mathcal{T}(f)$ is the ($d$-1)-skeleton of the polyhedral complex dual to $\delta(f)$. This implies that each node of the dual subdivision $\delta(f)$ corresponds to one region in $\mathbb{R}^d$ where $f$ is linear. This is exemplified in Figure \ref{fig:examples_tg} with three tropical polynomials, and to see this clearly, we will elaborate on the first tropical polynomial example $f(x,y) = x \oplus y \oplus 0$. Note that as per Definition \ref{def:tropical_hypersurface}, the tropical hypersurface is the set of points $(x,y)$ where $x=y, y=0$, and $x=0$. This indeed gives rise to the three solid red lines indicating the tropical hypersurfaces. As for the dual subdivision $\delta(f)$, we observe that $x \oplus y\oplus 0$ can be written as $(x^1 \odot y^0) \oplus (x^0 \odot y^1) \oplus (x^0 \odot y^0)$. Thus, and since the monomials are bias free ($c_i = 0$), then $\mathcal{P}(f) = \text{ConvHull}\{(1,0,0),(0,1,0),(0,0,0)\}$. It is then easy to see that $\delta(f) = \text{ConvHull}\{(1,0),(0,1),(0,0)\}$, since $\text{UP}(\mathcal{P}(f)) = \mathcal{P}(f)$, which is the black triangle in solid lines in Figure \ref{fig:examples_tg}. One key observation in all three examples in Figure \ref{fig:examples_tg} is that the number of regions where $f$ is linear (that is 3, 6 and 10, respectively) is equal to the number of nodes in the corresponding dual subdivisions. Second, the tropical hypersurfaces are parallel to the normals to the edges of the dual subdivision polytope. This observation will be essential for the remaining part of the paper. Several other observations are summarized by \cite{brugalle2014bit}. Moreover, \cite{zhang2018tropical} showed an equivalency between tropical rational maps and a family of neural network $f: \mathbb{R}^n \rightarrow \mathbb{R}^k$ with piecewise linear activations through the following theorem.
\begin{theorem}\label{nn_as_tropical_rational}(Tropical Characterization of Neural Networks, \cite{zhang2018tropical}). A feedforward neural network with integer weights and real biases with piecewise linear activation functions is a function $f: \mathbb{R}^n \rightarrow \mathbb{R}^k$, whose coordinates are tropical rational functions of the input, i.e., $f(\mathbf{x}) = H(\mathbf{x}) \varobslash Q(\mathbf{x}) = H(\mathbf{x}) - Q(\mathbf{x})$, where $H$ and $Q$ are tropical polynomials.
\end{theorem}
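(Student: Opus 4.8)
The plan is to argue by induction on the number of layers $L$, the inductive hypothesis being that every coordinate of the output of the first $\ell$ layers is a tropical rational function of $\mathbf{x}\in\mathbb{R}^n$. The whole argument rests on one elementary closure fact, which I would isolate as a lemma: the class $\mathcal{R}$ of tropical rational functions $\mathbb{R}^n\to\mathbb{R}$ is closed under tropical addition $\oplus$ (pointwise $\max$), tropical multiplication $\odot$ (pointwise sum), tropical quotient $\varobslash$ (pointwise difference), multiplication by a nonnegative integer $a\cdot(\cdot)=(\cdot)^{\odot a}$, addition of a real constant, and substitution of $\mathcal{R}$-functions into a tropical polynomial. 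Writing $f=H_f\varobslash Q_f$ and $g=H_g\varobslash Q_g$ with the $H$'s and $Q$'s tropical polynomials, these reduce to identities such as $f\oplus g=\bigl(H_f\odot Q_g\oplus H_g\odot Q_f\bigr)\varobslash\bigl(Q_f\odot Q_g\bigr)$, $f\odot g=\bigl(H_f\odot H_g\bigr)\varobslash\bigl(Q_f\odot Q_g\bigr)$, and $f\varobslash g=\bigl(H_f\odot Q_g\bigr)\varobslash\bigl(Q_f\odot H_g\bigr)$, so everything comes down to the fact that tropical polynomials are themselves closed under $\oplus$ and $\odot$ --- which holds because $+$ distributes over $\max$, and because $\mathbb{N}^d$ (the exponents) and $\mathbb{R}$ (the coefficients) are closed under addition, with duplicate exponents merged by keeping the larger coefficient.

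With the lemma in hand, the base case is immediate: the $i$-th input coordinate $\mathbf{x}\mapsto x_i$ is the tropical monomial $0\odot\mathbf{x}^{\mathbf{e}_i}$, hence lies in $\mathcal{R}$. For the inductive step, one layer consists of an affine preactivation followed by a piecewise linear activation. If $g_1,\dots,g_m\in\mathcal{R}$ are the previous layer's outputs, the $i$-th preactivation is $\rho_i(\mathbf{x})=\sum_j A_{ij}\,g_j(\mathbf{x})+b_i$ with $A_{ij}\in\mathbb{Z}$, $b_i\in\mathbb{R}$; splitting each weight as $A_{ij}=A_{ij}^{+}-A_{ij}^{-}$ into nonnegative parts, $\sum_j A_{ij}^{+}g_j$ and $\sum_j A_{ij}^{-}g_j$ are each tropical products of tropical powers of $\mathcal{R}$-functions, hence in $\mathcal{R}$, and $\rho_i$ is their tropical quotient shifted by the real constant $b_i$, so $\rho_i\in\mathcal{R}$. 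Then the activation: ReLU gives $\max(\rho_i,0)=(H\oplus Q)\varobslash Q$ when $\rho_i=H\varobslash Q$, and max-pooling is literally a $\oplus$ of coordinates, so both stay in $\mathcal{R}$; a general piecewise linear activation with integer slopes is itself a one-variable tropical rational function, and composing it with $\rho_i$ keeps us in $\mathcal{R}$ by the substitution part of the lemma. This closes the induction, so each output coordinate is tropical rational; collecting the coordinate-wise numerators and denominators into the tropical polynomial maps $H$ and $Q$ gives $f(\mathbf{x})=H(\mathbf{x})\varobslash Q(\mathbf{x})$.

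I expect the only genuine subtlety --- and the reason the statement produces tropical \emph{rational} rather than merely tropical \emph{polynomial} maps --- to be the treatment of negative integer weights in the affine step: a negative weight on a term unavoidably sends that term into the denominator, so even a one-hidden-layer network is generically a true quotient. The remainder is bookkeeping, with two points deserving a little care: (a) that the exponent vectors stay in $\mathbb{N}^d$ throughout, which is guaranteed precisely by the positive/negative split together with the integrality of the weights; and (b) that the activation is tropically expressible, which is exactly why the integrality of the activation's slopes --- automatic for ReLU and max-pooling --- is part of the hypothesis.
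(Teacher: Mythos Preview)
The paper does not actually prove this theorem; it is stated as a cited result from Zhang et al.\ (2018), and the only justification offered is the one-line remark immediately following it that any piecewise linear function can be written as a difference of two max-of-affine functions (citing Melzer). Your inductive argument is correct and is essentially the proof given in the original Zhang et al.\ paper: closure of the class of tropical rationals under $\oplus$, $\odot$, $\varobslash$, nonnegative-integer powers, and real shifts, combined with layer-by-layer induction using the split $A_{ij}=A_{ij}^{+}-A_{ij}^{-}$ of integer weights to push negative contributions into the denominator.

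The alternative route the paper alludes to --- invoking the structure theorem that every continuous piecewise linear function on $\mathbb{R}^n$ is a difference of two convex piecewise linear functions, each a finite max of affine forms and hence a tropical polynomial --- is more global and less constructive: it yields the existence of $H$ and $Q$ in one stroke but does not exhibit them layer by layer in terms of the network parameters, whereas your induction (and Zhang et al.'s) does, and it is precisely that explicit form which the paper later exploits in the proof of Theorem~\ref{theo:dec_bound_prop}. Your closing observation about the integrality of the activation's slopes is well taken and is a genuine hypothesis needed to keep exponents in $\mathbb{N}^d$; the theorem as restated in this paper is slightly underspecified on that point, though it is automatic for ReLU.
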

While this is new in the context of tropical geometry, it is not surprising, since any piecewise linear function can be written as a difference of two max functions over a set of hyperplanes \cite{melzer1986expressibility}.

Before any further discussion, we first recap the definition of zonotopes.

\begin{definition}
\label{def:zonotope}
Let $\mathbf{u}^1, \dots, \mathbf{u}^L \in \mathbb{R}^n$. The zonotope formed by $\mathbf{u}^1, \dots, \mathbf{u}^L$ is defined as $\mathcal{Z}(\mathbf{u}^1,\dots,\mathbf{u}^L):=\{\sum_{i=1}^L x_i \mathbf{u}^i: 0 \leq x_i \leq 1\}$. Equivalently, $\mathcal{Z}$ can be expressed with respect to the generator matrix $\mathbf{U} \in \mathbb{R}^{L \times n}$, where $\mathbf{U}(i,:) = {\mathbf{u}^i}^\top$ as $\mathcal{Z}_{\mathbf{U}} :=  \{\mathbf{U}^\top \mathbf{x} : \forall \mathbf{x} \in [0,1]^{L}\}$.
\end{definition}

Another common definition for a zonotope is the Minkowski sum of the set of line segments $\{\mathbf{u}^1, \dots, \mathbf{u}^L\}$ (refer to  \textbf{appendix}), where a line segment of the vector $\mathbf{u}^i$ in $\mathbb{R}^n$ is defined as $\{\alpha \mathbf{u}^i : \forall \alpha \in [0,1]\}$. It is well-known that the number of vertices of a zonotope is polynomial in the number of line segments, \ie $|\text{vert}\left(\mathcal{Z}_{\mathbf{U}}\right)| \leq 2\sum_{i=0}^{n-1}\binom{L-1}{i} = \mathcal{O}\left(L^{n-1}\right)$  \cite{gritzmann1993minkowski}.
\section{Decision Boundaries of Neural Networks as Polytopes}

In this section, we analyze the decision boundaries of a network in the form (Affine, ReLU, Affine) using tropical geometry. For ease, we use ReLUs as the non-linear activation, but any other piecewise linear function can also be used. The functional form of this network is: $f(\mathbf{x}) = \mathbf{B}\text{max}\left(\mathbf{A}\mathbf{x} + \mathbf{c}_1,\mathbf{0}\right) + \mathbf{c}_2$, where $\max(.)$ is an element-wise operator. The outputs of the network $f$ are the logit scores. Throughout this section, we assume\footnote{Without loss of generality, as one can very well approximate real weights as fractions and multiply by the least common multiple of the denominators as discussed in \cite{zhang2018tropical}.} that $\mathbf{A} \in \mathbb{Z}^{p \times n}$, $\mathbf{B} \in \mathbb{Z}^{2 \times p}$, $\mathbf{c}_1 \in \mathbb{R}^{p}$ and $\mathbf{c}_2 \in \mathbb{R}^{2}$. For ease of notation, we only consider networks with two outputs, \ie $\mathbf{B}^{2 \times p}$, where the extension to a multi-class output follows naturally and is discussed in the \textbf{appendix}. Now, since $f$ is a piecewise linear function, each output can be expressed as a tropical rational as per Theorem \ref{nn_as_tropical_rational}. If $f_1$ and $f_2$ refer to the first and second outputs respectively, we have $f_1(\mathbf{x}) = H_1(\mathbf{x}) \varobslash Q_1(\mathbf{x})$  and $f_2(\mathbf{x}) = H_2(\mathbf{x}) \varobslash Q_2(\mathbf{x})$, where $H_1,H_2,Q_1$ and $Q_2$ are tropical polynomials.  In what follows and for ease of presentation, we present our main results where the network $f$ has no biases, \ie $\mathbf{c}_1 = \mathbf{0}$ and $\mathbf{c}_2=\mathbf{0}$, and we leave the generalization to the \textbf{appendix}.

\begin{figure*}[t]
\centering
  \includegraphics[width=0.95\textwidth]{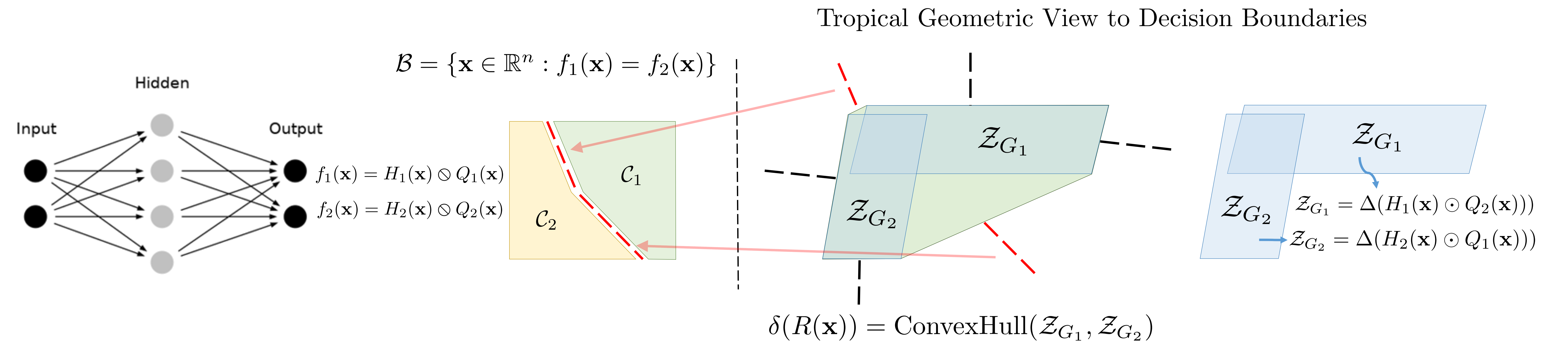}
  \caption{\footnotesize\textbf{Decision Boundaries as Geometric Structures.} The decision boundaries $\mathcal{B}$ (in red) comprise two linear pieces separating classes $\mathcal{C}_1$ and $\mathcal{C}_2$. As per Theorem \ref{theo:dec_bound_prop}, the dual subdivision of this single hidden neural network is the convex hull between the zonotopes $\mathcal{Z}_{\mathbf{G}_1}$ and $\mathcal{Z}_{\mathbf{G}_2}$. The normals to the dual subdivison $\delta(R(\mathbf{x}))$ are in one-to-one correspondence to the tropical hypersurface $\mathcal{T}(R(\mathbf{x}))$, which is a superset to the decision boundaries $\mathcal{B}$. Note that some of the normals to $\delta(R(\mathbf{x}))$ (in red) are parallel to the decision boundaries.}
  \label{decision_boundary_prop1}
\end{figure*}

\begin{theorem}
\label{theo:dec_bound_prop}
For a bias-free neural network in the form $f(\mathbf{x}):\mathbb{R}^n \rightarrow \mathbb{R}^2$, where $\mathbf{A} \in \mathbb{Z}^{p \times n}$ and $\mathbf{B} \in \mathbb{Z}^{2 \times p}$, let $R(\mathbf{x}) = H_1(\mathbf{x}) \odot Q_2(\mathbf{x}) \oplus H_2(\mathbf{x}) \odot Q_1(\mathbf{x})$ be a tropical polynomial. Then:

$\bullet$~ Let $\mathcal{B} = \{\mathbf{x} \in \mathbb{R}^n: f_1(\mathbf{x}) = f_2(\mathbf{x})\}$ define the decision boundaries of $f$, then $\mathcal{B} \subseteq \mathcal{T}\left(R(\mathbf{x})\right)$. 

$\bullet$~$\delta \left(R(\mathbf{x})\right) = \text{ConvHull}\left(\mathcal{Z}_{\mathbf{G}_1},\mathcal{Z}_{\mathbf{G}_2}\right)$. $\mathcal{Z}_{\mathbf{G}_1}$ is a zonotope in $\mathbb{R}^n$ with line segments  ${\{(\mathbf{B}^+(1,j)  + \mathbf{B}^-(2,j))[\mathbf{A}^+(j,:), \mathbf{A}^-(j,:)]}\}^p_{j=1}$ and shift $(\mathbf{B}^-(1,:)  + \mathbf{B}^+(2,:))\mathbf{A}^-$, where $\mathbf{A}^+ = \max(\mathbf{A},0)$ and $\mathbf{A}^- =  \max(-\mathbf{A},0)$. $\mathcal{Z}_{\mathbf{G}_2}$ is a zonotope in $\mathbb{R}^n$ with line segments ${\{(\mathbf{B}^-(1,j) +   \mathbf{B}^+(2,j))[\mathbf{A}^+(j,:), \mathbf{A}^-(j,:)]} \}^p_{j=1}$ and shift $(\mathbf{B}^+(1,:)  + \mathbf{B}^-(2,:))\mathbf{A}^-$.
\end{theorem}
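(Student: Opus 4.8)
The overall strategy is to first compute explicit tropical-polynomial formulas for $H_1, Q_1, H_2, Q_2$ by pushing the input through the two affine maps and the ReLU, then assemble $R(\mathbf{x}) = H_1 \odot Q_2 \oplus H_2 \odot Q_1$, and finally identify its Newton polytope / dual subdivision with the stated convex hull of zonotopes. Concretely, writing $\mathbf{A}(j,:)\mathbf{x}$ for the $j$-th pre-activation and using $t = \max(t,0) - \max(-t,0)$, the $j$-th hidden unit contributes $\max(\mathbf{A}(j,:)\mathbf{x},0)$, whose tropicalization is $\max(\mathbf{A}^+(j,:)\mathbf{x}, \mathbf{A}^-(j,:)\mathbf{x})$ after a common shift by $\mathbf{A}^-(j,:)\mathbf{x}$ (this is exactly the monomial/line-segment bookkeeping that produces a zonotope generator $[\mathbf{A}^+(j,:),\mathbf{A}^-(j,:)]$). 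Multiplying by the second affine layer $\mathbf{B}$ and separating positive and negative entries $\mathbf{B}^\pm$ turns $f_i = H_i \varobslash Q_i$ into tropical polynomials where $H_i$ collects the $\mathbf{B}^+(i,j)$ (resp. $\mathbf{B}^-$) contributions of the $\max$ branch and $Q_i$ the complementary ones; this is the standard "a piecewise-linear map is a difference of two tropical polynomials" computation underlying Theorem \ref{nn_as_tropical_rational}, and I would carry it out carefully to pin down which coefficients land in $H_i$ vs. $Q_i$.

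For the first bullet, the inclusion $\mathcal{B} \subseteq \mathcal{T}(R(\mathbf{x}))$ is essentially formal: on $\mathcal{B}$ we have $f_1 = f_2$, i.e. $H_1 - Q_1 = H_2 - Q_2$, hence $H_1 + Q_2 = H_2 + Q_1$, i.e. $H_1 \odot Q_2 = H_2 \odot Q_1$. So the value $R(\mathbf{x}) = (H_1\odot Q_2) \oplus (H_2 \odot Q_1)$ is attained by (at least) these two quantities simultaneously. The only subtlety is that Definition \ref{def:tropical_hypersurface} requires the max to be attained by two *distinct monomials*, so I need to argue that $H_1\odot Q_2$ and $H_2\odot Q_1$, when fully expanded into monomials, contribute genuinely different exponent vectors at a generic point of $\mathcal{B}$ — or more cleanly, observe that wherever the value is attained by a single repeated monomial the point lies on a lower-dimensional set and can be absorbed, so $\mathcal{B}$ still sits inside $\mathcal{T}(R)$; this containment (rather than equality) is exactly why the theorem only claims "superset."

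For the second bullet I would use the two standard facts about Newton polytopes of tropical products and sums: $\Delta(f \odot g) = \Delta(f) + \Delta(g)$ (Minkowski sum) and $\Delta(f \oplus g) = \mathrm{ConvHull}(\Delta(f) \cup \Delta(g))$, together with their upper-hull / dual-subdivision refinements so that the statement holds at the level of $\delta(\cdot)$ and not merely $\Delta(\cdot)$. Then $\delta(R) = \mathrm{ConvHull}\big(\delta(H_1 \odot Q_2), \delta(H_2 \odot Q_1)\big)$, and each $\delta(H_i \odot Q_j)$ is a Minkowski sum of the per-hidden-unit segments, i.e. a translated zonotope; grouping the coefficients shows $\delta(H_1\odot Q_2)$ is $\mathcal{Z}_{\mathbf{G}_1}$ with generators $(\mathbf{B}^+(1,j)+\mathbf{B}^-(2,j))[\mathbf{A}^+(j,:),\mathbf{A}^-(j,:)]$ and the claimed shift, and symmetrically for $\mathcal{Z}_{\mathbf{G}_2}$. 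The main obstacle, and where I expect to spend the most care, is bookkeeping: correctly tracking how the signs of the entries of $\mathbf{B}$ route each hidden unit's $\max$-branch into $H_i$ versus $Q_i$, getting the coefficients $\mathbf{B}^+(1,j)+\mathbf{B}^-(2,j)$ (and not, say, $\mathbf{B}^+(1,j)+\mathbf{B}^+(2,j)$) exactly right, and verifying that the "shift" terms — which come from the common $\mathbf{A}^-(j,:)\mathbf{x}$ factors pulled out of each ReLU and from the constant monomials — aggregate into precisely $(\mathbf{B}^-(1,:)+\mathbf{B}^+(2,:))\mathbf{A}^-$ and $(\mathbf{B}^+(1,:)+\mathbf{B}^-(2,:))\mathbf{A}^-$. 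I would double-check the whole sign ledger on a small explicit example ($p=1$, $n=1$) before trusting the general formula.
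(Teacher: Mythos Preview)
Your plan is correct and follows essentially the same route as the paper: the first bullet via the algebraic identity $f_1=f_2 \Leftrightarrow H_1\odot Q_2 = H_2\odot Q_1$, and the second bullet via the Newton-polytope rules $\Delta(f\odot g)=\Delta(f)\tilde{+}\Delta(g)$ and $\Delta(f\oplus g)=\mathrm{ConvHull}(\Delta(f)\cup\Delta(g))$ applied to the explicit decomposition $H_1+Q_2 = (\mathbf{B}^+(1,:)+\mathbf{B}^-(2,:))\max(\mathbf{A}^+\mathbf{x},\mathbf{A}^-\mathbf{x}) + (\mathbf{B}^-(1,:)+\mathbf{B}^+(2,:))\mathbf{A}^-\mathbf{x}$, which yields a Minkowski sum of the scaled segments $[\mathbf{A}^+(j,:),\mathbf{A}^-(j,:)]$ plus a point (the shift). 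One small remark: you are actually more careful than the paper on the ``distinct monomials'' issue in part one---the paper simply asserts $\mathcal{T}(R)=\mathcal{T}(H_1\odot Q_2)\cup\mathcal{T}(H_2\odot Q_1)\cup\mathcal{B}$ without addressing the possibility that the maximizing monomials of $H_1\odot Q_2$ and $H_2\odot Q_1$ coincide as exponent vectors, so your plan to flag and dispose of that degeneracy is an improvement rather than a deviation.
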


\textbf{Digesting Theorem \ref{theo:dec_bound_prop}.} This theorem aims at characterizing the decision boundaries (where $f_1(\mathbf{x})=f_2(\mathbf{x})$) of a bias-free neural network of the form (Affine, ReLU, Affine) through the lens of tropical geometry. In particular, the first result of Theorem \ref{theo:dec_bound_prop} states that the tropical hypersurface $\mathcal{T}(R(\mathbf{x}))$ of the tropical polynomial $R(\mathbf{x})$ is a superset to the set of points forming the decision boundaries, \ie $\mathcal{B}$. Just as discussed earlier and exemplified in Figure \ref{fig:examples_tg}, tropical hypersurfaces are associated with a corresponding dual subdivision polytope $\delta(R(\mathbf{x}))$. Based on this, the second result of Theorem \ref{theo:dec_bound_prop} states that this dual subdivision is precisely the convex hull of two zonotopes denoted as $\mathcal{Z}_{G_1}$ and $\mathcal{Z}_{G_2}$, where each zonotope is only a function of the network parameters $\mathbf{A}$ and $\mathbf{B}$.

Theorem \ref{theo:dec_bound_prop} bridges the gap between the behaviour of the decision boundaries $\mathcal{B}$, through the superset $\mathcal{T}\left(R(\mathbf{x})\right)$, and the polytope $\delta\left(R(\mathbf{x})\right)$, which is the convex hull of two zonotopes. It is worthwhile to mention that \cite{zhang2018tropical} discussed a special case of the first part of Theorem \ref{theo:dec_bound_prop} for a neural network with a single output and a score function $s(\mathbf{x})$ to classify the output. To the best of our knowledge, this work is the first to propose a tropical geometric formulation of a superset containing the decision boundaries of a multi-class classification neural network. In particular, the first result of Theorem \ref{theo:dec_bound_prop} states that one can perhaps study the decision boundaries, $\mathcal{B}$, directly by studying their superset $\mathcal{T}(R(\mathbf{x}))$. While studying $\mathcal{T}(R(\mathbf{x}))$ can be equally difficult, the second result of Theorem \ref{theo:dec_bound_prop} comes in handy. First, note that, since the network is bias-free, $\pi$ becomes an identity mapping with $\delta(R(\mathbf{x})) = \Delta(R(\mathbf{x}))$, and thus the dual subdivision $\delta(R(\mathbf{x}))$, which is the Newton polytope $\Delta(R(\mathbf{x}))$ in this case, becomes a well-structured geometric object that can be exploited to preserve decision boundaries as per the second part of Theorem \ref{theo:dec_bound_prop}. Now, based on the results of \cite{maclagan2015tg} (Proposition 3.1.6) and as discussed in Figure \ref{fig:examples_tg}, the normals to the edges of the polytope $\delta(R(\mathbf{x}))$ (convex hull of two zonotopes) are in one-to-one correspondence with the tropical hypersurface $\mathcal{T}(R(\mathbf{x}))$. Therefore, one can study the decision boundaries, or at least their superset $\mathcal{T}(R(\mathbf{x}))$, by studying the orientation of the dual subdivision $\delta(R(\mathbf{x}))$.

While Theorem \ref{theo:dec_bound_prop} presents a strong relation between a polytope (convex hull of two zonotopes) and the decision boundaries, it remains unclear how  such a polytope can be efficiently constructed. Although the number of vertices of a zonotope is polynomial in the number of its generating line segments, fast algorithms for enumerating these vertices are still restricted to zonotopes with line segments starting at the origin \cite{stinson2016randomized}. Since the line segments generating the zonotopes in Theorem \ref{theo:dec_bound_prop} have arbitrary end points, we present the next result that transforms these line segments into a generator matrix of line segments starting from the origin as in Definition \ref{def:zonotope}. This result is essential for an efficient computation of the zonotopes in Theorem \ref{theo:dec_bound_prop}.

\begin{proposition}
\label{prop:arbit_line_segments_minkowski}
The zonotope formed by $p$ line segments in $\mathbb{R}^n$ with arbitrary end points $\{[\mathbf{u}_1^i,\mathbf{u}_2^i]\}_{i=1}^p$ is equivalent to the zonotope formed by the line segments $\{[\mathbf{u}_1^i-\mathbf{u}_2^i,\mathbf{0}]\}_{i=1}^p$  with a shift of  $\sum_{i=1}^p\mathbf{u}_2^i$.
\end{proposition}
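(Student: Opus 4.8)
The plan is to argue directly from the Minkowski-sum description of a zonotope recalled just before Definition~\ref{def:zonotope}: the zonotope generated by a collection of line segments is the Minkowski sum of those segments, and the alternative generator-matrix form in Definition~\ref{def:zonotope} is the special case in which every segment is anchored at the origin. With this in mind the statement reduces to a one-line change of parameterization together with the observation that translations can be collected.

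First I would write each arbitrary-endpoint segment in affine form. The segment $[\mathbf{u}_1^i,\mathbf{u}_2^i]$ is exactly $\{\mathbf{u}_2^i + \alpha_i(\mathbf{u}_1^i - \mathbf{u}_2^i) : \alpha_i \in [0,1]\}$, i.e.\ it is the translate by the fixed vector $\mathbf{u}_2^i$ of the origin-anchored segment $[\mathbf{u}_1^i - \mathbf{u}_2^i,\mathbf{0}]$. Taking the Minkowski sum over $i = 1,\dots,p$, a generic point of the arbitrary-endpoint zonotope has the form $\sum_{i=1}^p \mathbf{u}_2^i + \sum_{i=1}^p \alpha_i(\mathbf{u}_1^i - \mathbf{u}_2^i)$, where the $\alpha_i \in [0,1]$ vary independently. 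The first sum is the constant vector $\sum_{i=1}^p \mathbf{u}_2^i$, and as the $\alpha_i$ range over $[0,1]^p$ the second sum traces out precisely $\mathcal{Z}(\mathbf{u}_1^1 - \mathbf{u}_2^1,\dots,\mathbf{u}_1^p - \mathbf{u}_2^p)$ in the sense of Definition~\ref{def:zonotope}. Hence the arbitrary-endpoint zonotope equals this origin-anchored zonotope shifted by $\sum_{i=1}^p \mathbf{u}_2^i$, which is the claim; both inclusions follow from the single bijective substitution on the parameters $\alpha_i$.

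A slightly more structural phrasing uses two elementary facts: (a) a translate of a line segment is again a line segment, namely $[\mathbf{a},\mathbf{b}] = \mathbf{b} + [\mathbf{a}-\mathbf{b},\mathbf{0}]$ with $\mathbf{b} + S := \{\mathbf{b} + \mathbf{s}: \mathbf{s}\in S\}$; and (b) the Minkowski sum is translation-equivariant in each summand, so that the constant parts $\mathbf{u}_2^i$ of the $p$ segments combine into the single overall shift $\sum_{i=1}^p \mathbf{u}_2^i$ while the remaining origin-anchored parts form the Minkowski sum defining $\mathcal{Z}(\mathbf{u}_1^1 - \mathbf{u}_2^1,\dots,\mathbf{u}_1^p - \mathbf{u}_2^p)$. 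Applying (a) with $\mathbf{a} = \mathbf{u}_1^i$, $\mathbf{b} = \mathbf{u}_2^i$ and then (b) gives the result immediately.

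I do not expect a genuine obstacle here. The only point that needs care is matching conventions: confirming that ``the zonotope formed by a set of line segments'' is meant as their Minkowski sum (as stated in the appendix), that the generator-matrix definition presupposes origin-anchored segments, and that ``shift'' denotes translation of the whole set by a fixed vector. Once these are pinned down the argument is purely a change of variables, and no integrality or dimension hypotheses are used; this is also exactly the reduction needed so that the zonotopes of Theorem~\ref{theo:dec_bound_prop} can be fed to vertex-enumeration algorithms that assume origin-anchored generators.
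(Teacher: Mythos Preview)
Your proof is correct and essentially identical to the paper's: both parameterize each segment as $w_i\mathbf{u}_1^i + (1-w_i)\mathbf{u}_2^i = \mathbf{u}_2^i + w_i(\mathbf{u}_1^i - \mathbf{u}_2^i)$, sum over $i$, and separate the constant shift $\sum_i \mathbf{u}_2^i$ from the variable part, which is the origin-anchored zonotope of Definition~\ref{def:zonotope}. The only cosmetic difference is that the paper packages the same computation in matrix notation $(\mathbf{U}_1-\mathbf{U}_2)\mathbf{w} + \mathbf{U}_2\mathbf{1}_p$ before invoking that the Minkowski sum with a point is a translation.
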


We can now represent with the following corollary the arbitrary end point line segments forming the zonotopes in Theorem \ref{theo:dec_bound_prop} with generator matrices, which allow us to leverage existing algorithms that enumerate zonotope vertices \cite{stinson2016randomized}.

\begin{Corollary}\label{cor:exten_prop1}
The generators of $\mathcal{Z}_{\mathbf{G}_1},\mathcal{Z}_{\mathbf{G}_2}$ in Theorem \ref{theo:dec_bound_prop} can be defined as ${\mathbf{G}}_1 = \text{Diag}[({\mathbf{B}}^+(1,:)) + ({\mathbf{B}}^-(2,:))] {\mathbf{A}}$ and ${\mathbf{G}}_2 = \text{Diag}[({\mathbf{B}}^+(2,:)) + ({\mathbf{B}}^-(1,:))] {\mathbf{A}}$, both with shift $\left(\mathbf{B}^-(1,:) + \mathbf{B}^+(2,:) + \mathbf{B}^+(1,:) + \mathbf{B}^-(2,:)\right)\mathbf{A}^-$, where $\text{Diag}(\mathbf{v})$ arranges $\mathbf{v}$ in a diagonal matrix.
\end{Corollary}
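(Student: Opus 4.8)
The plan is to obtain the corollary as an immediate application of Proposition \ref{prop:arbit_line_segments_minkowski} to each of the two zonotopes appearing in Theorem \ref{theo:dec_bound_prop}, followed by a short bookkeeping step that consolidates the two shift contributions.

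First I would treat $\mathcal{Z}_{\mathbf{G}_1}$. By Theorem \ref{theo:dec_bound_prop} it is the zonotope generated by the $p$ arbitrary-endpoint line segments $[\mathbf{u}_1^j,\mathbf{u}_2^j]$ with $\mathbf{u}_1^j = (\mathbf{B}^+(1,j)+\mathbf{B}^-(2,j))\,\mathbf{A}^+(j,:)$ and $\mathbf{u}_2^j = (\mathbf{B}^+(1,j)+\mathbf{B}^-(2,j))\,\mathbf{A}^-(j,:)$, together with the shift $(\mathbf{B}^-(1,:)+\mathbf{B}^+(2,:))\mathbf{A}^-$. Applying Proposition \ref{prop:arbit_line_segments_minkowski}, the zonotope is unchanged if each segment is replaced by $[\mathbf{u}_1^j-\mathbf{u}_2^j,\mathbf{0}]$ and one adds the extra shift $\sum_{j=1}^p \mathbf{u}_2^j$. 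The key entrywise identity is $\mathbf{A}^+(j,:) - \mathbf{A}^-(j,:) = \mathbf{A}(j,:)$, so the scalar $(\mathbf{B}^+(1,j)+\mathbf{B}^-(2,j))$ factors out to give $\mathbf{u}_1^j-\mathbf{u}_2^j = (\mathbf{B}^+(1,j)+\mathbf{B}^-(2,j))\,\mathbf{A}(j,:)$; stacking these as rows produces exactly $\mathbf{G}_1 = \text{Diag}[\mathbf{B}^+(1,:)+\mathbf{B}^-(2,:)]\mathbf{A}$, which is a generator matrix of segments emanating from the origin in the sense of Definition \ref{def:zonotope}. Repeating the identical argument for $\mathcal{Z}_{\mathbf{G}_2}$, whose segments carry the scalar $(\mathbf{B}^-(1,j)+\mathbf{B}^+(2,j))$, yields $\mathbf{G}_2 = \text{Diag}[\mathbf{B}^+(2,:)+\mathbf{B}^-(1,:)]\mathbf{A}$.

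It then remains only to track the shifts. For $\mathcal{Z}_{\mathbf{G}_1}$ the total shift is the original shift $(\mathbf{B}^-(1,:)+\mathbf{B}^+(2,:))\mathbf{A}^-$ plus the Proposition-induced term $\sum_j \mathbf{u}_2^j = (\mathbf{B}^+(1,:)+\mathbf{B}^-(2,:))\mathbf{A}^-$, which sums to $(\mathbf{B}^-(1,:)+\mathbf{B}^+(2,:)+\mathbf{B}^+(1,:)+\mathbf{B}^-(2,:))\mathbf{A}^-$. For $\mathcal{Z}_{\mathbf{G}_2}$ the original shift is $(\mathbf{B}^+(1,:)+\mathbf{B}^-(2,:))\mathbf{A}^-$ and the induced term is $(\mathbf{B}^-(1,:)+\mathbf{B}^+(2,:))\mathbf{A}^-$, whose sum is the same vector; hence both zonotopes share the stated common shift, which serves as a consistency check on the computation.

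I do not expect a genuine obstacle here: this is a bookkeeping corollary rather than a substantive result. The only place warranting care is the sign and index bookkeeping — verifying that the nonnegative integer scalars distribute correctly over $\mathbf{A}^+ - \mathbf{A}^- = \mathbf{A}$ row by row, and that combining the two shift contributions neither double-counts nor drops a term. Accordingly I would carry out the computation once in full for $\mathcal{Z}_{\mathbf{G}_1}$ and then simply note that $\mathcal{Z}_{\mathbf{G}_2}$ follows verbatim after interchanging the relevant rows of $\mathbf{B}$.
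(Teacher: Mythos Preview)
Your proposal is correct and follows precisely the paper's approach: the paper's own proof is the single sentence ``This follows directly by applying Proposition \ref{prop:arbit_line_segments_minkowski} to the second bullet point of Theorem \ref{theo:dec_bound_prop},'' and you have simply written out that application explicitly, including the identity $\mathbf{A}^+ - \mathbf{A}^- = \mathbf{A}$ and the shift bookkeeping that the paper leaves implicit.
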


Next, we show several applications for Theorem \ref{theo:dec_bound_prop} by leveraging the tropical geometric structure.

\begin{figure*}
\centering
\includegraphics[width=\textwidth]{./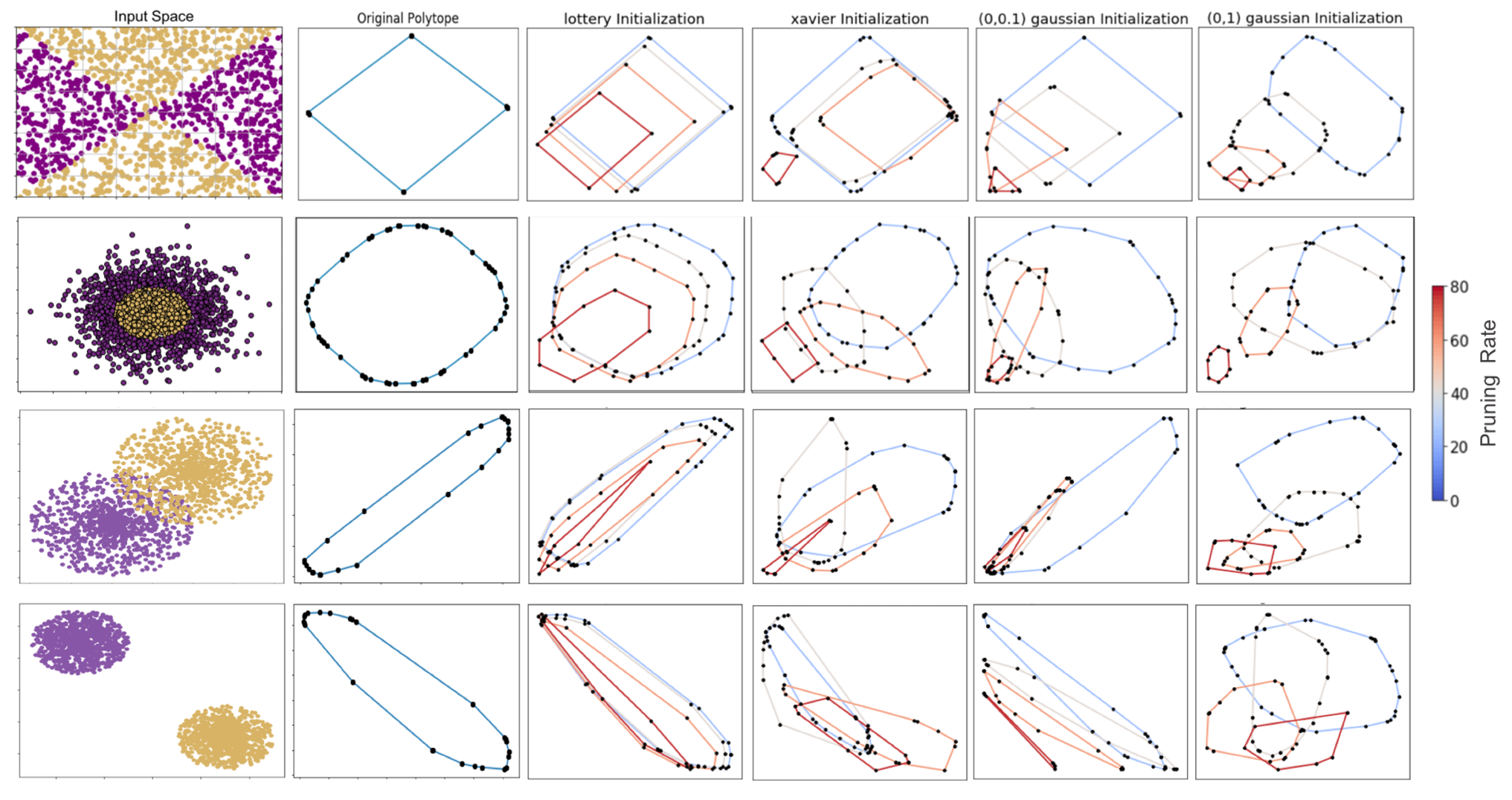}
\caption{\footnotesize\textbf{Effect of Different Initializations on the Decision Boundaries Polytope.} From left to right: training dataset, decision boundaries polytope of original network (before pruning), followed by the decision boundaries polytope for networks pruned at different pruning percentages using different initializations. Note that in the original polytope there are many more vertices than just 4, but they are very close to each other forming many small edges that are not visible in the figure. }
\label{fig:lotter_ticket}
\end{figure*}

\section{Tropical Perspective to the Lottery Ticket Hypothesis}
\label{sec:lottery_ticket}

\begin{figure*}
\centering
\includegraphics[width=0.95\textwidth]{./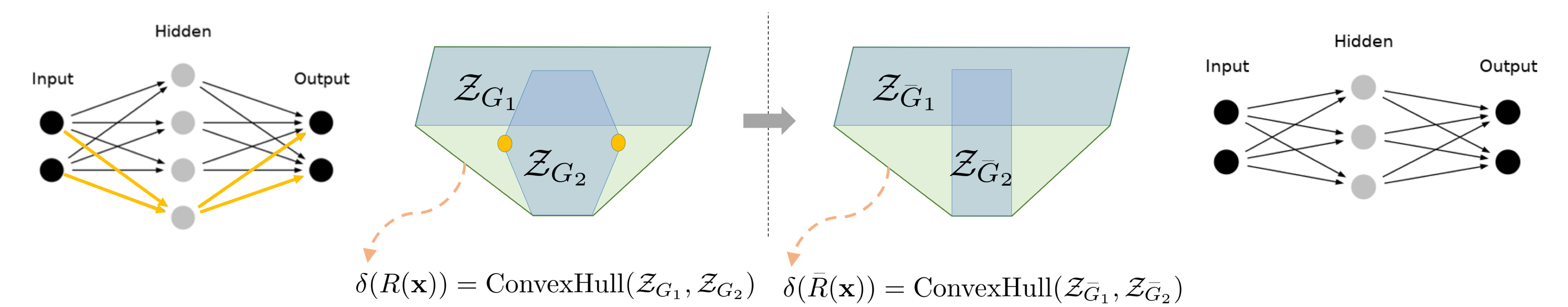}
  \caption{\footnotesize\textbf{Tropical Pruning Pipeline.} Pruning the 4$^{th}$ node, or equivalently removing the two yellow vertices of zonotope $\mathcal{Z}_{G_2}$ does not affect the decision boundaries polytope, which will lead to no change in accuracy.}
  \label{fig:compression_pipeline}
\end{figure*}

The lottery ticket hypothesis was recently proposed  by \cite{frankle2019lottery}, in which the authors surmise the existence of sparse trainable sub-networks of dense, randomly-initialized, feed-forward networks that when trained in isolation perform as well as the original network in a similar number of iterations.
To find such sub-networks, \cite{frankle2019lottery} propose the following simple algorithm: perform standard network pruning, initialize the pruned network with the same initialization that was used in the original training setting, and train with the same number of epochs.
The lottery ticket hypothesis was latter analyzed in several contexts~\cite{Wang2020Picking, morcos2019one, synflow}. \cite{frankle2019lottery} \emph{et.al.} hypothesize that this results in a smaller network with a similar accuracy. In other words, a sub-network can have decision boundaries similar to those of the original larger network. While we do not provide a theoretical reason why this pruning algorithm performs favorably, we utilize the geometric structure that arises from Theorem \ref{theo:dec_bound_prop} to reaffirm such behaviour. In particular, we show that the orientation of the dual subdivision $\delta(R(\mathbf{x}))$ (referred to as decision boundaries polytope), where the normals to its edges are parallel to $\mathcal{T}(R(\mathbf{x}))$ that is a superset to the decision boundaries, is preserved after pruning with the proposed initialization algorithm of \cite{frankle2019lottery}. Conversely, pruning with a different initialization at each iteration, with the same optimization routine, results in a significant variation in the orientation of the decision boundaries polytope and ultimately in reduced accuracy. That is to say, we empirically show that the orientation of the decision boundaries is preserved with the lottery ticket initialization, thereof, preserving the accuracy. This is different from observation of the lottery ticket that the accuracy is preserved; this is since several networks can enjoy a similar accuracy with different decision boundaries.

To this end, we train a neural network with 2 inputs ($n=2$), 2 outputs, and a single hidden layer with 40 nodes ($p=40$). We then prune the network by removing the smallest $x\%$ of the weights. The pruned network is then trained using different initializations: (i) the same initialization as the original network \cite{frankle2019lottery}, (ii) Xavier \cite{glorot2010understanding}, (iii) standard Gaussian, and (iv) zero mean Gaussian with variance 0.1. Figure \ref{fig:lotter_ticket} shows the decision boundaries polytope, \ie $\delta(R(\mathbf{x}))$, as we perform more pruning (increasing the $x\%$) with different initializations. First, we show the decision boundaries by sampling and classifying points in a grid with the trained network (first subfigure). We then plot the decision boundaries polytope $\delta(R(\mathbf{x}))$ as per the second part of Theorem \ref{theo:dec_bound_prop} denoted as original polytope (second subfigure). While there are many overlapping vertices in the original polytope, the normals to some of the edges (the major visible edges) are parallel to the decision boundaries shown in the first subfigure of Figure \ref{fig:lotter_ticket}. We later show the decision boundaries polytope for the same network under different levels of pruning. One can observe that the orientation of $\delta(R(\mathbf{x}))$ for all different initialization schemes deviates much more from the original polytope as compared to the lottery ticket initialization. This gives an indication that lottery ticket initialization indeed preserves the decision boundaries, since it preserves the orientation of the decision boundaries polytope throughout the evolution of pruning. An alternative means to study the lottery ticket could be to directly observe the polytopes representing the functional form of the network, \ie
$\delta(H_{\{1,2\}}(\mathbf{x}))$ and $\delta(Q_{\{1,2\}}(\mathbf{x}))$, in lieu of the decision boundaries polytopes. However, this strategy may fail to provide a conclusive analysis of the lottery ticket, since there can exist multiple polytopes $\delta(H_{\{1,2\}}(\mathbf{x}))$ and  $\delta(Q_{\{1,2\}}(\mathbf{x}))$ for networks with the same decision boundaries. This highlights the importance of studying the decision boundaries directly. Additional discussions and experiments are left for the \textbf{appendix}.

\section{Tropical Network Pruning}
\label{sec:pruning}

\begin{figure*}[ht]
\centering
\centerline{
\includegraphics[width=0.24\textwidth]{./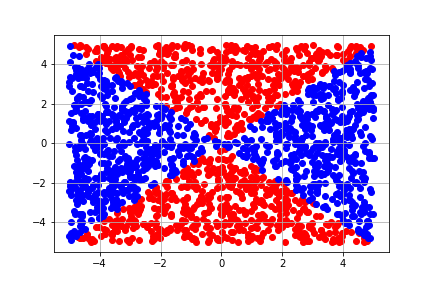}
\includegraphics[width=0.24\textwidth]{./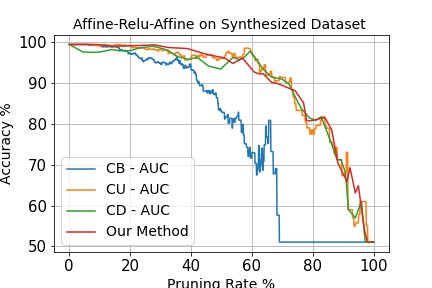}
\includegraphics[width=0.24\textwidth]{./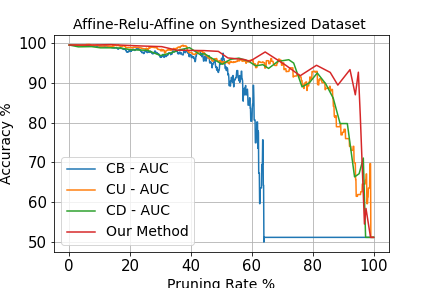}
\includegraphics[width=0.24\textwidth]{./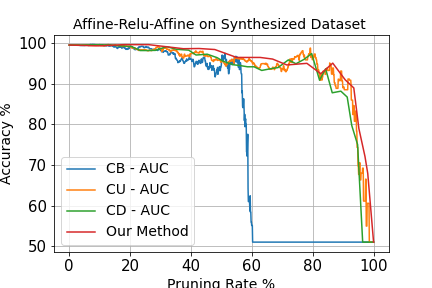}
}
\centerline{
\includegraphics[width=0.24\textwidth]{./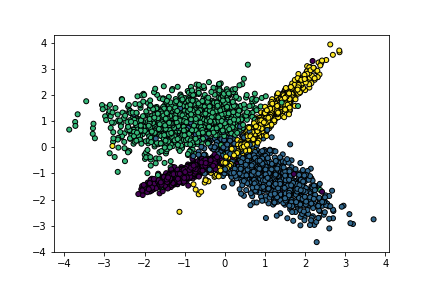}
\includegraphics[width=0.24\textwidth]{./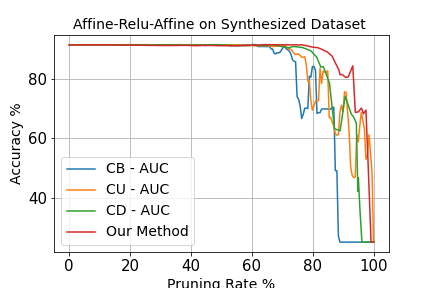}
\includegraphics[width=0.24\textwidth]{./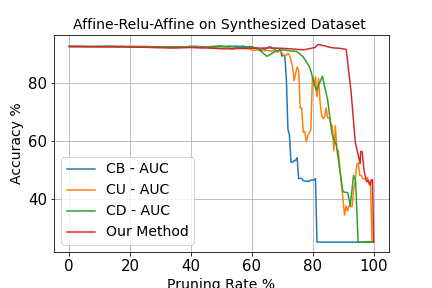}
\includegraphics[width=0.24\textwidth]{./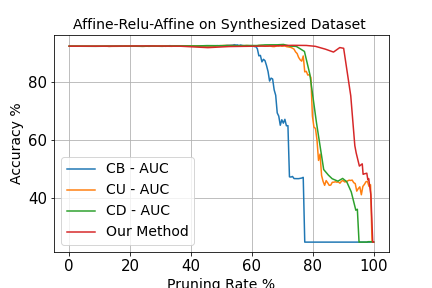}
}
\caption{\textbf{Pruning Ressults on Toy Networks.} We apply tropical pruning on the toy network that is in the form of Affine followed by a ReLU followed by another Affine. From left to right: (a) dataset used for training (b) pruning networks with 100 hidden nodes (c) 200 hidden nodes (d) 300 hidden nodes.}
\label{fig:supp_pruning}
\end{figure*}

Network pruning has been identified as an effective approach to reduce the computational cost and memory usage during network inference. While it dates back to the work of \cite{lecun1990optimal} and \cite{hassibi1993second}, network pruning has recently gained more attention. This is due to the fact that most neural networks over-parameterize commonly used datasets. In network pruning, the task is to find a smaller subset of the network parameters, such that the resulting smaller network has similar decision boundaries (and thus supposedly similar accuracy) to the original over-parameterized network. In this section, we show a new geometric approach towards network pruning. In particular and as indicated by Theorem \ref{theo:dec_bound_prop}, preserving the polytope $\delta(R(\mathbf{x}))$ preserves a superset to the decision boundaries, $\mathcal{T}(R(\mathbf{x}))$, and thus the decision boundaries themselves.

\textbf{Motivational Insight.~~}
For a single hidden layer neural network, the dual subdivision to the decision boundaries is the polytope that is the convex hull of two zonotopes, where each is formed by taking the Minkowski sum of line segments (Theorem \ref{theo:dec_bound_prop}). Figure \ref{fig:compression_pipeline} shows an example, where pruning a neuron in the network has no effect on the dual subdivision polytope and hence no effect on performance. This occurs, since the tropical hypersurface $\mathcal{T}(R(\mathbf{x}))$ before and after pruning is preserved, thus, keeping the decision boundaries the same.

\begin{figure*}
\centering
    \includegraphics[width=0.95\textwidth]{./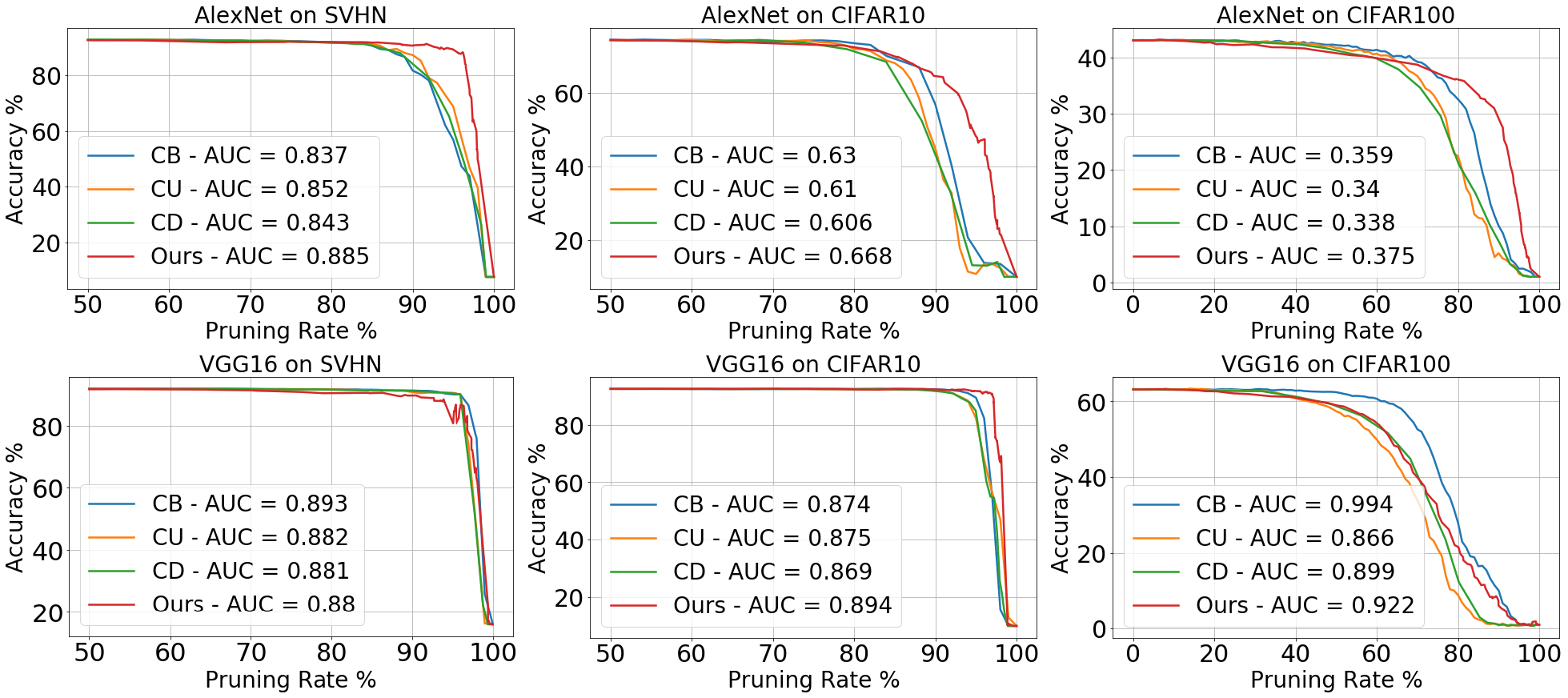}
  \caption{\footnotesize\textbf{Results of Tropical Pruning.} Pruning-accuracy plots for AlexNet (top) and VGG16 (bottom) trained on SVHN, CIFAR10, and CIFAR100, pruned with our tropical method and three other pruning methods.}
  \label{fig:retraining}
\end{figure*}

\textbf{Problem Formulation.~~}
In light of the motivational insight, a natural question arises:
\textit{Given an over-parameterized binary output neural network $f(\mathbf{x}) = \mathbf{B}\max\left(\mathbf{A}\mathbf{x},\mathbf{0}\right)$, can one construct a new neural network, parameterized by sparser weight matrices $\tilde{\mathbf{A}}$ and $\tilde{\mathbf{B}}$, such that this smaller network has a dual subdivision $\delta(\tilde{R}(\mathbf{x}))$ that preserves the decision boundaries of the original network?}

To address this question, we propose the following optimization problem to compute $\tilde{\mathbf{A}}$ and $\tilde{\mathbf{B}}$:
\begin{equation}
\label{opt:pruning_first}
\begin{aligned}
      &\min_{\tilde{\mathbf{A}},\tilde{\mathbf{B}}} ~ d\Big(\delta(\tilde{R}(\mathbf{x})),\delta(R(\mathbf{x}))\Big)
     \\
     & = \min_{\tilde{\mathbf{A}},\tilde{\mathbf{B}}}~ d\Big(\text{ConvHull}\left(\mathcal{Z}_{\tilde{\mathbf{G}}_1},\mathcal{Z}_{\tilde{\mathbf{G}}_2}\right),\text{ConvHull}\left(\mathcal{Z}_{\mathbf{G}_1},\mathcal{Z}_{\mathbf{G}_2}\right)\Big).
\end{aligned}
\end{equation}
The function $d(.)$ defines a distance between two geometric objects. Since the generators $\tilde{\mathbf{G}}_1$ and $\tilde{\mathbf{G}}_2$ are functions of $\tilde{\mathbf{A}}$ and $\tilde{\mathbf{B}}$ (as per Theorem \ref{theo:dec_bound_prop}), this optimization is challenging to solve. For pruning purposes, one can observe from Theorem \ref{theo:dec_bound_prop} that if the generators $\tilde{\mathbf{G}}_1$ and $\tilde{\mathbf{G}}_2$ had fewer number of line segments (rows), this corresponds to a fewer number of rows in the weight matrix $\tilde{\mathbf{A}}$ (sparser weights). If $\tilde{\mathbf{G}}_1 \approx \mathbf{G}_1$ and $\tilde{\mathbf{G}}_2 \approx \mathbf{G}_2$, then $\tilde{\delta(R(\mathbf{x}})) \approx \delta(R(\mathbf{x}))$ implying preserved decision boundaries. Thus, we propose the following optimization as a surrogate to the one in Problem (\ref{opt:pruning_first}):
\begin{equation}
    \label{eq:prunning_obj}
    \begin{aligned}
    & \min_{\tilde{\mathbf{A}} , \tilde{\mathbf{B}}}~~\frac{1}{2} \Big(\left\|\tilde{\mathbf{G}}_1 - \mathbf{G}_1 \right\|_F^{2} + \left\| \tilde{\mathbf{G}}_2 - \mathbf{G}_2 \right\|_F^{2}\Big)  
     \\
     & \qquad \qquad + \lambda_1 \left\| {\tilde{\mathbf{G}}_1 } \right\|_{2,1} +  \lambda_2 \left\|{\tilde{\mathbf{G}}_2}\right\|_{2,1}.
    \end{aligned}
\end{equation} 
The matrix mixed norm for $\mathbf{C} \in \mathbb{R}^{n \times k}$ is defined as $\|\mathbf{C}\|_{2,1} = \sum_{i=1}^n \|\mathbf{C}(i,:)\|_2$, which encourages the matrix $\mathbf{C}$ to be row sparse, \ie complete rows of $\mathbf{C}$ are zero. The first two terms in Problem (\ref{eq:prunning_obj}) aim at approximating the original dual subdivision $\delta(R(\mathbf{x}))$ by approximating the underlying generator matrices, $\mathbf{G}_1$ and $\mathbf{G}_2$. This aims to preserve the orientation of the decision boundaries of the newly constructed network. On the other hand, the second two terms in Problem (\ref{eq:prunning_obj}) act as regularizers to control the sparsity of the constructed network by controlling the sparsity in the number of line segments. We observe that Problem (\ref{eq:prunning_obj}) is not quadratic in its variables, since as per Corollary, \ref{cor:exten_prop1} $\tilde{\mathbf{G}}_1 = \text{Diag}[\text{ReLU}(\tilde{\mathbf{B}}(1,:)) + \text{ReLU}(-\tilde{\mathbf{B}}(2,:))] \tilde{\mathbf{A}}$ and $\tilde{\mathbf{G}}_2 = \text{Diag}[\text{ReLU}(\tilde{\mathbf{B}}(2,:)) + \text{ReLU}(-\tilde{\mathbf{B}}(1,:)) ] \tilde{\mathbf{A}}$. However, since Problem (\ref{eq:prunning_obj}) is separable in the rows of $\tilde{\mathbf{A}}$ and $\tilde{\mathbf{B}}$, we solve Problem (\ref{eq:prunning_obj}) via alternating optimization over these rows, where each sub-problem can be shown to be convex and exhibits a closed-form solution leading to a very efficient solver. For ease, we refer to $\text{ReLU}(\tilde{\mathbf{B}}(i,:))$ and $\text{ReLU}(-\tilde{\mathbf{B}}(i,:))$ as $\tilde{\mathbf{B}}^+(i,:)$ and $\tilde{\mathbf{B}}^-(i,:)$, respectively. Thus, the per row update for $\tilde{\mathbf{A}}$ is given as follows:
\begin{align*}
    & \tilde{\mathbf{A}}(i,:) \\
    & = \max\left(1 - \frac{\lambda_1\sqrt{\mathbf{c}_1^i}+ \lambda_2 \sqrt{\mathbf{c}_2^i}}{(\mathbf{c}_1^i + \mathbf{c}_2^i)}\frac{1}{\left\|\frac{\mathbf{c}_1^i \mathbf{G}_1(i,:) + \mathbf{c}_2^i \mathbf{G}_2(i,:)}{\frac{1}{2}(\mathbf{c}_1^i + \mathbf{c}_2^i)}\right\|_2},0\right) \\
    & \qquad \qquad .\left(\frac{\mathbf{c}_1^i \mathbf{G}_1(i,:) + \mathbf{c}_2^i \mathbf{G}_2(i,:)}{\frac{1}{2}(\mathbf{c}_1^i + \mathbf{c}_2^i)}\right),
\end{align*}
where $\mathbf{c}_1^i$ is the $i^{th}$ element of $\mathbf{c}_1 = \text{ReLU}(\mathbf{B}(1,:)) + \text{ReLU}(-\mathbf{B}(2,:))$ and $\mathbf{c}_2 = \text{ReLU}(\mathbf{B}(2,:)) + \text{ReLU}(-\mathbf{B}(1,:))$. 
Similarly, the closed form update for the $j^{th}$ element of the second linear layer is as follows:
\begin{align*}
    \tilde{\mathbf{B}}^+(1,j) = \max \left(0, \frac{\tilde{\mathbf{A}}(j,:)^\top \tilde{\mathbf{G}}_{1^+}(j,:) - \lambda \|\tilde{\mathbf{A}}(j,:)\|_2}{\|\tilde{\mathbf{A}}(j,:)\|_2^2} \right),
\end{align*}
where $\mathbf{G}_{1^+} = \text{Diag}(\mathbf{B}^+(1,:)) \mathbf A$. 
A similar argument can be used to update the variables $\tilde{\mathbf{B}}^+(2,:)$, $\tilde{\mathbf{B}}^-(1,:)$, and $\tilde{\mathbf{B}}^-(2,:)$. The details of deriving the aforementioned update steps and the extension to the multi-class case are left to the \textbf{appendix}. Note that all updates are cheap, as they are expressed in a closed form single step. In all subsequent experiments, we find that  running the alternating optimization for a single iteration is sufficient to converge to a reasonable solution, thus, leading to a very efficient overall solver.

\noindent\textit{Extension to Deeper Networks.} While the theoretical results in Theorem \ref{theo:dec_bound_prop} and Corollary \ref{cor:exten_prop1} only hold for a shallow network in the form of (Affine, ReLU, Affine), we propose a greedy heuristic to prune much deeper networks by applying the aforementioned optimization for consecutive blocks of (Affine, ReLU, Affine) starting from the input and ending at the output of the network. This extension from a theoretical study of 2 layer network was observed in several works such as \cite{bibi2018analytic}.

\begin{figure*}[t]
\centering
\includegraphics[width=\textwidth]{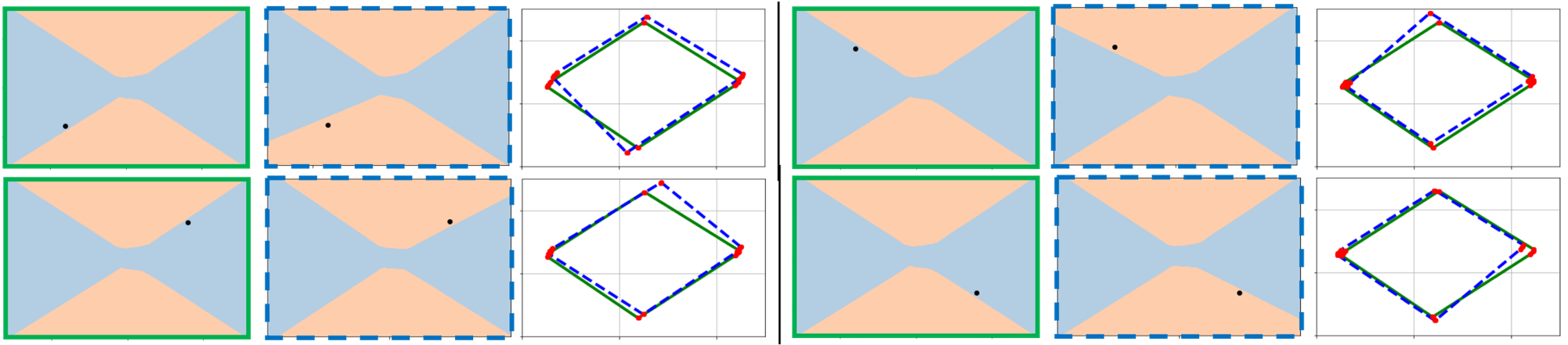}
\caption{\footnotesize\textbf{Dual View of Tropical Adversarial Attacks.} We show the effects of tropical adversarial attacks on a synthetic binary dataset at four different input points (black dots in black). From left to right: the decision regions of the original model, perturbed model, and the corresponding decision boundaries polytopes (green for original and blue for perturbed).}
\label{fig:adv_attack_intuition}
\end{figure*}

\textbf{Experiments on Tropical Pruning.} Here, we evaluate the performance of the proposed pruning approach as compared to several classical approaches on several architectures and datasets. In particular, we compare our tropical pruning approach against Class Blind (CB), Class Uniform (CU) and Class Distribution (CD) \cite{han_prune,abi_prune}. In Class Blind, all the parameters of a layer are sorted by magnitude where the $x\%$ with smallest magnitude are pruned. In contrast, Class Uniform prunes the parameters with smallest $x\%$ magnitudes per node in a layer. Lastly, Class Distribution performs pruning of all parameters for each node in the layer, just as in Class Uniform, but the parameters are pruned based on the standard deviation $\sigma_c$ of the magnitude of the parameters per node. 

\textbf{Toy Networks.} To verify our theoretical work, we first start by pruning small networks that are in the form of Affine followed by a ReLU nonlinearity followed by another Affine layer. We train the aforementioned network on two 2D datasets with a varying number of hidden nodes (100, 200, 300). In this setup, we observe from Figure \ref{fig:supp_pruning} that when Theorem \ref{theo:dec_bound_prop} assumptions hold, our proposed tropical pruning is indeed competitive, and in many cases outperforms, the other non decision boundaries aware pruning schemes.

\textbf{Real Networks.} Since fully connected layers in deep neural networks tend to have much higher memory complexity than convolutional layers, we restrict our focus to pruning fully connected layers. We train AlexNet and VGG16 on SVHN, CIFAR10, and CIFAR100 datasets. As shown in Figure \ref{fig:retraining}, we observe that we can prune more than 90$\%$ of the classifier parameters for both networks without affecting the accuracy. Since pruning is often a single block within a larger compression scheme that in many cases involves inexpensive fast fine tuning, we demonstrate experimentally that our approach can is competitive and sometimes outperforms other methods even when all parameters or when only the biases are fine-tuned after pruning. These experiments in addition to many others are left for the \textbf{appendix}.

\textit{Setup.} To account for the discrepancy in input resolution, we adapt the architectures of AlexNet and VGG16, since they were originally trained on ImageNet \cite{deng2009imagenet}. The fully connected layers of AlexNet and VGG16 have sizes (256,512,10) and (512,512,10) for SVHN and CIFAR10, respectively, and with the last dimension increased to 100 for CIFAR100. All networks were trained to baseline test accuracy of ($92\%$,$74\%$,$43\%$) for AlexNet on SVHN, CIFAR10, and CIFAR100, respectively and ($92\%$,$92\%$,$70\%$) for VGG16. To evaluate the performance of pruning and following previous work \cite{han_prune}, we report the area under the curve (AUC) of the pruning-accuracy plot. The higher the AUC is, the better the trade-off is between pruning rate and accuracy. For efficiency purposes, we run the optimization in Problem \ref{eq:prunning_obj} for a single alternating iteration to identify the rows in $\tilde{\mathbf{A}}$ and elements of $\tilde{\mathbf{B}}$ that will be pruned.

\textit{Results.} Figure~\ref{fig:retraining} shows the comparison between our tropical approach and the three popular pruning schemes on both AlexNet and VGG16 over the different datasets. Our proposed approach can indeed prune out as much as $90\%$ of the parameters of the classifier without sacrificing much of the accuracy. For AlexNet, we achieve much better performance in pruning as compared to other methods. In particular, we are better in AUC by $3\%$, $3\%$, and $2\%$ over other pruning methods on SVHN, CIFAR10 and CIFAR100, respectively. This indicates that the decision boundaries can indeed be preserved by preserving the dual subdivision polytope. For VGG16, we perform similarly well on both SVHN and CIFAR10 and slightly worse on CIFAR100. While the performance achieved here is comparable to the other pruning schemes, if not better, we emphasize that our contribution does not lie in outperforming state-of-the-art pruning methods, but in giving a new geometry-based perspective to network pruning. More experiments were conducted where only network biases or only the classifier are fine-tuned after pruning. Retraining only biases can be sufficient, as they do not contribute to the orientation of the decision boundaries polytope (and effectively the decision boundaries), but only to its translation. Discussions on biases and more results are left for the \textbf{appendix}.

\textit{Comparison Against Tropical Geometry Approaches.} A recent tropical geometry inspired approach was proposed to address the problem of network pruning. In particular, \cite{tropicalpruning,smyrnismulticlass} (SM) proposed an interesting yet heuristic algorithm to directly approximate the tropical rational by approximating the Newton polytope. For fair comparison and following the setup of SM, we train LeNet on MNIST and monitor the test accuracy as we prune its neurons. We report (neurons kept, SM, ours) triplets in (\%) as follows: (100, 98.60, 98.84), (90, 95.71, 98.82), (75, 95.05, 98.8), (50, 95.52, 98.71), (25, 91.04, 98.36), (10, 92.79, 97.99), and (5, 92.93, 94.91). It is clear that tropical pruning outperforms SM by a margin that reaches 7\%. This shows that our theoretically motivated approach is still superior to more recent pruning approaches.

\section{Tropical Adversarial Attacks}
\label{sec:adv_attacks}

\begin{figure*}[ht]
\centering
    \includegraphics[width = 0.85\textwidth]{./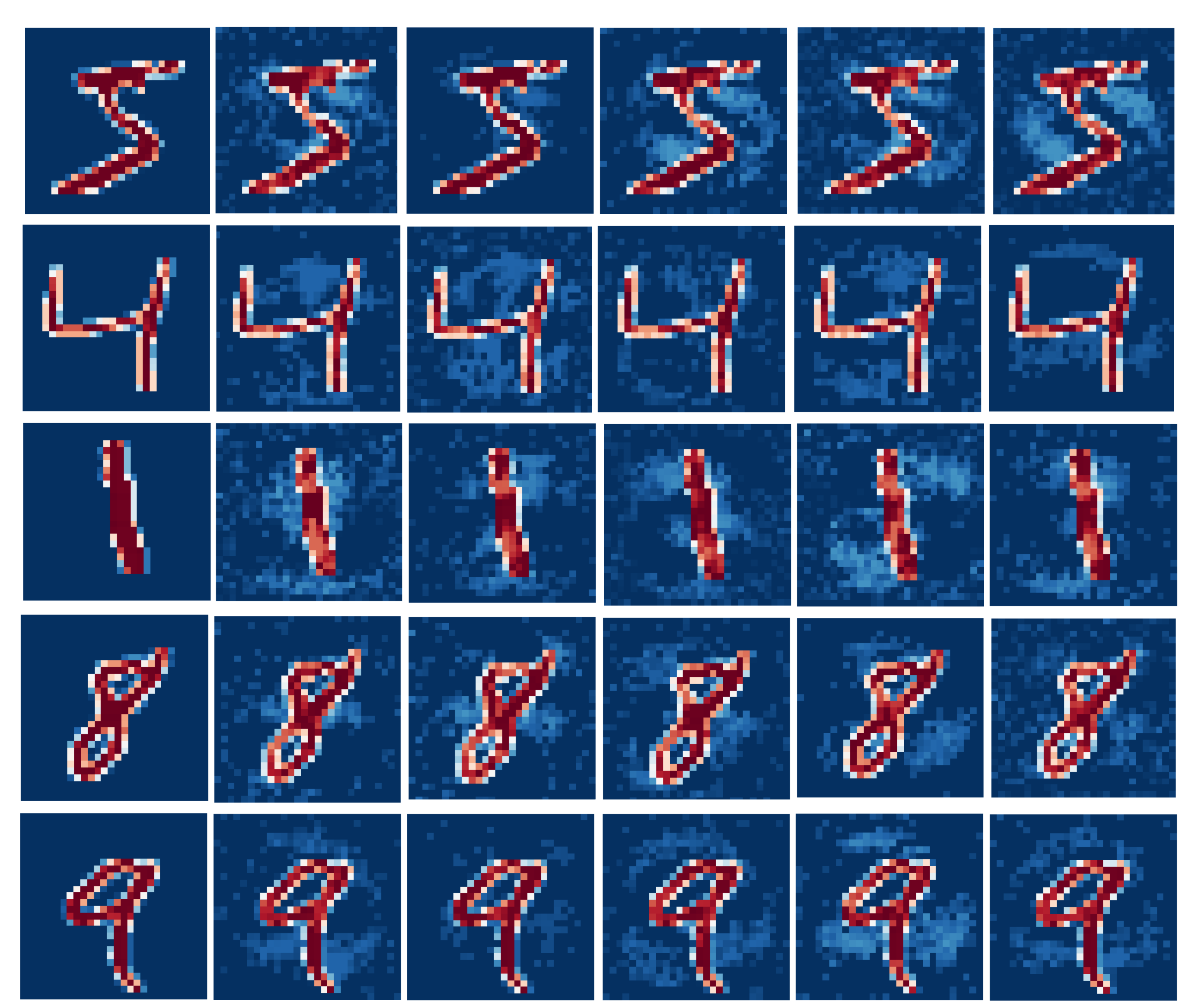}
  \caption{\textbf{Effect of Tropical Adversarial Attacks on MNIST Images.} First row from the left: Clean image, perturbed images classified as [7,3,2,1,0] respectively. Second row from left: Clean image, perturbed images classified as [9,8,7,3,2] respectively. Third row from left: Clean image, perturbed images classified as [9,8,7,5,3] respectively. Fourth row from left: Clean image, perturbed images classified as [9,4,3,2,1] respectively. Fifth row from left: Clean image, perturbed images classified as [8,4,3,2,1] respectively.}
  \label{fig:adver_total}
  \vspace{-0.50cm}
\end{figure*}

DNNs are notorious for being susceptible to adversarial attacks\textcolor{blue}{~\cite{szegedy2014intriguing}}. In fact, adding small imperceptible noise, referred to as adversarial attacks, to the input of these networks can hinder their performance\textcolor{blue}{~\cite{arnab2018robustness,zhao2019seeing}}. Several works  investigated the decision boundaries of neural networks in the presence of adversarial attacks. For instance, \cite{khoury2018geometry} analyzed the high dimensional geometry of adversarial examples by means of manifold reconstruction. Also, \cite{he2018decision} crafted adversarial attacks by estimating the distance to the decision boundaries using random search directions. In this work, we provide a tropical geometric view to this task, where we show how Theorem \ref{theo:dec_bound_prop} can be leveraged to construct a tropical geometry-based targeted adversarial attack.

\textbf{Dual View to Adversarial Attacks.~~} For a classifier $f:\mathbb{R}^n \rightarrow \mathbb{R}^k$ and input $\mathbf{x}_0$ classified as $c$, a standard formulation for targeted adversarial attacks  flips the prediction to a particular class $t$ and is usually defined as
\begin{equation}
\begin{aligned}
    % \resizebox{0.90\hsize}{!}{
    % $
    \underset{\eta}{\min}~~ \mathcal{D}(\eta) ~~ ~\text{s.t.} ~~ \text{argmax}_i~ f_i(\mathbf{x}_0 + \eta) = t \neq c
    % $
    % }
\end{aligned}
\end{equation}
This objective aims to compute the lowest energy input noise $\eta$ (measured by $\mathcal{D}$) such that the the new sample $(\mathbf{x}_0+\eta)$ crosses the decision boundaries of $f$ to a new classification region. Here, we present a dual view to adversarial attacks. Instead of designing a sample noise $\eta$ such that $(\mathbf{x}_0+\eta)$ belongs to a new decision region, one can instead fix $\mathbf{x}_0$ and perturb the network parameters to move the decision boundaries in a way that $\mathbf{x}_0$ appears in a new classification region. In particular, let $\mathbf{A}_1$ be the first linear layer of $f$, such that $f(\mathbf{x}_0) = g(\mathbf{A}_1\mathbf{x}_0)$. One can now perturb $\mathbf{A}_1$ to alter the decision boundaries and relate this parameter perturbation to the input perturbation as follows:
\begin{equation}
\label{eq:perturb_network}
\begin{aligned}
    %  \resizebox{0.9\hsize}{!}{
    %  $
     g(\left(\mathbf{A}_1+\xi_{\mathbf{A}_1})\mathbf{x}_0\right)     &= g\left(\mathbf{A}_1\mathbf{x}_0 + \xi_{\mathbf{A}_1}\mathbf{x}_0\right)
     \\&
     = g(\mathbf{A}_1\mathbf{x}_0 + \mathbf{A}_1\eta) = f(\mathbf{x}_0+\eta).
    %  $
    %  }
\end{aligned}
\end{equation}
From this dual view, we observe that traditional adversarial attacks are intimately related to perturbing the parameters of the first linear layer through the linear system: $\mathbf{A}_1\eta = \xi_{\mathbf{A}_1}\mathbf{x}_0$. \textit{The two views and formulations are identical} under such condition. With this analysis, Theorem \ref{theo:dec_bound_prop} provides explicit means to geometrically construct adversarial attacks by perturbing the decision boundaries. In particular, since the normals to the dual subdivision polytope $\delta(R(\mathbf{x}))$ of a given neural network represent the tropical hypersurface $\mathcal{T}(R(\mathbf{x}))$, which is a superset to the decision boundaries set $\mathcal{B}$, $\xi_{\mathbf{A}_1}$ can be designed to result in a minimal perturbation to the dual subdivision that is sufficient to change the network prediction of $\mathbf{x}_0$ to the targeted class $t$. Based on this observation, we formulate the problem as follows:
\begin{equation}
\begin{aligned}
\label{eq:adv_attack_main_obj}
    \min_{\eta, \xi_{\mathbf{A}_1}} \qquad & \mathcal{D}_1(\eta) + \mathcal{D}_2(\xi_{\mathbf{A}_1}) \\
  \textrm{s.t.} \qquad &  -\operatorname{ loss}(g(\mathbf{A}_1(\mathbf{x}_0 + \eta)),t) \leq -1; \\&  -\operatorname{loss}(g(\mathbf{A}_1 + \xi_{\mathbf{A}_1})\mathbf{x}_0,t) \leq -1; \\
    & (\mathbf{x}_0+\eta) \in [0,1]^n, \quad \|\eta\|_\infty \leq \epsilon_1; \\  &\|\xi_{\mathbf{A}_1}\|_{\infty,\infty} \leq \epsilon_2, \quad   \mathbf{A}_1\eta = \xi_{\mathbf{A}_1}\mathbf{x}_0.
\end{aligned}
\end{equation}

The $\operatorname{loss}$ is the standard cross-entropy loss. The first row of constraints ensures that the network prediction is the desired target class $t$ when the input $\mathbf{x}_0$ is perturbed by $\eta$, and equivalently by perturbing the first linear layer ${\mathbf{A}_1}$ by $\xi_{\mathbf{A}_1}$. This is identical to $f_1$ as proposed by \cite{carlini2016towards}. Moreover, the third and fourth constraints guarantee that the perturbed input is feasible and that the perturbation is bounded, respectively. The fifth constraint is to limit the maximum perturbation on the first linear layer, while the last constraint enforces the dual equivalence between input perturbation and parameter perturbation. The function $\mathcal{D}_2$ captures the perturbation of the dual subdivision polytope upon perturbing the first linear layer by $\xi_{\mathbf{A}_1}$. For a single hidden layer neural network parameterized as $(\mathbf{A}_1 + \xi_{\mathbf{A}_1}) \in \mathbb{R}^{p \times n}$ and $\mathbf{B} \in \mathbb{R}^{2 \times p}$ for the first and second layers respectively, $\mathcal{D}_2$ can capture the perturbations in each of the two zonotopes discussed in Theorem \ref{theo:dec_bound_prop} and we define it as:

\begin{equation}
\begin{aligned}
\label{eq:obj_adv_D2}
    \mathcal{D}_2(\xi_{\mathbf{A}_1}) &=\frac{1}{2} \sum_{j=1}^2 \left\|\text{Diag} \big(\mathbf{B}^+(j,:)\big) \xi_{\mathbf{A}_{1}}\right\|_F^2 \\
    &\qquad + \left\|\text{Diag} \big(\mathbf{B}^-(j,:)\big) \xi_{\mathbf{A}_{1}}\right\|_F^2.
\end{aligned}
\end{equation}
The derivation, discussion, and extension of (\ref{eq:obj_adv_D2}) to multi-class neural networks is left for the \textbf{appendix}. We solve Problem (\ref{eq:adv_attack_main_obj}) with a penalty method on the linear equality constraints, where each penalty step is solved with ADMM \cite{boyd2011distributed} in a similar fashion to the work of \cite{xu2018structured}. The details of the algorithm are left for the \textbf{appendix}.

\textbf{Motivational Insight to the Dual View.} Here, we train a single hidden layer neural network, where the size of the input is 2 with 50 hidden nodes and 2 outputs on a simple dataset as shown in Figure \ref{fig:adv_attack_intuition}. We then solve Problem  \ref{eq:adv_attack_main_obj} for a given $\mathbf{x}_0$ shown in black. We show the decision boundaries for the network with and without the perturbation at the first linear layer $\xi_{\mathbf{A}_1}$. Figure \ref{fig:adv_attack_intuition} shows that  perturbing an edge of the dual subdivision polytope, by perturbing the first linear layer, indeed corresponds to perturbing the decision boundaries and results in the misclassification of $\mathbf{x}_0$. Interestingly and as expected, perturbing different decision boundaries corresponds to perturbing different edges of the dual subdivision.

\begin{table}[t]
    \centering
    \begin{tabular}{c|ccc}
    \toprule
       \diaghead{Scoreexp}{$\epsilon_1$}{$\epsilon_2$} & \cite{xu2018structured}0.0 & Ours 1.0 & Ours 2.0 \\
    \midrule
         0.1 & 44.8 & 46.1 & \textbf{43.8} \\
         0.2 & 12.6 & 11.3 & \textbf{11.0} \\
    \bottomrule
    \end{tabular}
    \caption{\textcolor{black}{\textbf{Robust accuracy on MNIST.} We conduct tropical adversarial attacks on a subset of 1000 randomly selected samples that are correctly classified and measure the accuracy after the attack.}}
    \label{tab:MNIST_Attack}
\end{table}

\textbf{MNIST Experiment.} Here, we design perturbations to misclassify MNIST images. Figure \ref{fig:adver_total} shows several adversarial examples that change the network prediction for digits $\{5,4,1,8,9\}$ to other targets. Namely our tropical reformulation of adversarial attack successfully causes the network to misclassifying digit 9 as 8,4,3,2, and 1, respectively. In some cases, the perturbation $\eta$ is as small as $\epsilon = 0.1$, where $\mathbf{x}_0 \in [0,1]^n$. 
\textcolor{black}{Moreover, we conduct experiments on a larger scale where we measure the robust accuracy of our trained model on 1000 randomly selected and correctly classified samples of MNIST. We vary $\epsilon_1 \in \{0.1, 0.2\}$ and $\epsilon_2\in\{0.0, 1.0, 2.0 \}$. Note that when setting $\epsilon_2=0.0$, our attack in Eq. \eqref{eq:adv_attack_main_obj} recovers the structured adversarial attack~\cite{xu2018structured}. We report the results in Table~\ref{tab:MNIST_Attack}. Our results show that guiding the input perturbations through the tropically inspired layer perturbations improves the attack success rate and reduces the robust accuracy.}
We emphasize that our approach is not meant to be compared with (or beat) state of the art adversarial attacks, but to provide a novel geometrically inspired angle to shed new light in this field.

\section{Conclusion}\label{conclusion}
We leverage tropical geometry to characterize the decision boundaries of neural networks in the form (Affine, ReLU, Affine) and relate it to well-studied geometric objects such as zonotopes and polytopes. We leaverage this representation in providing a tropical perspective to support the lottery ticket hypothesis, network pruning and designing adversarial attacks.  One natural extension for this work is a compact derivation for the characterization of the decision boundaries of convolutional neural networks (CNNs) and graphical convolutional networks (GCNs).

\appendix

\begin{figure*}
\centering     %%% not \center
\includegraphics[width=0.40\textwidth]{./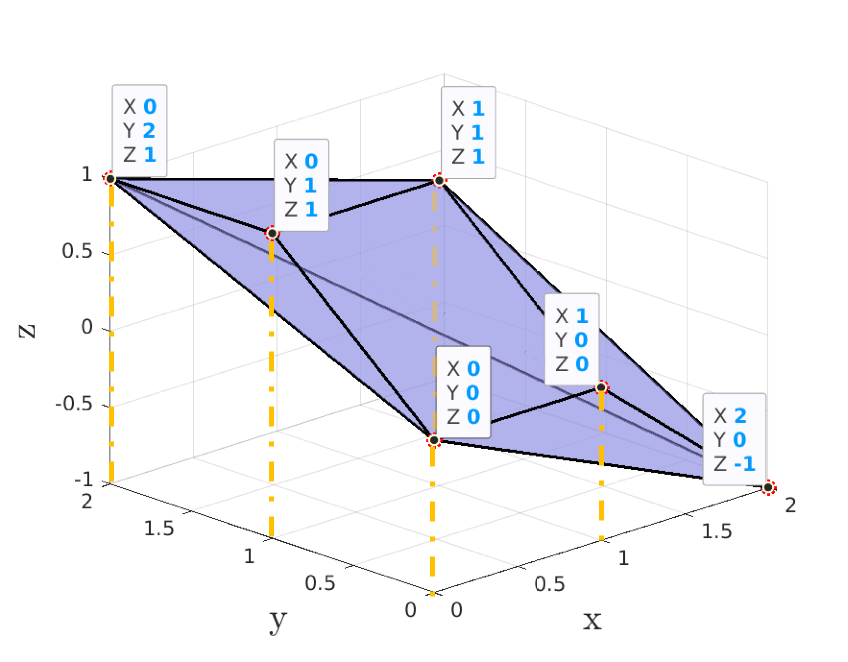}
\includegraphics[width=0.40\textwidth]{./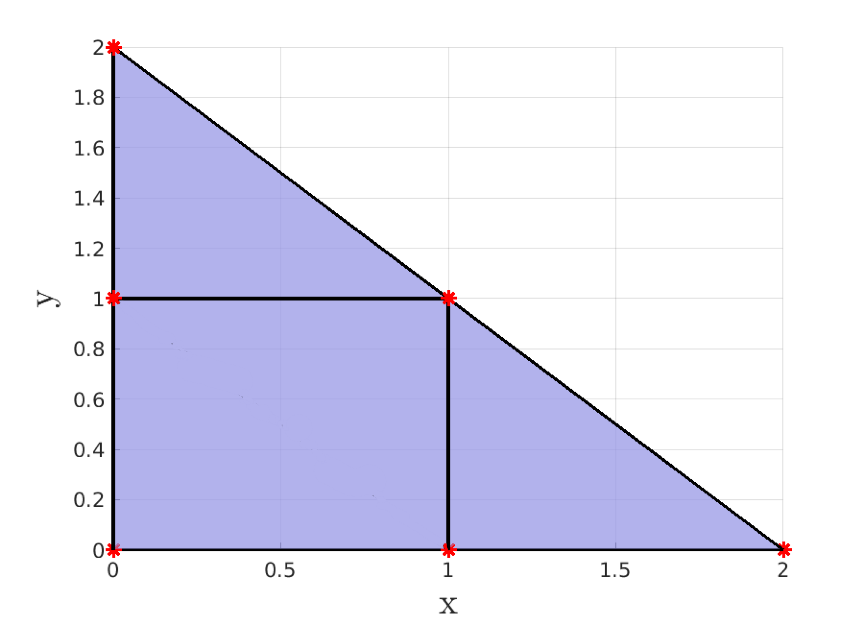}
\caption{\textbf{The Newton Polygon and the Corresponding Dual subdivision}. The left Figure shows the newton polygon $\mathcal{P}(f)$ for the tropical polynomial defined in the second example in Figure \ref{fig:examples_tg}. The dual subdivision $\delta(f)$ is constructed by shadowing/projecting the upper faces of $\mathcal{P}(f)$ on $\mathbb{R}^2$.}
\label{fig:example_append}
\vspace{-0.35cm}
\end{figure*}

\section*{Preliminaries and Definitions.}
\begin{fact}
$P \tilde{+} Q = \{p+q, \forall p\in P \text{ and } ~ q\in Q\}$ is the Minkowski sum between two sets $P$ and $Q$.
\end{fact}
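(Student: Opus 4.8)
The plan is to recognize at the outset that the final statement is not a proposition carrying a proof obligation but a \emph{definition}: it merely introduces the notation $\tilde{+}$ together with the set-builder description $\{p+q : p \in P, q \in Q\}$ of the Minkowski sum of two sets $P$ and $Q$. Consequently there is nothing to derive in the usual sense. The only soundness point to confirm is that the displayed set is well-defined, which is immediate, since it is specified as an unambiguous subset of the ambient vector space by a set-builder expression with no dependence on a choice of representative. Hence I would not attempt a derivation; I would instead verify that the definition is internally coherent and compatible with the notation already in use.

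The one place where this definition genuinely interacts with the earlier development, and where a short verification is worthwhile, is its consistency with Definition \ref{def:zonotope}. There a zonotope is described twice: once as $\mathcal{Z}(\mathbf{u}^1,\dots,\mathbf{u}^L) = \{\sum_{i=1}^L x_i \mathbf{u}^i : 0 \le x_i \le 1\}$ and once, in the remark following it, as the Minkowski sum of the line segments $\{\alpha \mathbf{u}^i : \alpha \in [0,1]\}$. To reconcile the two I would unfold the Minkowski sum of the $L$ segments using the fact above, obtaining $\{\sum_{i=1}^L \alpha_i \mathbf{u}^i : \alpha_i \in [0,1]\}$ by induction on $L$, and then observe that this set coincides term-by-term with the first description once $x_i$ is identified with $\alpha_i$. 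The induction step is the only substantive point: it relies on associativity of $\tilde{+}$, which follows directly from associativity of vector addition inside the set-builder expression. I expect no real obstacle here, as the entire matter is bookkeeping of quantifiers; the ``proof'' reduces to confirming that the Minkowski-sum notation is consistent with the zonotope notation invoked in Theorem \ref{theo:dec_bound_prop} and Proposition \ref{prop:arbit_line_segments_minkowski}.
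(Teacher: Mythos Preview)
Your reading is correct: the paper states this as a bare definitional fact in the appendix under ``Preliminaries and Definitions'' and supplies no proof whatsoever, so there is nothing to compare against. Your additional consistency check with Definition~\ref{def:zonotope} goes beyond what the paper does, but it is a harmless and reasonable sanity verification.
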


\begin{fact}
\label{fact:supp_polytope_power}
Let $f$ be a tropical polynomial and let $a \in \mathbb{N}$. Then
\begin{align*}
    \label{fact:polytope_power}
    \mathcal{P}(f^a) = a \mathcal{P}(f).
\end{align*}
\end{fact}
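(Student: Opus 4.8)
The plan is to prove $\mathcal{P}(f^{a}) = a\,\mathcal{P}(f)$ by induction on $a$, after isolating two elementary Minkowski‑sum facts. \textbf{(I)} The lifted‑Newton‑polytope operator $\mathcal{P}(\cdot)$ turns tropical multiplication into Minkowski addition: $\mathcal{P}(f\odot g) = \mathcal{P}(f)\,\tilde{+}\,\mathcal{P}(g)$ for tropical polynomials $f,g$. \textbf{(II)} For any convex set $C$, the $a$‑fold Minkowski sum $C\,\tilde{+}\,\cdots\,\tilde{+}\,C$ equals the dilation $aC := \{a\mathbf{v}:\mathbf{v}\in C\}$. Granting these, the base case $a=1$ is immediate; for the inductive step I would write $f^{a} = f^{a-1}\odot f$, apply (I) to get $\mathcal{P}(f^{a}) = \mathcal{P}(f^{a-1})\,\tilde{+}\,\mathcal{P}(f)$, substitute the hypothesis $\mathcal{P}(f^{a-1}) = (a-1)\mathcal{P}(f)$, and close with (II): $(a-1)\mathcal{P}(f)\,\tilde{+}\,\mathcal{P}(f) = a\,\mathcal{P}(f)$, using that $\mathcal{P}(f)$ is a polytope and hence convex.

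To establish (I), expand by distributivity of $\odot$ over $\oplus$: with $f = \bigoplus_{i=1}^{n} c_i\mathbf{x}^{\mathbf{a}_i}$ and $g = \bigoplus_{j=1}^{m} d_j\mathbf{x}^{\mathbf{b}_j}$ one obtains $f\odot g = \bigoplus_{i,j}(c_i + d_j)\mathbf{x}^{\mathbf{a}_i+\mathbf{b}_j}$, so the set of lifted monomial points of $f\odot g$ is $\{(\mathbf{a}_i,c_i) + (\mathbf{b}_j,d_j)\}$, i.e.\ the Minkowski sum of the lifted monomial sets of $f$ and of $g$. Taking convex hulls, I would then invoke (and, if desired, verify in two lines) the standard identity $\text{ConvHull}(S\,\tilde{+}\,T) = \text{ConvHull}(S)\,\tilde{+}\,\text{ConvHull}(T)$ for finite point sets $S,T$: the inclusion $\supseteq$ is immediate, and $\subseteq$ follows by rewriting a convex combination $\sum_k\lambda_k(s_k+t_k)$ as $(\sum_k\lambda_k s_k)+(\sum_k\lambda_k t_k)$. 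This gives $\mathcal{P}(f\odot g) = \mathcal{P}(f)\,\tilde{+}\,\mathcal{P}(g)$. Fact (II) is just as short: $aC \subseteq C\,\tilde{+}\,\cdots\,\tilde{+}\,C$ since $a\mathbf{v} = \mathbf{v}+\cdots+\mathbf{v}$, and conversely $\mathbf{v}_1+\cdots+\mathbf{v}_a = a\big(\tfrac1a\sum_k\mathbf{v}_k\big)$ with $\tfrac1a\sum_k\mathbf{v}_k\in C$ by convexity.

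The only point that calls for care — and the closest thing to an obstacle — is the bookkeeping when distinct index tuples in the expansion produce the same exponent: the tropical sum collapses $c\,\mathbf{x}^{\mathbf{a}}\oplus c'\mathbf{x}^{\mathbf{a}}$ to $\max(c,c')\,\mathbf{x}^{\mathbf{a}}$, so a priori one must be careful that passing to a genuine tropical polynomial with distinct exponents does not alter $\mathcal{P}(f^a)$. I would neutralize this by carrying the argument on the \emph{expanded} multiset of lifted monomials of $f^a$, for which $\mathcal{P}(\cdot)$ is literally the repeated Minkowski sum above; this is harmless in every instance used here, where $f$ is a two‑monomial ReLU term $c\,\mathbf{x}^{\mathbf{a}}\oplus d\,\mathbf{x}^{\mathbf{b}}$, so $f^{a}$ has the $a{+}1$ lifted points $\big(i\mathbf{a}+(a{-}i)\mathbf{b},\, ic+(a{-}i)d\big)$ for $i=0,\dots,a$, which are affinely collinear, making $\mathcal{P}(f^{a})$ exactly the segment from $(a\mathbf{a},ac)$ to $(a\mathbf{b},ad)$ — manifestly $a\,\mathcal{P}(f)$. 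Hence the general identity follows from (I) and (II) as described.
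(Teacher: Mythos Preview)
The paper records this as a standard fact without proof, so there is nothing to compare against; your induction via (I) (the paper's own Fact~\ref{fact:supp_polytope_dot}) together with (II) (the $a$-fold Minkowski sum of a convex set equals its $a$-dilate) is the natural argument and is carried out correctly.

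You are also right to isolate the collapsing of repeated exponents as the one delicate point --- and in fact it is more than bookkeeping: under the paper's conventions (Definition~2 forces distinct exponents; $\mathcal{P}$ is the \emph{full} lifted convex hull) the identity is false in general. Take $f = 0 \oplus (-10)\odot x \oplus 0\odot x^{2}$ in one variable. Its lifted points $(0,0),(1,-10),(2,0)$ span a genuine triangle, so $2\mathcal{P}(f)$ has $(2,-20)$ as a vertex. In $f\odot f$, however, the exponent $2$ arises as $0+2$ with coefficient $0$ and as $1+1$ with coefficient $-20$; the tropical sum keeps only the larger coefficient $0$, so the lifted points of $f^{2}$ are $(0,0),(1,-10),(2,0),(3,-10),(4,0)$, whose convex hull never dips below height $-10$ and therefore excludes $(2,-20)$. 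Hence $\mathcal{P}(f^{2})\subsetneq 2\mathcal{P}(f)$. The identity \emph{does} hold for the Newton polytope $\Delta$, for the upper faces of $\mathcal{P}$ (which is all that feeds into $\delta$), and --- as you verify explicitly --- for two-monomial $f$, which is the only case the proof of Theorem~\ref{theo:dec_bound_prop} actually invokes. Your proposal is therefore sound for every use in the paper; just do not close with ``the general identity follows'' unless you attach one of these qualifications.
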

Let both $f$ and $g$ be tropical polynomials. Then
\begin{fact}
\label{fact:supp_polytope_dot}
\begin{equation}
\begin{aligned}
    \label{fact:polytope_dot}
    \mathcal{P}(f \odot g) = \mathcal{P}(f) \tilde{+} \mathcal{P}(g).
\end{aligned}
\end{equation}
\end{fact}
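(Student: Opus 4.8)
The plan is to reduce the claimed identity to a purely convex-geometric statement about Minkowski sums of convex hulls, after first unfolding the tropical product. Write $f(\mathbf{x}) = \bigoplus_{i=1}^{n} c_i \mathbf{x}^{\mathbf{a}_i}$ and $g(\mathbf{x}) = \bigoplus_{j=1}^{m} d_j \mathbf{x}^{\mathbf{b}_j}$, and set $A = \{(\mathbf{a}_i, c_i)\}_{i=1}^n$ and $B = \{(\mathbf{b}_j, d_j)\}_{j=1}^m$, so that by definition $\mathcal{P}(f) = \text{ConvHull}(A)$ and $\mathcal{P}(g) = \text{ConvHull}(B)$. The first step is to expand $f \odot g$: since tropical multiplication distributes over tropical addition (that is, $\max\{u,v\} + w = \max\{u+w, v+w\}$), we obtain $f \odot g = \bigoplus_{i,j} (c_i + d_j)\,\mathbf{x}^{\mathbf{a}_i + \mathbf{b}_j}$, using $c_i \odot d_j = c_i + d_j$ and $\mathbf{x}^{\mathbf{a}_i}\odot \mathbf{x}^{\mathbf{b}_j} = \mathbf{x}^{\mathbf{a}_i + \mathbf{b}_j}$. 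Hence the monomials of $f \odot g$ lift exactly to the Minkowski sum of point sets $A \tilde{+} B = \{(\mathbf{a}_i + \mathbf{b}_j,\, c_i + d_j)\}$, and therefore (modulo the collapsing issue discussed below) $\mathcal{P}(f \odot g) = \text{ConvHull}(A \tilde{+} B)$.

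With this reduction in hand, the fact follows from the elementary lemma that $\text{ConvHull}(A \tilde{+} B) = \text{ConvHull}(A) \tilde{+} \text{ConvHull}(B)$ for finite point sets, which I would prove by a double inclusion. For $\subseteq$, every point of $A \tilde{+} B$ lies in $\text{ConvHull}(A) \tilde{+} \text{ConvHull}(B)$, and the latter is convex (a Minkowski sum of convex sets is convex), so it contains the convex hull of $A \tilde{+} B$. For $\supseteq$, given $p = \sum_i \lambda_i (\mathbf{a}_i, c_i)$ and $q = \sum_j \mu_j(\mathbf{b}_j, d_j)$ with $\lambda,\mu \geq 0$ summing to one, the products $\lambda_i \mu_j$ are nonnegative and sum to one, and $p + q = \sum_{i,j} \lambda_i \mu_j\big((\mathbf{a}_i,c_i)+(\mathbf{b}_j,d_j)\big)$ exhibits $p+q$ as a convex combination of points of $A \tilde{+} B$. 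Combining with the first step yields $\mathcal{P}(f\odot g) = \text{ConvHull}(A)\tilde{+}\text{ConvHull}(B) = \mathcal{P}(f)\tilde{+}\mathcal{P}(g)$.

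The step needing the most care — and the main obstacle — is that the expansion $\bigoplus_{i,j}(c_i+d_j)\mathbf{x}^{\mathbf{a}_i+\mathbf{b}_j}$ need not have distinct exponents: two pairs $(i,j)$ and $(i',j')$ may satisfy $\mathbf{a}_i + \mathbf{b}_j = \mathbf{a}_{i'} + \mathbf{b}_{j'}$, so writing $f \odot g$ as a legitimate tropical polynomial with distinct monomials forces a collapse keeping only the tropical coefficient $\max\{c_i + d_j,\, c_{i'}+d_{j'}\}$. When no such collision occurs the previous two paragraphs give the full-polytope identity verbatim; in the presence of collisions I would argue that each discarded monomial contributes a lifted point $(\mathbf{e}, c)$ lying directly below a retained point $(\mathbf{e}, c^\ast)$ with the same exponent $\mathbf{e}$ and $c < c^\ast$, and such a point is strictly below the upper envelope of $\text{ConvHull}(A \tilde{+} B)$, hence lies on no upper face. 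Consequently the collapse leaves the upper hull — and therefore the dual subdivision $\delta(f \odot g)$ obtained by projecting the upper faces — exactly unchanged, which is precisely the regime in which the identity is invoked (together with Fact \ref{fact:supp_polytope_power} and Theorem \ref{theo:dec_bound_prop}) to compute $\delta(R(\mathbf{x}))$. I would make this reduction-to-upper-hulls explicit so the equality is applied only where it is robust to the collapse.
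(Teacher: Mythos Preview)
The paper does not prove this statement; it is listed in the appendix under ``Preliminaries and Definitions'' as a standard fact from tropical geometry, stated without argument. Your proof is correct and follows the expected route: expand $f \odot g$ via distributivity, identify the lifted monomial set with the Minkowski sum $A \tilde{+} B$ of the two lifted point sets, and invoke the elementary convex-geometric identity $\text{ConvHull}(A \tilde{+} B) = \text{ConvHull}(A) \tilde{+} \text{ConvHull}(B)$, which you establish by a clean double inclusion.

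Your caution about exponent collisions is well-placed and in fact sharper than the paper's own treatment. When distinct pairs $(i,j)$ and $(i',j')$ yield the same exponent $\mathbf{a}_i + \mathbf{b}_j = \mathbf{a}_{i'} + \mathbf{b}_{j'}$, the collapsed polynomial retains only the larger coefficient, and the discarded lifted point $(\mathbf{e},c)$ with $c < c^\ast$ can genuinely be a vertex of the \emph{lower} hull of $\text{ConvHull}(A \tilde{+} B)$; hence the full-polytope identity can fail literally. A one-variable instance: take $f = x \oplus 0$ and $g = (1 \odot x) \oplus 0$; then $\mathcal{P}(f)\,\tilde{+}\,\mathcal{P}(g)$ is the parallelogram on $\{(0,0),(1,0),(1,1),(2,1)\}$, whereas $\mathcal{P}(f \odot g)$ is the triangle on $\{(0,0),(1,1),(2,1)\}$. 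Your resolution---that the upper faces, and hence $\delta(f \odot g)$, are unaffected by the collapse---is exactly right, and it is precisely what is needed where the paper invokes this fact, namely in computing $\delta(R(\mathbf{x}))$ in the proof of Theorem~\ref{theo:dec_bound_prop}.
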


\begin{fact}
\label{fact:supp_fact4}
\begin{equation}
\begin{aligned}
    \label{fact:polytope_plus}
    \mathcal{P}(f \oplus g) = \text{ConvexHull}\Big(\mathcal{V} \left(\mathcal{P}(g)\right) \cup \mathcal{V}\left(\mathcal{P}(g)\right)\Big).
\end{aligned}
\end{equation}
\end{fact}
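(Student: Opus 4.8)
The plan is to read both sides through the lens of generating (lifted) points and to reduce the claim to a comparison of convex hulls. Since every polytope is the convex hull of its own vertices, $\text{ConvHull}\big(\mathcal{V}(\mathcal{P}(f))\big) = \mathcal{P}(f)$ and likewise for $g$; hence the right-hand side satisfies $\text{ConvHull}\big(\mathcal{V}(\mathcal{P}(f)) \cup \mathcal{V}(\mathcal{P}(g))\big) = \text{ConvHull}\big(\mathcal{P}(f) \cup \mathcal{P}(g)\big)$ (reading the two $\mathcal{V}(\mathcal{P}(g))$ in the statement as $\mathcal{V}(\mathcal{P}(f))$ and $\mathcal{V}(\mathcal{P}(g))$). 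Thus it suffices to identify $\mathcal{P}(f\oplus g)$ with the convex hull of the union of the two lifted polytopes.

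Next I would unpack $f \oplus g$ as a tropical polynomial. Writing $f = \bigoplus_i c_i \mathbf{x}^{\mathbf{a}_i}$ and $g = \bigoplus_j d_j \mathbf{x}^{\mathbf{b}_j}$, each monomial equals an affine form $c_i + \mathbf{a}_i^\top \mathbf{x}$, and since $\oplus$ is the pointwise $\max$, the sum $f \oplus g$ is again a tropical polynomial whose support is the union $\{\mathbf{a}_i\} \cup \{\mathbf{b}_j\}$, with coefficient $c^\star_{\mathbf{e}} := \max(c^f_{\mathbf{e}}, d^g_{\mathbf{e}})$ assigned to each exponent $\mathbf{e}$ in that union (an absent coefficient being read as $-\infty$). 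Consequently the lifted generators of $f \oplus g$ are exactly the points $(\mathbf{e}, c^\star_{\mathbf{e}})$ as $\mathbf{e}$ ranges over the combined support.

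With this description the inclusion $\mathcal{P}(f \oplus g) \subseteq \text{ConvHull}(\mathcal{P}(f) \cup \mathcal{P}(g))$ is immediate: every generator $(\mathbf{e}, c^\star_{\mathbf{e}})$ of $f \oplus g$ attains its maximum in either $f$ or $g$, hence is already a generator of $\mathcal{P}(f)$ or of $\mathcal{P}(g)$ and lies in the union. For the reverse direction I would compare upper faces: over each exponent $\mathbf{e}$ of the combined support, the topmost lifted point surviving in the union is precisely $(\mathbf{e}, c^\star_{\mathbf{e}})$, the generator kept by $f \oplus g$; therefore the two polytopes share the same upper boundary, i.e. $\text{UF}(\mathcal{P}(f \oplus g)) = \text{UF}\big(\text{ConvHull}(\mathcal{P}(f)\cup\mathcal{P}(g))\big)$.

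The main obstacle is exactly this last point: the two full polytopes need not coincide verbatim when $f$ and $g$ share an exponent with unequal coefficients, since the smaller lifted point can persist as a vertex on the \emph{lower} boundary of $\text{ConvHull}(\mathcal{P}(f) \cup \mathcal{P}(g))$ while being discarded by the $\max$ from $\mathcal{P}(f \oplus g)$. When the supports of $f$ and $g$ are disjoint the generator sets agree verbatim and the stated equality holds on the nose; in the general case it holds at the level of upper faces, which is all that the dual subdivision $\delta(\cdot)$ and the tropical hypersurface $\mathcal{T}(\cdot)$ depend on, by their definition through $\text{UF}(\mathcal{P})$ and the correspondence of \cite{maclagan2015tg}. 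Since the Fact is only ever invoked through $\delta$ — notably for $R(\mathbf{x}) = H_1(\mathbf{x}) \odot Q_2(\mathbf{x}) \oplus H_2(\mathbf{x}) \odot Q_1(\mathbf{x})$ in Theorem \ref{theo:dec_bound_prop} — I would conclude by phrasing the identity in terms of the upper hull and recording the exact equality under disjoint supports.
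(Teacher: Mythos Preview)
The paper states this identity as a preliminary fact in the appendix without proof, so there is no argument to compare against. Your analysis is in fact more careful than the paper's own treatment: you correctly read the evident typo (one of the two $\mathcal{V}(\mathcal{P}(g))$ should be $\mathcal{V}(\mathcal{P}(f))$), supply the natural argument via the lifted generating points, and --- importantly --- flag the subtlety that the stated equality of the full lifted polytopes $\mathcal{P}$ can fail when $f$ and $g$ share an exponent with unequal coefficients (the lower lift survives on the lower boundary of the right-hand side but is discarded by the $\max$ on the left), while equality at the level of upper faces, and hence of $\delta(\cdot)$, always holds. Since the paper only invokes the fact through $\delta$ in the proof of Theorem~\ref{theo:dec_bound_prop}, and moreover in the bias-free setting where all lifted heights are zero so the obstruction cannot arise, your qualified conclusion is correct and exactly what the application requires.
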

\noindent Note that $\mathcal{V}(\mathcal{P}(f))$ is the set of vertices of the polytope $\mathcal{P}(f)$.

\begin{fact}
Let $\{S_i\}_{i=1}^n$ be the set of $n$ line segments. Then we have that 
\begin{align*}
    S = S_1 \tilde{+} \dots \tilde{+} S_n = P \tilde{+} V
\end{align*}
\noindent where the sets $P = \tilde{+}_{j \in C_1} S_j$ and $V = \tilde{+}_{j \in C_2} S_j$ where $C_1$ and $C_2$ are any complementary partitions of the set $\{S_i\}_{i=1}^n$.
\end{fact}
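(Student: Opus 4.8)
The plan is to reduce the statement to the elementary observation that Minkowski summation of subsets of $\mathbb{R}^n$ is a commutative and associative operation, inherited directly from the commutativity and associativity of vector addition in $\mathbb{R}^n$. Note that nothing about the summands being line segments is actually used: the identity $S = P \tilde{+} V$ holds for arbitrary sets $S_1,\dots,S_n \subset \mathbb{R}^n$, and the line-segment phrasing is only retained because this is the case of interest for zonotopes.

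First I would record the two algebraic properties. For any $P, Q \subset \mathbb{R}^n$, commutativity $P \tilde{+} Q = Q \tilde{+} P$ is immediate, since $\{p+q : p \in P, q \in Q\} = \{q+p : q \in Q, p \in P\}$. For any $P,Q,T \subset \mathbb{R}^n$, associativity $(P \tilde{+} Q)\tilde{+} T = P \tilde{+}(Q \tilde{+} T)$ holds because both sides equal $\{p+q+t : p \in P, q \in Q, t \in T\}$ by associativity of vector addition. Together these show that the expression $S_1 \tilde{+} \dots \tilde{+} S_n$ is well defined independently of the ordering of the summands and of any bracketing, and that $\{\mathbf{0}\}$ is the identity for $\tilde{+}$.

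Next, given complementary index sets $C_1$ and $C_2$ partitioning $\{1,\dots,n\}$, I would use commutativity to permute the summands so that all $S_j$ with $j \in C_1$ come first, followed by all $S_j$ with $j \in C_2$, and then use associativity to bracket the two blocks, obtaining $S = \big(\tilde{+}_{j \in C_1} S_j\big)\tilde{+}\big(\tilde{+}_{j \in C_2} S_j\big) = P \tilde{+} V$. To make the reordering airtight I would run an induction on $n$: the case $n=1$ (one of $C_1,C_2$ empty, the empty Minkowski sum read as $\{\mathbf{0}\}$) is trivial; in the inductive step one peels off $S_n$, applies the inductive hypothesis to $S_1 \tilde{+} \dots \tilde{+} S_{n-1}$ with the induced partition of $\{1,\dots,n-1\}$, and then absorbs $S_n$ into whichever block contains the index $n$ via a single use of commutativity and associativity. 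There is essentially no obstacle here; the only care needed is bookkeeping — fixing the convention for the Minkowski sum over an index set (including the empty set mapping to $\{\mathbf{0}\}$) so that degenerate partitions are covered, after which the claim is a direct unfolding of the definition in Fact 1.
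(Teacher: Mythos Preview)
Your argument is correct: the claim reduces immediately to the commutativity and associativity of the Minkowski sum, which are inherited from vector addition, and your induction together with the convention that the empty Minkowski sum is $\{\mathbf{0}\}$ handles the degenerate partitions cleanly. The paper does not actually supply a proof of this statement --- it is listed among the preliminary Facts and taken for granted --- so there is no alternative approach to compare against; your write-up simply fills in the routine justification the paper omits.
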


\begin{definition}\label{def:upperfaces}
\textbf{Upper Face of a Polytope $P$}: $\text{UF}(P)$ is an upper face of polytope $P$ in $\mathbb{R}^n$ if $\,\mathbf{x} + t \mathbf{e}_{n} \notin P$ for any $\mathbf{x} \in \text{UF}(P)$, $t > 0$ where $\mathbf{e}_n$ is a canonical vector. Formally, 
\begin{align*}
    \text{UF}(P) = \{\mathbf{x}: \,\, \mathbf{x} \in P,\,\, \mathbf{x} + t \mathbf{e}_{n} \notin P \quad \forall t > 0 \}
\end{align*}
\end{definition}

\section*{Examples}
We revise the second example in Figure \ref{fig:examples_tg}. Note that the two dimensional tropical polynomial $f(x,y)$ can be written as follows:

\begin{align*}
&f(x,y) \\
& = (x \oplus y \oplus 0) \odot ((x \varobslash 1) \oplus (y \odot 1) \oplus 0 ) \\
& = (x \odot x \varobslash 1) \oplus (x \odot y \odot 1) \oplus (x \odot 0) \oplus (y \odot x \varobslash 1) \\
&  \oplus (y \odot y \odot 1)  \oplus (y \odot 0) \oplus (0 \odot x \varobslash 1) \oplus (0 \odot y \odot 1) \oplus (0 \odot 0) \\
& = (x^2 \varobslash 1) \oplus (x \odot y \odot 1) \oplus (x) \\
& \oplus (y \odot x \varobslash 1) \oplus (y^2 \odot 1)  \oplus (y) \oplus (x \varobslash 1) \oplus (y \odot 1) \oplus (0) \\
& = (x^2 \varobslash 1) \oplus (x \odot y \odot 1) \oplus (x) \oplus  (y^2 \odot 1) \oplus (y \odot 1) \oplus (0) \\
& = (x^2 \odot y^0  \varobslash 1) \oplus (x \odot y \odot 1) \oplus (x \odot y^0 \odot 0) \\
& \oplus  (x^0 \odot y^2 \odot 1) \oplus (x^0 \odot y \odot 1) \oplus (x^0 \odot y^0 \odot 0) \\
\end{align*}

\noindent First equality follows since multiplication is distributive in rings and semi rings. The second equality follows since $0$ is the multiplication identity. The penultimate equality follows since $y \odot 1 \ge y$, $x \odot y \odot 1 \ge x \odot y \varobslash 1$ and $x \ge x\varobslash 1$ $\forall ~ x,y$. Therefore, the tropical hypersurface $\mathcal{T}(f)$ is defined as the of $(x,y)$ where $f$ achieves its maximum at least twice in its monomials. That is to say,

\begin{align*}
    &\mathcal{T}(f) = \\
    & \quad \{f(x,y) = (x^2 \varobslash 1) = (x \odot y \odot 1)\} \cup  \\
    & \quad \{f(x,y) = x^2 \varobslash 1 = x\}  \cup \\
    &  \quad \{(f(x,y) = x = 0\} \cup   \{f(x,y) = x = x \odot y \odot 1\}  \cup \\
    &  \quad \{f(x,y) = y \odot 1 = 0\} \cup  \{f(x,y) = y \odot 1 = x \odot y \odot 1\}  \cup \\
    &  \quad \{f(x,y) = y \odot 1 = y^2 \odot 1\} \cup \{f(x,y) = y^2 \odot 1 = x \odot y \odot 1\}.
\end{align*}

This set $\mathcal{T}(f)$ is shown by the red lines in the second example in Figure \ref{fig:examples_tg}. As for constructing the dual subdivision $\delta(f)$, we project the upperfaces in the newton polygon $\mathcal{P}(f)$ to $\mathbb{R}^2$. Note that $\mathcal{P}(f)$ with biases as per the definition in Section \ref{prelim_tg} is given as $\mathcal{P}(f) = \text{ConvHull}\{(\mathbf{a}_i,c_i)~\in \mathbb{R}^2 \times \mathbb{R}~\forall i=1,\dots,6\}$ where $(\mathbf{a}_i,c_i)$ are the exponents and biases in the monomials of $f$, respectively. Therefore, $\mathcal{P}(f)$ is the ConvexHull of the points $\{(2,0,-1), (1,1,1),(1,0,0),(0,2,1),(0,1,1),(0,0,0)\}$ as shown in Figure \ref{fig:example_append}(left). As per Definition \ref{def:upperfaces}, the set of upper faces of $\mathcal{P}$ is: 
\begin{align*}
\text{UP}(\mathcal{P}(f)) = ~ &\text{ConvHull} \{(0,2,1), (1,1,1), (0,1,1)\} \cup \\
&  \text{ConvHull} \{(0,1,1), (1,1,1), (1,0,0)\} \cup \\
& \text{ConvHull}\{(0,1,1), (1,0,0), (0,0,0)\} \cup \\
& \text{ConvHull}\{(1,1,1), (2,0,-1), (1,0,0)\}.
\end{align*}
This set $\text{UP}(\mathcal{P}(f))$ is then projected, through $\pi$, to $\mathbb{R}^2$ shown in the yellow dashed lines in Figure \ref{fig:example_append}(left) to construct the dual subdivision $\delta(f)$ in Figure \ref{fig:example_append}(right). For example, note that the point $(0,2,1) \in \text{UF}(f)$ and thereafter, $\pi(0,2,1) = (0,2,0) \in \delta(f)$.

\section*{Proof Of Theorem 2}
\begin{theo}
For a bias-free neural network in the form of $f(\mathbf{x}):\mathbb{R}^n \rightarrow \mathbb{R}^2$ where $\mathbf{A} \in \mathbb{Z}^{p \times n}$ and $\mathbf{B} \in \mathbb{Z}^{2 \times p}$, let $R(\mathbf{x}) = H_1(\mathbf{x}) \odot Q_2(\mathbf{x}) \oplus H_2(\mathbf{x}) \odot Q_1(\mathbf{x})$ be a tropical polynomial. Then:
\begin{itemize}
    \item Let $\mathcal{B} = \{\mathbf{x} \in \mathbb{R}^n: f_1(\mathbf{x}) = f_2(\mathbf{x})\}$ define the decision boundaries of $f$, then $\mathcal{B} \subseteq \mathcal{T}\left(R(\mathbf{x})\right)$. 
    \item $\delta \left(R(\mathbf{x})\right) = \text{ConvHull}\left(\mathcal{Z}_{\mathbf{G}_1},\mathcal{Z}_{\mathbf{G}_2}\right)$. $\mathcal{Z}_{\mathbf{G}_1}$ is a zonotope in $\mathbb{R}^n$ with line segments  ${\{(\mathbf{B}^+(1,j)  + \mathbf{B}^-(2,j))[\mathbf{A}^+(j,:), \mathbf{A}^-(j,:)]}\}^p_{j=1}$ and shift $(\mathbf{B}^-(1,:)  + \mathbf{B}^+(2,:))\mathbf{A}^-$. $\mathcal{Z}_{\mathbf{G}_2}$ is a zonotope in $\mathbb{R}^n$ with line segments ${\{(\mathbf{B}^-(1,j) +   \mathbf{B}^+(2,j))[\mathbf{A}^+(j,:), \mathbf{A}^-(j,:)]} \}^p_{j=1}$ and shift $(\mathbf{B}^+(1,:)  + \mathbf{B}^-(2,:))\mathbf{A}^-$. The line segment $(\mathbf{B}^+(1,j)  + \mathbf{B}^-(2,j))[\mathbf{A}^+(j,:), \mathbf{A}^-(j,:)]$ has end points $\mathbf{A}^+(j,:)$ and $\mathbf{A}^-(j,:)$ in $\mathbb{R}^n$ and scaled by $(\mathbf{B}^+(1,j)  + \mathbf{B}^-(2,j))$. 
\end{itemize}
Note that $\mathbf{A}^+ = \max(\mathbf{A},0)$ and $\mathbf{A}^- =  \max(-\mathbf{A},0)$ where the $\max(.)$ is element-wise. The line segment $(\mathbf{B}(1,j)^+  + \mathbf{B}(2,j)^-)[\mathbf{A}(j,:)^+, \mathbf{A}(j,:)^-]$ is one that has the end points $\mathbf{A}(j,:)^+$ and $\mathbf{A}(j,:)^-$ in $\mathbb{R}^n$ and scaled by the constant $\mathbf{B}(1,j)^+  + \mathbf{B}(2,j)^-$.
\end{theo}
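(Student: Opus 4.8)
The statement splits into two independent claims that I would treat separately. The inclusion $\mathcal{B}\subseteq\mathcal{T}(R(\mathbf{x}))$ is essentially a symbolic identity about tropical rational functions, while the identification of $\delta(R(\mathbf{x}))$ with $\text{ConvHull}(\mathcal{Z}_{\mathbf{G}_1},\mathcal{Z}_{\mathbf{G}_2})$ is a Newton-polytope computation built on Facts~\ref{fact:supp_polytope_power}--\ref{fact:supp_fact4}.

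\textbf{The inclusion.} By Theorem~\ref{nn_as_tropical_rational} I would fix tropical polynomials with $f_i(\mathbf{x})=H_i(\mathbf{x})-Q_i(\mathbf{x})$. For $\mathbf{x}_0\in\mathcal{B}$ the equality $f_1(\mathbf{x}_0)=f_2(\mathbf{x}_0)$ rearranges to $H_1(\mathbf{x}_0)+Q_2(\mathbf{x}_0)=H_2(\mathbf{x}_0)+Q_1(\mathbf{x}_0)$, i.e.\ the two tropical polynomials $H_1\odot Q_2$ and $H_2\odot Q_1$ (whose tropical sum is $R$) take the same value at $\mathbf{x}_0$. Consequently $R(\mathbf{x}_0)$ equals that common value and is attained by a monomial arising from the expansion of $H_1\odot Q_2$ and also by one arising from $H_2\odot Q_1$. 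I would then argue that these two monomials have distinct exponents, so that Definition~\ref{def:tropical_hypersurface} applies and $\mathbf{x}_0\in\mathcal{T}(R(\mathbf{x}))$, by observing that $\mathbf{x}_0$ lies on the sign-change locus of $f_1-f_2=(H_1\odot Q_2)\varobslash(H_2\odot Q_1)$, so the two summands cannot be carried by the same affine piece in a neighborhood of $\mathbf{x}_0$.

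\textbf{The dual subdivision.} First I would put the network in tropical form: splitting $\mathbf{A}(j,:)=\mathbf{A}^+(j,:)-\mathbf{A}^-(j,:)$ and tropically multiplying by $\mathbf{x}^{\mathbf{A}^-(j,:)}$ gives
\[
\max\bigl(\mathbf{A}(j,:)\mathbf{x},0\bigr)=\bigl(\mathbf{x}^{\mathbf{A}^+(j,:)}\oplus\mathbf{x}^{\mathbf{A}^-(j,:)}\bigr)\varobslash\mathbf{x}^{\mathbf{A}^-(j,:)} .
\]
Taking the $\mathbf{B}(i,j)$-weighted tropical combination over $j$ and splitting $\mathbf{B}(i,j)=\mathbf{B}^+(i,j)-\mathbf{B}^-(i,j)$, I would read off $H_i$ and $Q_i$ by collecting the additive (numerator) and subtractive (denominator) contributions: each becomes a tropical product of powers of the binomials $\mathbf{x}^{\mathbf{A}^+(j,:)}\oplus\mathbf{x}^{\mathbf{A}^-(j,:)}$ and of the monomials $\mathbf{x}^{\mathbf{A}^-(j,:)}$. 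Since $f$ is bias-free every coefficient vanishes, so $\delta(R(\mathbf{x}))=\Delta(R(\mathbf{x}))$ and Facts~\ref{fact:supp_polytope_power}--\ref{fact:supp_fact4} restrict to the corresponding statements for $\Delta$. Using $\Delta(f\oplus g)=\text{ConvHull}(\Delta(f)\cup\Delta(g))$, $\Delta(f\odot g)=\Delta(f)\tilde{+}\Delta(g)$, $\Delta(f^a)=a\Delta(f)$, together with $\Delta(\mathbf{x}^{\mathbf{a}})=\{\mathbf{a}\}$ and $\Delta(\mathbf{x}^{\mathbf{A}^+(j,:)}\oplus\mathbf{x}^{\mathbf{A}^-(j,:)})=[\mathbf{A}^+(j,:),\mathbf{A}^-(j,:)]$, I would compute $\Delta(H_1\odot Q_2)=\Delta(H_1)\tilde{+}\Delta(Q_2)$ as the Minkowski sum over $j$ of the two parallel segments $\mathbf{B}^+(1,j)[\mathbf{A}^+(j,:),\mathbf{A}^-(j,:)]$ and $\mathbf{B}^-(2,j)[\mathbf{A}^+(j,:),\mathbf{A}^-(j,:)]$, translated by $\sum_j(\mathbf{B}^-(1,j)+\mathbf{B}^+(2,j))\mathbf{A}^-(j,:)$. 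Merging the two parallel segments for each $j$ into the single segment $(\mathbf{B}^+(1,j)+\mathbf{B}^-(2,j))[\mathbf{A}^+(j,:),\mathbf{A}^-(j,:)]$ recovers exactly the generators and shift of $\mathcal{Z}_{\mathbf{G}_1}$; swapping the roles of the two rows of $\mathbf{B}$ gives $\Delta(H_2\odot Q_1)=\mathcal{Z}_{\mathbf{G}_2}$. Applying the tropical-sum fact once more yields $\delta(R(\mathbf{x}))=\Delta(R(\mathbf{x}))=\text{ConvHull}(\mathcal{Z}_{\mathbf{G}_1},\mathcal{Z}_{\mathbf{G}_2})$.

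\textbf{Anticipated difficulty.} I expect the bottleneck to be bookkeeping rather than conceptual depth: in the second part one must track the $\pm$-decompositions of both layers simultaneously so that the segment scalings $\mathbf{B}^+(1,j)+\mathbf{B}^-(2,j)$ and the accumulated shift $(\mathbf{B}^-(1,:)+\mathbf{B}^+(2,:))\mathbf{A}^-$ (and their swapped counterparts for $\mathcal{Z}_{\mathbf{G}_2}$) emerge with precisely the claimed signs, since any slip destroys the identification. A secondary subtlety, in the first part, is the justification that the two monomials realizing $R(\mathbf{x}_0)$ on the decision boundary are genuinely distinct.
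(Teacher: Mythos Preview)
Your proposal is correct and follows essentially the same route as the paper: rearranging $f_1=f_2$ into $H_1\odot Q_2=H_2\odot Q_1$ for the inclusion, and then applying Facts~\ref{fact:supp_polytope_power}--\ref{fact:supp_fact4} to the explicit $\pm$-decomposition of the two layers to compute $\delta(R)$ as the convex hull of two Minkowski sums of segments plus a shift. The only cosmetic differences are that the paper merges $H_1+Q_2$ into a single expression \emph{before} taking Newton polytopes (rather than Minkowski-summing $\Delta(H_1)$ and $\Delta(Q_2)$ and then collapsing parallel segments as you do), and that the paper simply asserts $\mathcal{T}(R)=\mathcal{T}(H_1\odot Q_2)\cup\mathcal{T}(H_2\odot Q_1)\cup\mathcal{B}$ without addressing the monomial-distinctness subtlety you flag---so your caution there is warranted, though your sign-change argument does not fully close the gap either.
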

\begin{proof}
For the first part, recall from Theorem\ref{nn_as_tropical_rational} that both $f_1$ and $f_2$ are tropical rationals and hence, 
\begin{align*}
    f_1(\mathbf{x}) = H_1(\mathbf{x}) - Q_1(\mathbf{x}) \qquad f_2(\mathbf{x}) = H_2(\mathbf{x}) - Q_2(\mathbf{x}) 
\end{align*}
Thus;
\begin{align*}
    \mathcal{B} &= \{x \in \mathbb{R}^n: f_1(\mathbf{x}) = f_2(\mathbf{x})\} \\
    &= \{ x \in \mathbb{R}^n: H_1(\mathbf{x}) - Q_1(\mathbf{x}) = H_2(\mathbf{x}) - Q_2(\mathbf{x}) \} \\
    &= \{ x \in \mathbb{R}^n: H_1(\mathbf{x}) + Q_2(\mathbf{x}) = H_2(\mathbf{x}) + Q_1(\mathbf{x}) \} \\
    &= \{ x \in \mathbb{R}^n: H_1(\mathbf{x}) \odot Q_2(\mathbf{x}) = H_2(\mathbf{x}) \odot Q_1(\mathbf{x}) \} \\
\end{align*}
Recall that the tropical hypersurface is defined as the set of $\mathbf{x}$ where the maximum is attained by two or more monomials. Therefore, the tropical hypersurface of $R(\mathbf{x})$ is the set of $\mathbf{x}$ where the maximum is attained by two or more monomials in ($H_1(\mathbf{x}) \odot Q_2(\mathbf{x})$), or attained by two or more monomials in ($H_2(\mathbf{x}) \odot Q_1(\mathbf{x})$), or attained by monomials in both of them in the same time, which is the decision boundaries. Hence, we can rewrite that as
\begin{align*}
\mathcal{T}(R(\mathbf{x})) = \mathcal{T}(H_1(\mathbf{x}) \odot Q_2(\mathbf{x})) \cup \mathcal{T}(H_2(\mathbf{x}) \odot Q_1(\mathbf{x})) \cup \mathcal{B}.
\end{align*}

Therefore $\mathcal{B} \subseteq \mathcal{T}\left(R(x)\right)$. For the second part of the Theorem, we first use the decomposition proposed by \cite{zhang2018tropical,berrada2016trusting} to show that for a network $f(\mathbf{x}) = \mathbf{B}\max\left(\mathbf{A}\mathbf{x},\mathbf{0}\right)$, it can be decomposed as tropical rational as follows
\begin{align*}
    f(\mathbf{x}) &= \left(\mathbf{B}^+ - \mathbf{B}^-\right)\Big(\max(\mathbf{A}^+\mathbf{x},\mathbf{A}^-\mathbf{x})  - \mathbf{A}^-\mathbf{x}\Big) \\
    & = \Big[\mathbf{B}^+\max(\mathbf{A}^+\mathbf{x},\mathbf{A}^-\mathbf{x})+ \mathbf{B}^-\mathbf{A}^-\mathbf{x} \Big] \\&-  \Big[\mathbf{B}^-\max(\mathbf{A}^+\mathbf{x},\mathbf{A}^-\mathbf{x})+ \mathbf{B}^+\mathbf{A}^-\mathbf{x}\Big].
\end{align*}
Therefore, we have that
\begin{align*}
    H_1(\mathbf{x}) + Q_2(\mathbf{x}) &= \Big(\mathbf{B}^+(1,:) + \mathbf{B}^-(2,:)\Big)\max(\mathbf{A}^+\mathbf{x},\mathbf{A}^-\mathbf{x}) \\& + \Big(\mathbf{B}^-(1,:) + \mathbf{B}^+(2,:)\Big)\mathbf{A}^-\mathbf{x},
    \end{align*}
    \begin{align*}
         H_2(\mathbf{x}) + Q_1(\mathbf{x}) &= \Big(\mathbf{B}^-(1,:) + \mathbf{B}^+(2,:)\Big)\max(\mathbf{A}^+\mathbf{x},\mathbf{A}^-\mathbf{x}) \\&+ \Big(\mathbf{B}^+(1,:) + \mathbf{B}^-(2,:)\Big)\mathbf{A}^-\mathbf{x}.   
    \end{align*}

Therefore, note that:
\begin{align*}
    \delta(R(\mathbf{x})) &=  \delta\Bigg(\Big(H_1(\mathbf{x}) \odot Q_2(\mathbf{x}) \Big)  \oplus \Big( H_2(\mathbf{x}) \odot Q_1(\mathbf{x})\Big)\Bigg) \\
    &\stackrel{\ref{fact:polytope_plus}}{=} \text{ConvexHull}\Bigg(\delta\Big(H_1(\mathbf{x}) \odot Q_2(\mathbf{x})\Big), \\
    & \qquad \qquad \qquad \qquad \delta\Big(H_2(\mathbf{x}) \odot Q_1(\mathbf{x})\Big)\Bigg) \\
    &\stackrel{\ref{fact:polytope_dot}}{=} \text{ConvexHull}\Bigg(\delta\Big(H_1(\mathbf{x})\Big) \tilde{+} \delta\Big(Q_2(\mathbf{x})\Big), \\
    & \qquad \qquad \qquad \qquad \delta\Big(H_2(\mathbf{x})\Big) \tilde{+} \delta\Big(Q_1(\mathbf{x})\Big)\Bigg). \\
\end{align*}
Now observe that $H_1(\mathbf{x}) = \sum_{j=1}^p\Big(\mathbf{B}^+(1,j) + \mathbf{B}^-(2,j)\Big)\max\Big(\mathbf{A}^+(j,:),\mathbf{A}^-(j,:)\mathbf{x}\Big)$ tropically is given as follows $H_1(\mathbf{x}) = \odot_{j=1}^p \Big[\mathbf{x}^{\mathbf{A}^+(j,:)} \oplus \mathbf{x}^{\mathbf{A}^-(j,:)}\Big]^{\mathbf{B}^+(1,j) \odot \mathbf{B}^-(2,j)}$, thus we have that :
\begin{align*}
    &\delta(H_1(\mathbf{x})) \\ & = \tilde{+}_{j=1}^p \Bigg[ \Big(\mathbf{B}^+(1,p) + \mathbf{B}^-(2,p)\Big)  \Big(\delta(\mathbf{x}^{\mathbf{A}^+(p,:)} \oplus \mathbf{x}^{\mathbf{A}^-(p,:)})\Big)\Bigg]\\
    & =  \tilde{+}_{j=1}^p \Bigg[ \Big(\mathbf{B}^+(1,p) + \mathbf{B}^-(2,p)\Big)\\
    & \qquad \qquad \qquad \text{ConvexHull}\Big(\mathbf{A}^+(p,:),\mathbf{A}^-(p,:)\Big)\Bigg].
\end{align*}
The operator $\tilde{+}_{j=1}^p$ indicates a Minkowski sum over the index $j$ between sets. Note that $\text{ConvexHull}\Big(\mathbf{A}^+(i,:),\mathbf{A}^-(i,:)\Big)$ is the convexhull between two points which is a line segment in $\mathbb{Z}^n$ with end points that are $\{\mathbf{A}^+(i,:), \mathbf{A}^-(i,:)\}$ scaled with $\mathbf{B}^+(1,i)+\mathbf{B}^-(2,i)$. Observe that $\delta(F_1(\mathbf{x}))$ is a Minkowski sum of line segments which is is a zonotope. Moreover, note that $Q_2(\mathbf{x}) = (\mathbf{B}^-(1,:) + \mathbf{B}^+(2,:))\mathbf{A}^-\mathbf{x}$  tropically is given as follows $Q_2(\mathbf{x}) = \odot_{j=1}^p \mathbf{x}^{\mathbf{A}^-(j,:)^{(\mathbf{B}^+(1,j) \odot \mathbf{B}^-(2,j))}}$. One can see that $\delta(Q_2(\mathbf{x}))$ is the Minkowski sum of the points $\{(\mathbf{B}^-(1,j) - \mathbf{B}^+(2,j))\mathbf{A}^-(j,:)\} \forall j$ in $\mathbb{R}^n$ (which is a standard sum) resulting in a point. Lastly, $\delta(H_1(\mathbf{x})) \tilde{+} \delta(Q_2(\mathbf{x}))$ is a Minkowski sum between a zonotope and a single point which corresponds to a shifted zonotope. A similar symmetric argument can be applied for the second part $\delta(H_2(\mathbf{x})) \tilde{+} \delta(Q_1(\mathbf{x}))$.
\end{proof}

The extension to multi class output is trivial. The analysis can be exactly applied studying the decision boundary between any two classes $(i,j)$ where $\mathcal{B} = \{x \in \mathbb{R}^n: f_i(\mathbf{x}) = f_j(\mathbf{x})\}$ and the rest of the proof follow identically to before.

\begin{figure*}[ht]
\centering
    \includegraphics[scale =0.6]{./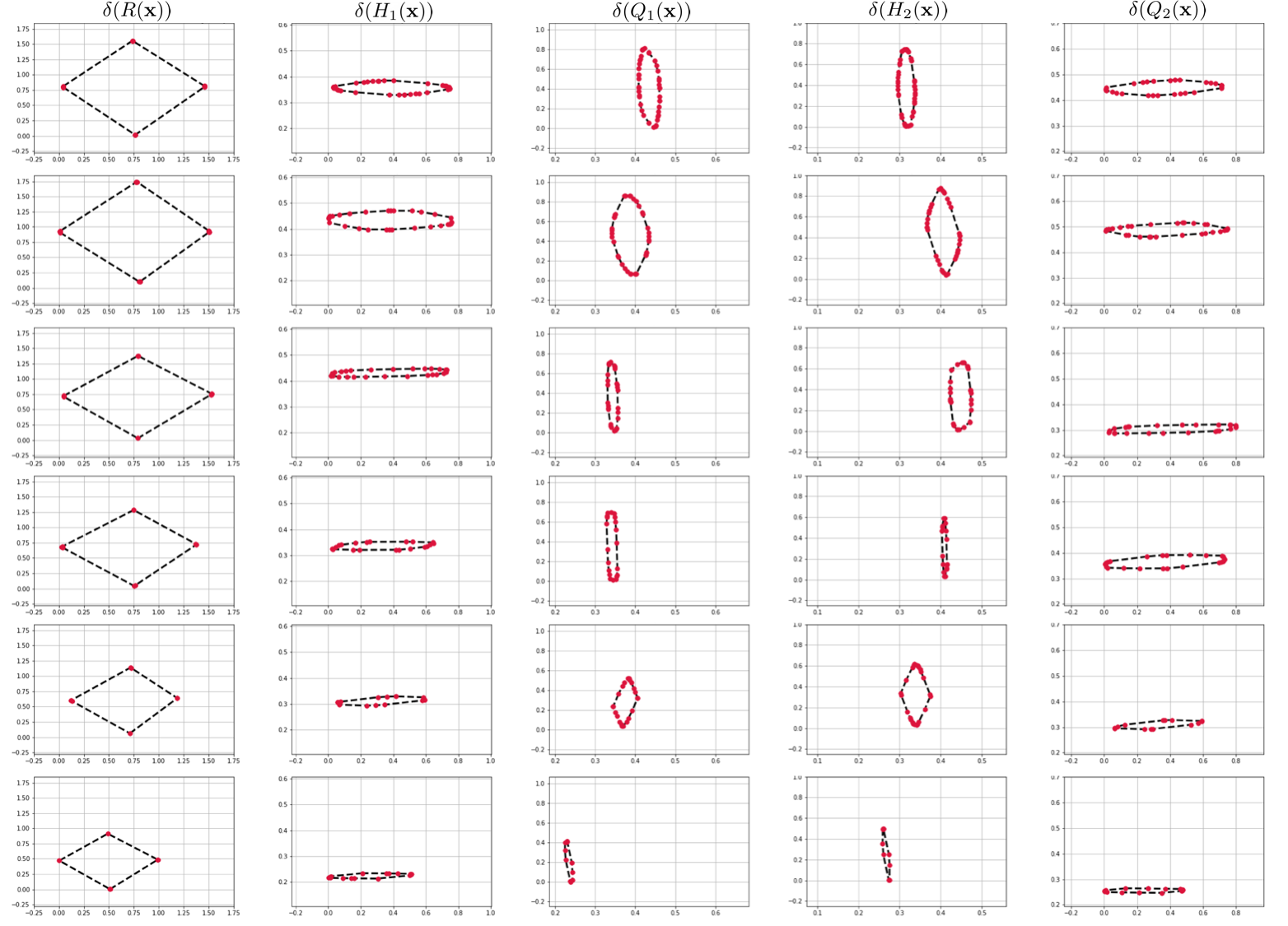}
  \caption{\textbf{Comparison between the decision boundaries polytope and the polytopes representing the functional representation of the network.} First column: decision boundaries polytope $\delta(R(\mathbf{x}))$ while the remainder of the columns are the zonotopes $\delta(H_1(\mathbf{x}))$, $\delta(Q_1(\mathbf{x}))$, $\delta(H_2(\mathbf{x}))$ and $\delta(Q_2(\mathbf{x}))$ respectively. Under varying pruning rate across the rows, it is to be observed that the changes that affected the dual subdivisions of the functional representations are far smaller compared to the decision boundaries polytope.}
  \label{fig:supp_zonotopesvspolytope}
  \vspace{-0.35cm}
\end{figure*}

\section*{Proof of Proposition 1}
\begin{prop}
The zonotope formed by $p$ line segments in $\mathbb{R}^n$ with arbitrary end points $\{[\mathbf{u}_1^i,\mathbf{u}_2^i]\}_{i=1}^p$ is equivalent to the zonotope formed by the line segments $\{[\mathbf{u}_1^i-\mathbf{u}_2^i,\mathbf{0}]\}_{i=1}^p$  with a shift of  $\sum_{i=1}^p\mathbf{u}_2^i$.
\end{prop}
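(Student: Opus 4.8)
The plan is to work directly from the Minkowski-sum characterization of a zonotope recalled just before Definition~\ref{def:zonotope} and in the appendix facts. First I would write each line segment $[\mathbf{u}_1^i,\mathbf{u}_2^i]$ in affine-parametric form: a point of the segment is $\mathbf{u}_2^i + \alpha_i(\mathbf{u}_1^i - \mathbf{u}_2^i)$ with $\alpha_i \in [0,1]$. This is the key reindexing — pinning the base point of the segment at $\mathbf{u}_2^i$ rather than at the origin — and everything else is bookkeeping.

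Next I would form the Minkowski sum of the $p$ segments. A generic point of $S_1 \,\tilde{+}\, \dots \,\tilde{+}\, S_p$ is $\sum_{i=1}^p \big(\mathbf{u}_2^i + \alpha_i(\mathbf{u}_1^i-\mathbf{u}_2^i)\big)$ for $(\alpha_1,\dots,\alpha_p)\in[0,1]^p$. Splitting the sum gives $\big(\sum_{i=1}^p \mathbf{u}_2^i\big) + \sum_{i=1}^p \alpha_i(\mathbf{u}_1^i-\mathbf{u}_2^i)$. The first summand is a fixed vector independent of the $\alpha_i$, namely the claimed shift; the second summand ranges, as $(\alpha_i)$ ranges over $[0,1]^p$, exactly over $\mathcal{Z}(\mathbf{u}_1^1-\mathbf{u}_2^1,\dots,\mathbf{u}_1^p-\mathbf{u}_2^p)$ by Definition~\ref{def:zonotope}, equivalently the Minkowski sum of the segments $\{[\mathbf{u}_1^i-\mathbf{u}_2^i,\mathbf{0}]\}_{i=1}^p$ (each such segment being $\{\alpha_i(\mathbf{u}_1^i-\mathbf{u}_2^i):\alpha_i\in[0,1]\}$). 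Hence the original zonotope equals this origin-anchored zonotope translated by $\sum_{i=1}^p\mathbf{u}_2^i$, which is the assertion. I would phrase the equality of sets as a two-way inclusion only if a referee wants it, but the parametrizations coincide termwise so a single chain of equalities of sets suffices.

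There is no real obstacle here; the only thing to be careful about is consistency of conventions — that $[\mathbf{a},\mathbf{b}]$ denotes the segment with those two endpoints (so $[\mathbf{v},\mathbf{0}]$ is the segment $\{\alpha\mathbf{v}:\alpha\in[0,1]\}$), and that "shift" means Minkowski sum with a singleton. Once those are fixed, the identity $\sum_i\big(\mathbf{u}_2^i+\alpha_i(\mathbf{u}_1^i-\mathbf{u}_2^i)\big) = \sum_i\mathbf{u}_2^i + \sum_i\alpha_i(\mathbf{u}_1^i-\mathbf{u}_2^i)$ is the whole content of the proof, and it is just linearity of summation over a finite index set. If desired, one can also note this is a special case of the appended fact that a Minkowski sum of segments factors as $P\,\tilde{+}\,V$ over complementary partitions, here with $V$ the (degenerate) sum of the constant base points.
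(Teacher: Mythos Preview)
Your proposal is correct and is essentially the same argument as the paper's: both parametrize each segment as $\mathbf{u}_2^i+\alpha_i(\mathbf{u}_1^i-\mathbf{u}_2^i)$ (the paper writes this as $w_i\mathbf{u}_1^i+(1-w_i)\mathbf{u}_2^i$), sum over $i$, and split into the constant shift $\sum_i\mathbf{u}_2^i$ plus the origin-anchored zonotope of Definition~\ref{def:zonotope}. The only difference is cosmetic: the paper packages the computation in matrix notation ($\mathbf{U}_1,\mathbf{U}_2,\mathbf{w}$) while you keep it in coordinate sums.
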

\begin{proof}
Let $\mathbf{U}_j$ be a matrix with $ \mathbf{U}_j(:,i) =\mathbf{u}_j^i,  i=1,\dots,p$, $\mathbf{w}$ be a column-vector with $\mathbf{w}(i) = w_i, i=1,\dots,p$ and $\mathbf{1}_p$ is a column-vector of ones of length $p$. Then, the zonotope $\mathcal{Z}$ formed by the Minkowski sum of line segments with arbitrary end points can be defined as:

\begin{align*}
   \mathcal{Z} &=  \Big\{\sum_{i=1}^p w_i \mathbf{u}_1^i + (1-w_i) \mathbf{u}_2^i; w_i \in [0,1],  ~\forall~ i\Big\}\\ &= \Big\{\mathbf{U}_1\mathbf{w} - \mathbf{U}_2\mathbf{w} + \mathbf{U}_2\mathbf{1}_p, ~~ \mathbf{w} \in [0,1]^p\Big\} \\
    & = \Big\{\left(\mathbf{U}_1 - \mathbf{U}_2\right)\mathbf{w} + \mathbf{U}_2\mathbf{1}_p, ~~ \mathbf{w} \in [0,1]^p\Big\} \\
    & = \Big\{\left(\mathbf{U}_1 - \mathbf{U}_2\right)\mathbf{w}, ~~ \mathbf{w} \in [0,1]^p\Big\} \tilde{+} \Big\{\mathbf{U}_2\mathbf{1}_p\Big\}.
\end{align*}

Since the Minkowski sum of between a polytope and a point is a translation; thereafter, the proposition follows directly from Definition \ref{def:zonotope}.
\end{proof}

\begin{Corollary}
The generators of $\mathcal{Z}_{\mathbf{G}_1},\mathcal{Z}_{\mathbf{G}_2}$ in Theorem \ref{theo:dec_bound_prop} can be defined as ${\mathbf{G}}_1 = \text{Diag}[({\mathbf{B}}^+(1,:)) + ({\mathbf{B}}^-(2,:))] {\mathbf{A}}$ and ${\mathbf{G}}_2 = \text{Diag}[({\mathbf{B}}^+(2,:)) + ({\mathbf{B}}^-(1,:))] {\mathbf{A}}$, both with shift $\left(\mathbf{B}^-(1,:) + \mathbf{B}^+(2,:) + \mathbf{B}^+(1,:) + \mathbf{B}^-(2,:)\right)\mathbf{A}^-$, where $\text{Diag}(\mathbf{v})$ arranges $\mathbf{v}$ in a diagonal matrix.
\end{Corollary}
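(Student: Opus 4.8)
The plan is to obtain the Corollary as a direct consequence of Theorem \ref{theo:dec_bound_prop} together with Proposition \ref{prop:arbit_line_segments_minkowski}: Theorem \ref{theo:dec_bound_prop} already describes $\mathcal{Z}_{\mathbf{G}_1}$ and $\mathcal{Z}_{\mathbf{G}_2}$ as Minkowski sums of (scaled) line segments with arbitrary endpoints plus a translation, while Proposition \ref{prop:arbit_line_segments_minkowski} tells us how to rewrite any such object using segments based at the origin, which is precisely the generator-matrix form of Definition \ref{def:zonotope}. Concretely, for $\mathcal{Z}_{\mathbf{G}_1}$ the $j$-th segment is $(\mathbf{B}^+(1,j)+\mathbf{B}^-(2,j))[\mathbf{A}^+(j,:),\mathbf{A}^-(j,:)] = [\mathbf{u}_1^j,\mathbf{u}_2^j]$ with $\mathbf{u}_1^j = (\mathbf{B}^+(1,j)+\mathbf{B}^-(2,j))\mathbf{A}^+(j,:)$ and $\mathbf{u}_2^j = (\mathbf{B}^+(1,j)+\mathbf{B}^-(2,j))\mathbf{A}^-(j,:)$; note the scalars are nonnegative (sums of ReLU outputs), so scaling does not reverse a segment and a vanishing scalar merely yields the zero segment, consistent with a zero row of the generator matrix.

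First I would apply Proposition \ref{prop:arbit_line_segments_minkowski} to $\mathcal{Z}_{\mathbf{G}_1}$: each $[\mathbf{u}_1^j,\mathbf{u}_2^j]$ becomes $[\mathbf{u}_1^j-\mathbf{u}_2^j,\mathbf{0}]$ and the representation accumulates the extra translation $\sum_{j=1}^p \mathbf{u}_2^j$. Using $\mathbf{A} = \mathbf{A}^+ - \mathbf{A}^-$ gives $\mathbf{u}_1^j - \mathbf{u}_2^j = (\mathbf{B}^+(1,j)+\mathbf{B}^-(2,j))\mathbf{A}(j,:)$, so by Definition \ref{def:zonotope} the generator matrix is exactly $\mathbf{G}_1 = \text{Diag}[\mathbf{B}^+(1,:)+\mathbf{B}^-(2,:)]\mathbf{A}$, whose $j$-th row is $(\mathbf{B}^+(1,j)+\mathbf{B}^-(2,j))\mathbf{A}(j,:)$. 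For the shift, $\sum_{j=1}^p \mathbf{u}_2^j = (\mathbf{B}^+(1,:)+\mathbf{B}^-(2,:))\mathbf{A}^-$, and adding this to the translation $(\mathbf{B}^-(1,:)+\mathbf{B}^+(2,:))\mathbf{A}^-$ already carried by $\mathcal{Z}_{\mathbf{G}_1}$ in Theorem \ref{theo:dec_bound_prop} yields the total shift $(\mathbf{B}^-(1,:)+\mathbf{B}^+(2,:)+\mathbf{B}^+(1,:)+\mathbf{B}^-(2,:))\mathbf{A}^-$ stated in the Corollary. The computation for $\mathcal{Z}_{\mathbf{G}_2}$ is identical after exchanging the two rows of $\mathbf{B}$, producing $\mathbf{G}_2 = \text{Diag}[\mathbf{B}^+(2,:)+\mathbf{B}^-(1,:)]\mathbf{A}$ and, since the four-term shift is symmetric under that exchange, the same shift.

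The only step that requires genuine care is the shift bookkeeping: one must keep in mind that the zonotopes in Theorem \ref{theo:dec_bound_prop} are already translated copies (the translation being the single-point Newton-polytope contribution of the $Q$-factor in each branch), and that Proposition \ref{prop:arbit_line_segments_minkowski} introduces a \emph{second} translation when the segments are re-based at the origin; the claimed four-term shift is exactly the sum of these two contributions. Everything else --- the identity $\mathbf{A}^+-\mathbf{A}^- = \mathbf{A}$, reading off the rows of a diagonally scaled matrix as generators, and the symmetry between the two zonotopes --- is routine substitution, so I do not anticipate any further obstacle.
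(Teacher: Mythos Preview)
Your proposal is correct and follows exactly the approach of the paper, which simply states that the Corollary follows directly by applying Proposition~\ref{prop:arbit_line_segments_minkowski} to the second bullet point of Theorem~\ref{theo:dec_bound_prop}. In fact, you supply considerably more detail than the paper's one-line proof, in particular the explicit shift bookkeeping showing that the original translation from Theorem~\ref{theo:dec_bound_prop} plus the extra translation from Proposition~\ref{prop:arbit_line_segments_minkowski} combine to the stated four-term shift.
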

\begin{proof}
This follows directly by applying Proposition \ref{prop:arbit_line_segments_minkowski} to the second bullet point of Theorem \ref{theo:dec_bound_prop}. 
\end{proof}

\section*{Tropical Lottery Ticket Hypothesis: Supplemental Experiments}

A natural question is whether it is necessary to visualize the dual subdivision polytope of the decision boundaries, \ie $\delta(R(\mathbf{x}))$, where $R(\mathbf{x}) = H_1(\mathbf{x}) \odot Q_2(\mathbf{x}) \oplus H_2(\mathbf{x}) \odot Q_1(\mathbf{x})$ as opposed to visualizing the tropical polynomials $\delta(H_{\{1,2\}}(\mathbf{x}))$ and $\delta(Q_{\{1,2\}}(\mathbf{x}))$ directly for the tropical re-affirmation of the lottery ticket hypothesis. That is similar to asking whether it is necessary to visualize and study the decision boundaries polytope $\delta(R(\mathbf{x}))$ as compared to the the dual subdivision polytope of the functional form of the network since for the 2-output neural network described in Theorem \ref{theo:dec_bound_prop} we have that $f_1(\mathbf{x}) = H_1(\mathbf{x}) \varobslash Q_1(\mathbf{x})$ and $f_2(\mathbf{x}) = H_2(\mathbf{x}) \varobslash Q_2(\mathbf{x})$. We demonstrate this with an experiment that demonstrates the differences between these two views. For this purpose, we train a single hidden layer neural network on the same dataset shown in Figure \ref{fig:lotter_ticket}. We perform several iterations of pruning in a similar fashion to Section \ref{sec:pruning} and visualise at each iteration both the decision boundaries polytope and all the dual subdivisions of the aforementioned tropical polynomials representing the functional form of the network, \ie $\delta(H_{\{1,2\}}(\mathbf{x}))$ and $\delta(Q_{\{1,2\}}(\mathbf{x}))$. It is to be observed from Figure \ref{fig:supp_zonotopesvspolytope} that despite that the decision boundaries were barely affected with the lottery ticket pruning, the zonotopes representing the functional form of the network endure large variations. That is to say, investigating the dual subdivisions describing the functional form of the networks through the four zonotopes $\delta(H_{\{1,2\}}(\mathbf{x}))$ and $\delta(Q_{\{1,2\}}(\mathbf{x}))$ is not indicative enough to the behaviour of the decision boundaries.

\section*{Tropical Pruning: Optimization of Objective \ref{eq:prunning_obj} of the Binary Classifier}
\begin{equation}
\begin{aligned}
    &\min_{\tilde{\mathbf{A}} , \tilde{\mathbf{B}}}  ~~~ \frac{1}{2}\left\|\tilde{\mathbf{G}}_1 - \mathbf{G}_1 \right\|_F^{2} +\frac{1}{2} \left\|\tilde{\mathbf{G}}_2 - \mathbf{G}_2 \right\|_F^{2} \\
     & \qquad \qquad + \lambda_1 \left\| {\tilde{\mathbf{G}}_1 } \right\|_{2,1} +  \lambda_2 \left\|{\tilde{\mathbf{G}}_2 } \right\|_{2,1}.
\end{aligned}
\end{equation} 
Note that $\tilde{\mathbf{G}}_1 = \text{Diag}\Big[\text{ReLU}(\tilde{\mathbf{B}}(1,:)) + \text{ReLU}(-\tilde{\mathbf{B}}(2,:)) \Big] \tilde{\mathbf{A}}$, $\tilde{\mathbf{G}}_2 = \text{Diag}\Big[\text{ReLU}(\tilde{\mathbf{B}}(2,:)) + \text{ReLU}(-\tilde{\mathbf{B}}(1,:)) \Big] \tilde{\mathbf{A}}$. Note that $\mathbf{G}_1 = \text{Diag}\Big[\text{ReLU}(\mathbf{B}(1,:)) + \text{ReLU}(-\mathbf{B}(2,:)) \Big] \mathbf{A}$ and $\mathbf{G}_2 = \text{Diag}\Big[\text{ReLU}(\mathbf{B}(2,:)) + \text{ReLU}(-\mathbf{B}(1,:)) \Big] \mathbf{A}$. For ease of notation, we refer to $\text{ReLU}(\tilde{\mathbf{B}}(i,:))$ and $\text{ReLU}(-\tilde{\mathbf{B}}(i,:))$ as $\tilde{\mathbf{B}}^+(i,:)$ and $\tilde{\mathbf{B}}^-(i,:)$, respectively. We solve the problem with co-ordinate descent an alternate over variables.

\noindent \textbf{Updating $\tilde{\mathbf{A}}$}:

\begin{align*}
    \tilde{\mathbf{A}} = & \text{argmin}_{\tilde{\mathbf{A}}} \frac{1}{2}\left\|\text{Diag}\left(\mathbf{c}_1\right)\tilde{\mathbf{A}} - \mathbf{G}_{1}\right\|_F^2 + \frac{1}{2}\left\|\text{Diag}(\mathbf{c}_2) \tilde{\mathbf{A}} - \mathbf{G}_2\right\|_F^2 \\
    & \qquad \qquad + \lambda_1 \left\|\text{Diag}(\mathbf{c}_1)\tilde{\mathbf{A}}\right\|_{2,1} + \lambda_2 \left\|\text{Diag}(\mathbf{c}_2) \tilde{\mathbf{A}}\right\|_{2,1},
\end{align*}

\noindent where $\mathbf{c}_1 = \text{ReLU}(\mathbf{B}(1,:)) + \text{ReLU}(-\mathbf{B}(2,:))$ and $\mathbf{c}_2 = \text{ReLU}(\mathbf{B}(2,:)) + \text{ReLU}(-\mathbf{B}(1,:))$. Note that the problem is separable per-row of $\tilde{\mathbf{A}}$. Therefore, the problem reduces to updating rows of $\tilde{\mathbf{A}}$ independently and the problem exhibits a closed form solution.

\begin{align*}
    &\tilde{\mathbf{A}}(i,:) = \text{argmin}_{\tilde{\mathbf{A}}(i,:)} \frac{1}{2} \left\|\mathbf{c}_1^i \tilde{\mathbf{A}}(i,:) - \mathbf{G}_1(i,:)\right\|_2^2 \\
    & \qquad + \frac{1}{2} \left\|\mathbf{c}_2^i\tilde{\mathbf{A}}(i,:) - \mathbf{G}_2(i,:)\right\|_2^2  \\
    & \qquad + (\lambda_1 \sqrt{\mathbf{c}_1^i} + \lambda_2 \sqrt{\mathbf{c}_2^i}) \left\|\tilde{\mathbf{A}}(i,:)\right\|_2 \\
    & = \text{argmin}_{\tilde{\mathbf{A}}(i,:)} \frac{1}{2} \left\|\tilde{\mathbf{A}}(i,:) - \frac{\mathbf{c}_1^i \mathbf{G}_1(i,:) + \mathbf{c}_2^i \mathbf{G}_2(i,:)}{\frac{1}{2}(\mathbf{c}_1^i + \mathbf{c}_2^i)} \right\|_2^2 \\
    & \qquad \qquad \qquad + \frac{1}{2}\frac{\lambda_1 \sqrt{\mathbf{c}_1^i} + \lambda_2 \sqrt{\mathbf{c}_2^i}}{\frac{1}{2}(\mathbf{c}_1^i + \mathbf{c}_2^i)} \left\|\tilde{\mathbf{A}}(i,:)\right\|_2 \\
    & = \max\left(1 - \frac{1}{2}\frac{\lambda_1\sqrt{\mathbf{c}_1^i}+ \lambda_2 \sqrt{\mathbf{c}_2^i}}{\frac{1}{2}(\mathbf{c}_1^i + \mathbf{c}_2^i)}\frac{1}{\left\|\frac{\mathbf{c}_1^i \mathbf{G}_1(i,:) + \mathbf{c}_2^i \mathbf{G}_2(i,:)}{\frac{1}{2}(\mathbf{c}_1^i + \mathbf{c}_2^i)}\right\|_2},0\right) \\
    & \qquad \qquad \qquad .\left(\frac{\mathbf{c}_1^i \mathbf{G}_1(i,:) + \mathbf{c}_2^i \mathbf{G}_2(i,:)}{\frac{1}{2}(\mathbf{c}_1^i + \mathbf{c}_2^i)}\right).
\end{align*}

\noindent \textbf{Updating $\tilde{\mathbf{B}}^+(1,:)$}:

\begin{align*}
    \tilde{\mathbf{B}}^+(1,:) &= \text{argmin}_{\tilde{\mathbf{B}}^+(1,:)} \frac{1}{2} \left\|\text{Diag}\left(\tilde{\mathbf{B}}^+(1,:)\right) \tilde{\mathbf{A}} - \mathbf{C}_1\right\|_F^2 \\
    & \qquad \qquad + \lambda_1 \left\|\text{Diag}\left(\tilde{\mathbf{B}}^+(1,:)\right)\tilde{\mathbf{A}} + \mathbf{C}_2\right\|_{2,1}, \\
    & \text{s.t.} ~~ \tilde{\mathbf{B}}^+(1,:) \ge \mathbf{0}.
\end{align*}

Note that $\mathbf{C}_1 = \mathbf{G}_1 - \text{Diag}\left(\tilde{\mathbf{B}}^-(2,:)\right)\tilde{\mathbf{A}}$ and $\mathbf{C}_2 = \text{Diag}\left(\tilde{\mathbf{B}}^-(2,:)\right)\tilde{\mathbf{A}}$. The problem is separable in the coordinates of $\tilde{\mathbf{B}}^+(1,:)$ and a projected gradient descent can be used to solve the problem in such a way as:

\begin{align*}
    \tilde{\mathbf{B}}^+(1,j) &= \text{argmin}_{\tilde{\mathbf{B}}^+(1,j)} \frac{1}{2} \left\|\tilde{\mathbf{B}}^+(1,j) \tilde{\mathbf{A}}(j,:) - \mathbf{C}_1(j,:)\right\|_2^2 \\
    & \qquad \qquad + \lambda_1 \left\|\tilde{\mathbf{B}}^+(1,j)\tilde{\mathbf{A}}(j,:) + \mathbf{C}_2(j,:)\right\|_{2}, \\
    & \text{s.t.} ~~ \tilde{\mathbf{B}}^+(1,j) \ge 0.
\end{align*}

\noindent A similar symmetric argument can be used to update the variables $\tilde{\mathbf{B}}^+(2,:)$, $\tilde{\mathbf{B}}^+(1,:)$ and $\tilde{\mathbf{B}}^-(2,:)$.

\section*{Tropical Pruning: Adapting Optimization \ref{eq:prunning_obj} for Multi-Class Classifier}
\label{extension_to_multiclass}
Theorem \ref{theo:dec_bound_prop} describes a superset to the decision boundaries of a binary classifier through the dual subdivision $R(\mathbf{x})$, \ie $\delta(R(\mathbf{x}))$. For a neural network $f$ with $k$ classes, a natural extension for it is to analyze the pair-wise decision boundaries of of all $k$-classes. Let $\mathcal{T}(R_{ij}(\mathbf{x}))$ be the superset to the decision boundaries separating classes $i$ and $j$. Therefore, a natural extension to the geometric loss in Equation \ref{opt:pruning_first} is to preserve the polytopes among all pairwise follows:

\begin{equation}
\label{opt:pruning_first_sup}
\begin{aligned}
     \min_{\tilde{\mathbf{A}},\tilde{\mathbf{B}}} \sum_{\forall [i,j] \in S}& d\Big(\text{ConvexHull}\left(\mathcal{Z}_{\tilde{\mathbf{G}}_{(i^+,j^-)}},  \mathcal{Z}_{\tilde{\mathbf{G}}_{(j^+,i^-)}}\right), \\
     & ~~~~ \text{ConvexHull}\left(\mathcal{Z}_{\mathbf{G}_{(i^+,j^-)}},\mathcal{Z}_{\mathbf{G}_{(j^+,i^-)}}\right)\Big).
\end{aligned}
\end{equation}

The set $S$ is all possible pairwise combinations of the $k$ classes such that $S = \{\{i,j\}, \forall i \neq j, i=1,\dots,k,j=1,\dots,k\}$. The generator $\mathcal{Z}(\tilde{G}_{(i,j)})$ is the zonotope with the generator matrix $\tilde{\mathbf{G}}_{(i^+,j^-)} = \text{Diag}\left[\text{ReLU}(\tilde{\mathbf{B}}(i,:)) + \text{ReLU}(-\tilde{\mathbf{B}}(j,:))\right]\tilde{\mathbf{A}}$. However, such an approach is generally computationally expensive, particularly, when $k$ is very large. To this end, we make the following observation that $\tilde{\mathbf{G}}_{(i^+,j^-)}$ can be equivalently written as a Minkowski sum between two sets zonotopes with the generators $\mathbf{G}_{i^+} = \text{Diag}\left[\text{ReLU}(\tilde{\mathbf{B}}(i,:)\right]\tilde{\mathbf{A}}$ and $\mathbf{G}_{j^-} = \text{Diag}\left[\text{ReLU}(\tilde{\mathbf{B}}_{j^-})\right]\tilde{\mathbf{A}}$. That is to say, $\mathcal{Z}_{\tilde{\mathbf{G}}_{(i^+,j^-)}} = \mathcal{Z}_{\tilde{\mathbf{G}}_{i+}} \tilde{+} \mathcal{Z}_{\tilde{\mathbf{G}}_{j-}}$. This follows from the associative property of Minkowski sums. Hence, $\tilde{\mathbf{G}}_{(i^+,j^-)}$ can be seen a concatenation between $\tilde{\mathbf{G}}_{i^+}$ and $\tilde{\mathbf{G}}_{j^-}$. The objective \ref{opt:pruning_first_sup} can be expanded as follows:

\begin{align*}
     &\min_{\tilde{\mathbf{A}},\tilde{\mathbf{B}}} \sum_{\forall \{i,j\} \in S}d\Big(\text{ConvexHull}\left(\mathcal{Z}_{\tilde{\mathbf{G}}_{(i^+,j^-)}},\mathcal{Z}_{\tilde{\mathbf{G}}_{(j^+,i^-)}}\right),\\
     &\qquad \qquad \qquad  \text{ConvexHull}\left(\mathcal{Z}_{\mathbf{G}_{(i^+,j^-)}},\mathcal{Z}_{\mathbf{G}_{(j^+,i^-)}}\right)\Big)\\
     & = \min_{\tilde{\mathbf{A}},\tilde{\mathbf{B}}} \sum_{\forall \{i,j\} \in S}d\Big(\text{ConvexHull}\left(\mathcal{Z}_{\tilde{\mathbf{G}}_{i^+}} \tilde{+} \mathcal{Z}_{\tilde{\mathbf{G}}_{j-}}, \mathcal{Z}_{\tilde{\mathbf{G}}_j^+} \tilde{+} \mathcal{Z}_{\tilde{\mathbf{G}}_{i^-}}\right), \\
     & \qquad \qquad \qquad  \text{ConvexHull}\left(\mathcal{Z}_{\mathbf{G}_{i^+}} \tilde{+} \mathcal{Z}_{\mathbf{G}_{j-}}, \mathcal{Z}_{\mathbf{G}_j^+} \tilde{+} \mathcal{Z}_{\mathbf{G}_{i^-}}\right)\Big) \\
     & \approx     \min_{\tilde{\mathbf{A}},\tilde{\mathbf{B}}} \sum_{\forall [i,j] \in S} \Big\| \left(
    \begin{matrix}\tilde{\mathbf{G}}_{i^+} \\ \tilde{\mathbf{G}}_{j^-}
  \end{matrix} \right) - \left(
    \begin{matrix}\mathbf{G}_{i^+} \\ \mathbf{G}_{j^-}
  \end{matrix} \right)  \Big\|_F^2 + \Big\| \left(
    \begin{matrix}\tilde{\mathbf{G}}_{i^-} \\ \tilde{\mathbf{G}}_{j^+}
  \end{matrix} \right) - \left(
    \begin{matrix}\mathbf{G}_{i^-} \\ \mathbf{G}_{j^+}
  \end{matrix} \right)  \Big\|_F^2 \\ 
  &= \min_{\tilde{\mathbf{A}},\tilde{\mathbf{B}}} \sum_{\forall \{i,j\} \in S} \frac{1}{2}\left\|\tilde{\mathbf{G}}_{i^+} - \mathbf{G}_{i^+}\right\|_F^2 + \frac{1}{2}\left\|\tilde{\mathbf{G}}_{i^-} - \mathbf{G}_{i^-}\right\|_F^2 \\
  & \qquad \qquad \qquad + 
     \frac{1}{2}\left\|\tilde{\mathbf{G}}_{j^+} - \mathbf{G}_{j^+}\right\|_F^2 +
     \frac{1}{2}\left\|\tilde{\mathbf{G}}_{j^-} - \mathbf{G}_{j^-}\right\|_F^2 \\
     & = \min_{\tilde{\mathbf{A}},\tilde{\mathbf{B}}} \frac{k-1}{2} \sum_{i=1}^k 
     \left\|\tilde{\mathbf{G}}_{i^+} - \mathbf{G}_{i^+}\right\|_F^2 + \left\|\tilde{\mathbf{G}}_{i^-} - \mathbf{G}_{i^-}\right\|_F^2 . 
\end{align*}
The approximation follows in a similar argument to the binary classifier case. The last equality follows from a counting argument. We solve the objective for all multi-class networks in the experiments with alternating optimization in a similar fashion to the binary classifier case. Similarly to the binary classification approach, we introduce the $\|.\|_{2,1}$ to enforce sparsity constraints for pruning purposes. Therefore the overall objective has the form:
\begin{align*}
     & \min_{\tilde{\mathbf{A}},\tilde{\mathbf{B}}} \frac{1}{2}\sum_{i=1}^k 
     \left\|\tilde{\mathbf{G}}_{i^+} - \mathbf{G}_{i^+}\right\|_F^2 + \left\|\tilde{\mathbf{G}}_{i^-} - \mathbf{G}_{i^-}\right\|_F^2 \\
     & \qquad \qquad + \lambda \left(\left\|\tilde{\mathbf{G}}_{i^+}\right\|_{2,1} + \left\|\tilde{\mathbf{G}}_{i^-}\right\|_{2,1}\right). 
\end{align*}
For completion, we derive the updates for $\tilde{\mathbf{A}}$ and $\tilde{\mathbf{B}}$.

\noindent \textbf{Updating $\tilde{\mathbf{A}}$}:
\begin{align*}
    \tilde{\mathbf{A}} &= \text{argmin}_{\tilde{\mathbf{A}}} \sum_{i=1}^k 
     \frac{1}{2}
     \left(\left\|\text{Diag}\left(\tilde{\mathbf{B}}^+(i,:)\right)\tilde{\mathbf{A}} - \mathbf{G}_{i^+}\right\|_F^2 \right. \\
     & \qquad \left. + \left\|\text{Diag}\left(\tilde{\mathbf{B}}^-(i,:)\right)\tilde{\mathbf{A}} - \mathbf{G}_{i^-}\right\|_F^2 
     \right)  \\
     & \qquad + \lambda \left(\left\|\text{Diag}\left(\tilde{\mathbf{B}}^+(i,:)\right)\tilde{\mathbf{A}}\right\|_{2,1}  \right. \\
     & \left. \qquad \qquad \qquad \qquad  + \left\|\text{Diag}\left(\tilde{\mathbf{B}}^-(i,:)\right)\tilde{\mathbf{A}}\right\|_{2,1} \right). 
\end{align*}

Similar to the binary classification, the problem is separable in the rows of $\tilde{\mathbf{A}}$. A closed form solution in terms  of the proximal operator of $\ell_2$ norm follows naturally for each $\tilde{\mathbf{A}}(i,:)$.

\noindent \textbf{Updating $\tilde{\mathbf{B}}^+(i,:)$}:

\begin{align*}
    \tilde{\mathbf{B}}^+(i,:) &= \text{argmin}_{\tilde{\mathbf{B}}^+(i,:)} \frac{1}{2} \left\|\text{Diag}\left(\tilde{\mathbf{B}}^+(i,:)\right) \tilde{\mathbf{A}} - \tilde{\mathbf{G}}_{i^+}\right\|_F^2 \\
    & \qquad \qquad + \lambda \left\|\text{Diag}\left(\tilde{\mathbf{B}}^+(i,:)\right)\tilde{\mathbf{A}}\right\|_{2,1}, \\
    & \text{s.t.} ~~ \tilde{\mathbf{B}}^+(i,:) \ge \mathbf{0}.
\end{align*}

Note that the problem is separable per coordinates of $\mathbf{B}^+(i,:)$ and each subproblem is updated as:
\begin{align*}
    \tilde{\mathbf{B}}^+(i,j) &= \text{argmin}_{\tilde{\mathbf{B}}^+(i,j)} \frac{1}{2} \left\|\tilde{\mathbf{B}}^+(i,j) \tilde{\mathbf{A}}(j,:) - \tilde{\mathbf{G}}_{i^+}(j,:)\right\|_2^2 \\
    & \qquad + \lambda \left\|\tilde{\mathbf{B}}^+(i,j)\tilde{\mathbf{A}}(j,:)\right\|_{2}, ~~ \text{s.t.} ~~ \tilde{\mathbf{B}}^+(i,j) \ge 0 \\
    & =  \text{argmin}_{\tilde{\mathbf{B}}^+(i,j)} \frac{1}{2} \left\|\tilde{\mathbf{B}}^+(i,j) \tilde{\mathbf{A}}(j,:) - \tilde{\mathbf{G}}_{i^+}(j,:)\right\|_2^2 \\
    & \qquad + \lambda \left|\tilde{\mathbf{B}}(i,j)\right|  \left\|\tilde{\mathbf{A}}(j,:)\right\|_{2}, ~~ \text{s.t.} ~~ \tilde{\mathbf{B}}^+(i,j) \ge 0 \\
    & = \max \left(0, \frac{\tilde{\mathbf{A}}(j,:)^\top \tilde{\mathbf{G}}_{i^+}(j,:) - \lambda \|\tilde{\mathbf{A}}(j,:)\|_2}{\|\tilde{\mathbf{A}}(j,:)\|_2^2} \right).
\end{align*}
A similar argument can be used to update $\tilde{\mathbf{B}}^-(i,:)$ $\forall i$.
Finally, the parameters of the pruned network will be constructed $\mathbf{A} \leftarrow \mathbf{\tilde{A}}$ and $\mathbf{B}\leftarrow \mathbf{\tilde{B}}^+ - \mathbf{\tilde{B}}^-$.

\begin{figure*}[ht]
\centering
       \includegraphics[scale =0.2]{./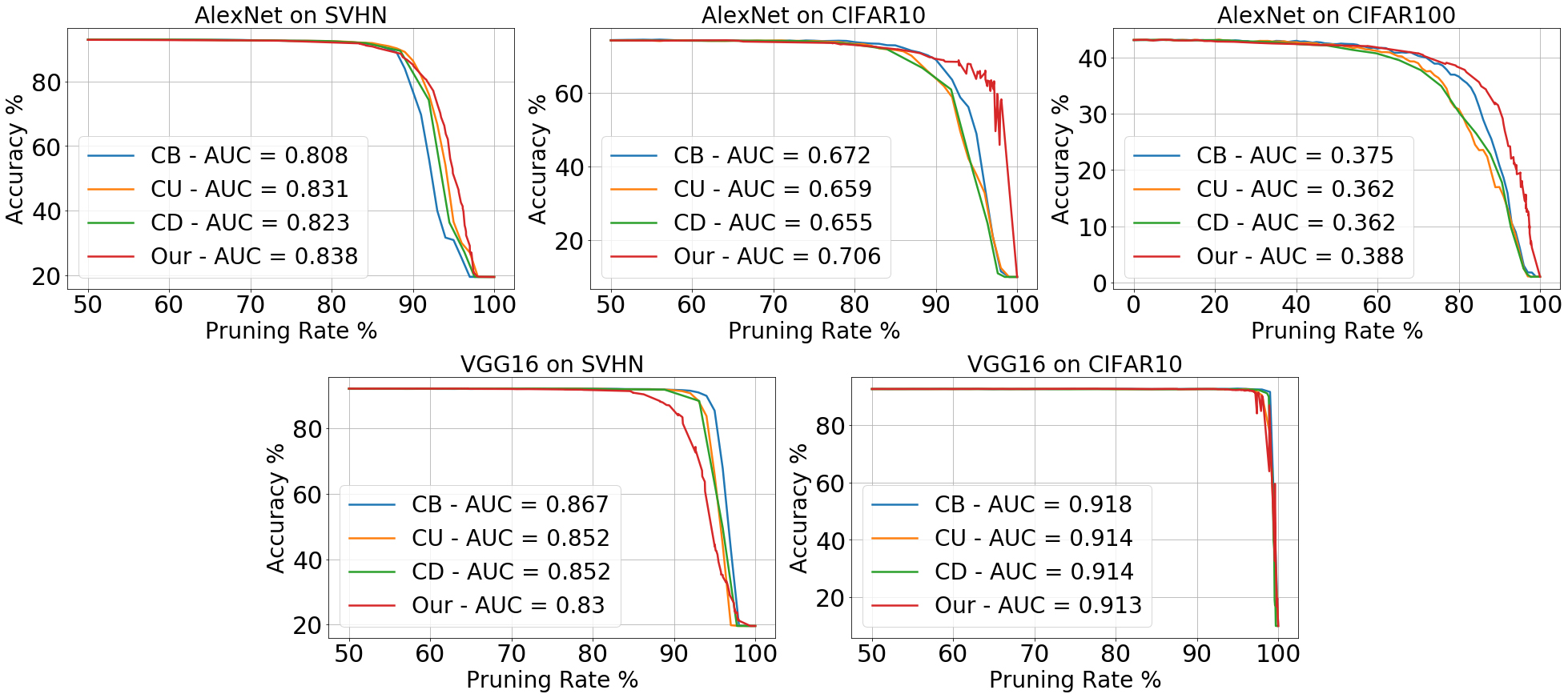}
  \caption{\textbf{Results of Tropical Pruning with Fine Tuning the Biases of the Classifier.} Tropical pruning applied on AlexNet and VGG16 trained on SVHN, CIFAR10, CIFAR100 against different pruning methods with fine tuning the biases of the classifier only.}
  \label{fig:supp_retraining_FC_Biases}
  \includegraphics[scale = 0.2]{./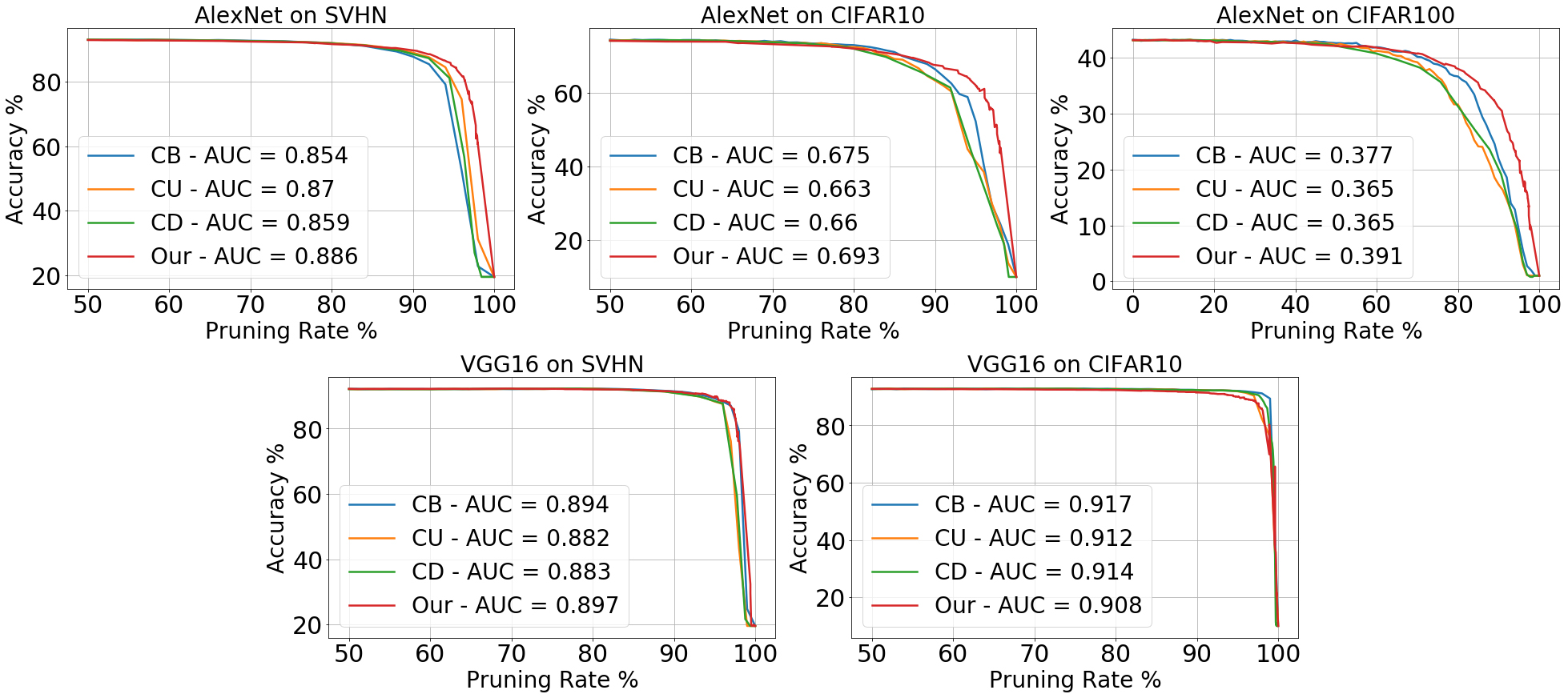}
  \caption{\textbf{Results of Tropical Pruning with Fine Tuning the Biases of the Network.}  Tropical pruning applied on AlexNet and VGG16 trained on SVHN, CIFAR10, CIFAR100 against different pruning methods with fine tuning the biases of the network.}
  \label{fig:supp_retraining_all_biases}
  \vspace{-0.35cm}
\end{figure*}

\section*{Tropical Pruning: Supplemental Experiments}
\textbf{Experimental Setup.} In all experiments of the tropical pruning section, all algorithms are run for only a single iteration where $\lambda$ increases linearly from $0.02$ with a factor of $0.01$. Increasing $\lambda$ corresponds to increasing weight sparsity and we keep doing until sparsification is $100\%$. 

\noindent \textbf{Experiments.} We conduct more experimental results on AlexNet and VGG16 on SVHN, CIFAR10 and CIFAR100 datasets. We examine the performance for when the networks have only the biases of the classifier fine tuned after tuning as shown in Figure \ref{fig:supp_retraining_FC_Biases}. Similar experiments is reported for the same networks but for when the biases for the complete networks are fine tuned as in Figure \ref{fig:supp_retraining_all_biases}.

\begin{algorithm*}[ht]
\label{algo_adv_attack}
\caption{Solving Problem (\ref{eq:adv_attack_main_obj})}
{\bfseries Input:}
$\mathbf{A}_1 \in \mathbb{R}^{p \times n}, \mathbf{B} \in \mathbb{R}^{k \times p}, \mathbf{x}_0 \in \mathbb{R}^n, t, \lambda > 0, \gamma > 1, K > 0, \xi_{\mathbf{A}_1} = \mathbf{0}_{p \times n}, \eta^1 = \mathbf{z}^1 = \mathbf{w}^1 = \mathbf{z}^1 = \mathbf{u}^1 = \mathbf{w}^1 = \mathbf{0}_n$. \\
{\bfseries Output:}
$\eta,\xi_{\mathbf{A}_1}$ \\
{\bfseries Initialize:}
$\rho = \rho_0$
\newline
\While{\emph{not converged}}{
\For{\emph{k $\leq$ K}}{
$\eta$ \textbf{update:} $\eta^{k+1} = (2\lambda \mathbf{A}_1^\top \mathbf{A}_1 + (2+\rho)\mathbf{I})^{-1}(2\lambda \mathbf{A}_1^\top \xi_{\mathbf{A}_1}^k \mathbf{x}_0 + \rho \mathbf{z}^k - \mathbf{u}^k)$
\newline
$\mathbf{w}$ \textbf{update:} $\mathbf{w}^{k+1} = 
\begin{cases}
\min(1-\mathbf{x}_0,\epsilon_1) &: \mathbf{z}^k - \nicefrac{1}{\rho} \mathbf{v}^k  > \min(1-\mathbf{x}_0,\epsilon_1) \\
\max(-\mathbf{x}_0,-\epsilon_1) &: \mathbf{z}^k - \nicefrac{1}{\rho} \mathbf{v}^k < \max(-\mathbf{x}_0,-\epsilon_1) \\
\mathbf{z}^k - \nicefrac{1}{\rho} \mathbf{v}^k &: \textit{otherwise}
\end{cases}$
\newline
$\mathbf{z}$ \textbf{update:} $\mathbf{z}^{k+1} = \frac{1}{\eta^{k+1} + 2\rho}(\eta^{k+1} \mathbf{z}^k + \rho (\eta^{k+1} + \nicefrac{1}{\rho} \mathbf{u}^k + \mathbf{w}^k + \nicefrac{1}{\rho}\mathbf{v}^k) - \nabla \mathcal{L}(\mathbf{z}^k + \mathbf{x}_0))$
\newline
 $\xi_{\mathbf{A}_1}$ \textbf{update:} $\xi_{\mathbf{A}_1}^{k+1} = \text{argmin}_{\xi_{\mathbf{A}}} \|\xi_{\mathbf{A}_1}\|_F^2 + \lambda \|\xi_{\mathbf{A}_1} \mathbf{x}_0 - \mathbf{A}_1 \eta^{k+1} \|_2^2  +\mathcal{\Bar{L}}(\mathbf{A}_1) ~~ \text{s.t.} ~~ \|\xi_{\mathbf{A}_1}\|_{\infty,\infty} \leq \epsilon_2$
  \newline
  $\mathbf{u}$ \textbf{update:} $\mathbf{u}^{k+1}  =  \mathbf{u}^k + \rho(\mathbf{\eta}^{k+1} - \mathbf{z}^{k+1})$
  \newline
  $\mathbf{v}$ \textbf{update:} $\mathbf{v}^{k+1} = \mathbf{v}^k + \rho (\mathbf{w}^{k+1} - \mathbf{z}^{k+1})$
 \newline
 $\rho \leftarrow \gamma \rho$
  }
  $\lambda \leftarrow \gamma \lambda $
  \newline
  $\rho \leftarrow \rho_0$
  }
\end{algorithm*}
\begin{figure*}
        \begin{subfigure}[b]{0.32\textwidth}
             \includegraphics[width=\linewidth]{./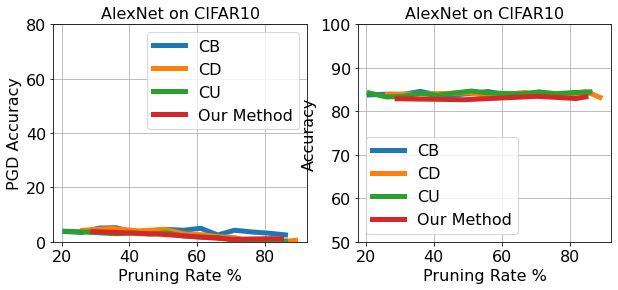}
                \caption{Nominal training and fine tuning}
                \label{fig:alexnet_nominal}
        \end{subfigure}%
        \begin{subfigure}[b]{0.32\textwidth}
             \includegraphics[width=\linewidth]{./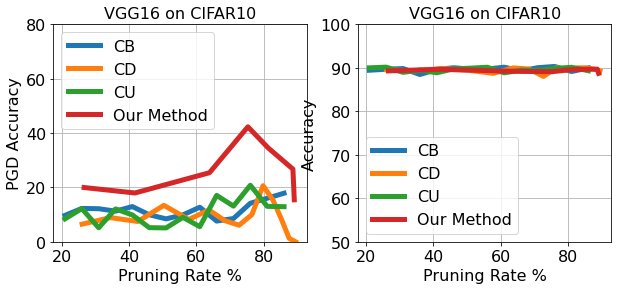}
                \caption{Nominal training and fine tuning}
                \label{fig:vgg16_nominal}
        \end{subfigure}%
        \begin{subfigure}[b]{0.32\textwidth}
             \includegraphics[width=\linewidth]{./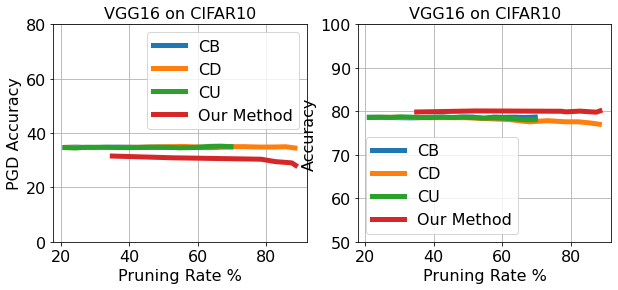}
                \caption{Robust training and fine tuning}
                \label{fig:vgg16_robust}
        \end{subfigure}%
        \caption{\textcolor{black}{\textbf{Comparison on clean and robust accuracy under various levels on pruning.} The first two subfigures show the robust and clean accuracy of both nominally trained AlexNet and VGG16 and fined tuned nominally. The last subfigure on  the right shows the robust and clean accuracy of VGG16 trained robustly and fine tuned robustly. Robust accuracy is computed under $\nicefrac{8}{255}$ attacks computed using PGD20 with 5 random restarts.}}
\end{figure*}

\section*{Tropical Adversarial Attacks: Algorithm for Solving (5)}
In this section, we derive an algorithm solving:

\begin{equation}
\begin{aligned}
    \label{adv_main_admm}
    \min_{\eta, \xi_{\mathbf{A}_1}} \qquad & \mathcal{D}_1(\eta) + \mathcal{D}_2(\xi_{\mathbf{A}_1}) \\
   \textrm{s.t.} \qquad &  -loss(g(\mathbf{A}_1(\mathbf{x}_0 + \eta)),t) \leq -1 \\
   & -loss(g(\mathbf{A}_1 + \xi_{\mathbf{A}_1})\mathbf{x}_0,t) \leq -1, \\
    & (\mathbf{x}_0+\eta) \in [0,1]^n, ~ \|\eta\|_\infty \leq \epsilon_1, ~ \|\xi_{\mathbf{A}_1}\|_{\infty,\infty} \leq \epsilon_2, \\
    & \mathbf{A}_1\eta - \xi_{\mathbf{A}_1}\mathbf{x}_0 = 0.
\end{aligned}
\end{equation}

The function $\mathcal{D}_2(\xi_{\mathbf{A}})$ captures the perturbdation in the dual subdivision polytope such that the dual subdivion of the network with the first linear layer $\mathbf{A}_1$ is similar to the dual subdivion of the network with the first linear layer $\mathbf{A}_1 + \xi_{\mathbf{A}_1}$. This can be generally formulated as an approximation to the following distance function $d\Big( \text{ConvHull}\left(\mathcal{Z}_{\tilde{\mathbf{G}}_1}, \mathcal{Z}_{\tilde{\mathbf{G}}_2}\right), \text{ConvHull}\left(\mathcal{Z}_{\mathbf{G}_1}, \mathcal{Z}_{\mathbf{G}_2}\right) \Big)$, where $\tilde{\mathbf{G}}_1 = \text{Diag}\Big[\text{ReLU}(\tilde{\mathbf{B}}(1,:)) + \text{ReLU}(-\tilde{\mathbf{B}}(2,:)) \Big] \left(\tilde{\mathbf{A}} + \xi_{\mathbf{A}_1}\right)$, $\tilde{\mathbf{G}}_2 = \text{Diag}\Big[\text{ReLU}(\tilde{\mathbf{B}}(2,:)) + \text{ReLU}(-\tilde{\mathbf{B}}(1,:)) \Big]\left( \tilde{\mathbf{A}} + \xi_{\mathbf{A}_1}\right)$, $\mathbf{G}_1 = \text{Diag}\Big[\text{ReLU}(\tilde{\mathbf{B}}(1,:)) + \text{ReLU}(-\tilde{\mathbf{B}}(2,:)) \Big] \tilde{\mathbf{A}}$ and $\mathbf{G}_2 = \text{Diag}\Big[\text{ReLU}(\tilde{\mathbf{B}}(2,:)) + \text{ReLU}(-\tilde{\mathbf{B}}(1,:)) \Big]\tilde{\mathbf{A}} $. In particular, to approximate the function $d$, one can use a similar argument as in used in network pruning \ref{sec:pruning} such that $\mathcal{D}_2$ approximates the generators of the zonotopes directly as follows:

\begin{align*}
    & \mathcal{D}_2(\xi_{\mathbf{A}_1}) = \frac{1}{2} \left\|\tilde{\mathbf{G}}_1 - \mathbf{G}_1\right\|_F^2 + \frac{1}{2} \left\|\tilde{\mathbf{G}}_2 - \mathbf{G}_2\right\|_F^2 \\
    & = \frac{1}{2}  \left\|\text{Diag} \Big(\mathbf{B}^+(1,:)\Big) \xi_{\mathbf{A}_{1}}\right\|_F^2 + \frac{1}{2}\left\|\text{Diag} \Big(\mathbf{B}^-(1,:)\Big) \xi_{\mathbf{A}_{1}}\right\|_F^2 \\
    & + \frac{1}{2} \left\|\text{Diag} \Big(\mathbf{B}^+(2,:)\Big) \xi_{\mathbf{A}_{1}}\right\|_F^2 + \frac{1}{2}\left\|\text{Diag} \Big(\mathbf{B}^-(2,:)\Big) \xi_{\mathbf{A}_{1}}\right\|_F^2.
\end{align*}

This can thereafter be extended to multi-class network with $k$ classes as follows $\mathcal{D}_2(\xi_{\mathbf{A}_1}) 
     = \frac{1}{2} \sum_{j=1}^k \left\|\text{Diag} \Big(\mathbf{B}^+(j,:)\Big) \xi_{\mathbf{A}_{1}}\right\|_F^2 + \left\|\text{Diag} \Big(\mathbf{B}^-(j,:)\Big) \xi_{\mathbf{A}_{1}}\right\|_F^2$. Following \cite{xu2018structured}, we take $\mathcal{D}_1(\mathbf{\eta}) = \frac{1}{2}\left \|\mathbf{\eta}\right\|_2^2 $. Therefore, we can write \ref{adv_main_admm} as follows:

\begin{align*}
    \min_{\mathbf{\eta}, \xi_{\mathbf{A}}} ~ & \mathcal{D}_1(\mathbf{\eta}) + \sum_{j=1}^k \left\|\text{Diag}\Big(\mathbf{B}^+(j,:)\Big)\xi_{\mathbf{A}}\right\|_F^2 \\ 
    & \qquad \qquad \qquad + \left\|\text{Diag}\Big(\mathbf{B}^-(j,:) \Big)\xi_{\mathbf{A}}\right\|_F^2. \\
   \textrm{s.t.} \qquad &  -loss(g(\mathbf{A}_1(\mathbf{x}_0 + \eta)),t) \leq -1, \\
   & -loss(g((\mathbf{A}_1 + \xi_{\mathbf{A}_1})\mathbf{x}_0),t) \leq -1, \\
    & (\mathbf{x}_0+\eta) \in [0,1]^n, \|\eta\|_\infty \leq \epsilon_1, \|\xi_{\mathbf{A}_1}\|_{\infty,\infty} \leq \epsilon_2,\\
    & \mathbf{A}_1\eta - \xi_{\mathbf{A}_1}\mathbf{x}_0 = 0.
\end{align*}

\noindent To enforce the linear equality constraints $\mathbf{A}_1\mathbf{\eta} - \xi_{\mathbf{A}_1}\mathbf{x}_0 = 0$, we use a penalty method, where each iteration of the penalty method we solve the sub-problem with ADMM updates. That is, we solve the following optimization problem with ADMM with increasing $\lambda$ such that $\lambda \rightarrow \infty$. For ease of notation, lets denote $\mathcal{L}(\mathbf{x}_0 + \eta) = -loss(g(\mathbf{A}_1(\mathbf{x}_0 + \eta)),t)$, and $\bar{\mathcal{L}}(\mathbf{A}_1) = -loss(g((\mathbf{A}_1 + \xi_{\mathbf{A}_1})\mathbf{x}_0),t)$.

\begin{align*}
\min_{\mathbf{\eta},z,w,\xi_{\mathbf{A}_1}}  &\|\mathbf{\eta}\|_2^2 + \sum_{j=1}^k \left\|\text{Diag}\Big(\text{ReLU}(\mathbf{B}(j,:)\Big)\xi_{\mathbf{A}_1}\right\|_F^2  \\
& \qquad + \left\|\text{Diag}\Big(\text{ReLU}(-\mathbf{B}(j,:)) \Big)\xi_{\mathbf{A}_1}\right\|_F^2 + \mathcal{L}(\mathbf{x}_0 + \mathbf{z}) \\
& \qquad +  h_1(\mathbf{w}) + h_2(\xi_{\mathbf{A}_1}) +  \lambda \|\mathbf{A}_1\mathbf{\eta} - \xi_{\mathbf{A}_1} \, \mathbf{x}_0 \|_2^2 + \mathcal{\Bar{L}}(\mathbf{A}_1). \\ 
\textrm{s.t.} \qquad & \mathbf{\eta} = \mathbf{z} \quad \mathbf{z} = \mathbf{w}.
\end{align*}

where
\begin{align*}
&h_1(\mathbf{\eta})=
\begin{cases}
0, & \textrm{if }(\mathbf{x}_0+\mathbf{\eta}) \in [0,1]^n, \|\mathbf{\eta}\|_\infty \leq \epsilon_1  \\
\infty, & else
\end{cases} \\
&h_2(\xi_{\mathbf{A}_1})=
\begin{cases}
0, & \textrm{if }\|\xi_{\mathbf{A}_1}\|_{\infty,\infty} \leq \epsilon_2  \\
\infty, & \text{else}
\end{cases}.
\end{align*}

The augmented Lagrangian is given as follows:
\begin{align*}
    & \mathcal{L}(\mathbf{\eta},\mathbf{w},\mathbf{z},\xi_{\mathbf{A}_1},\mathbf{u},\mathbf{v}) := \\
    &  \|\mathbf{\eta}\|_2^2 + \mathcal{L}(\mathbf{x}_0 + \mathbf{z}) + h_1(\mathbf{w}) + \sum_{j=1}^k \left\|\text{Diag}(\mathbf{B}^+(j,:))\xi_{\mathbf{A}_1}\right\|_F^2 \\
    & + \left\|\text{Diag}(\mathbf{B}^-(j,:)) \xi_{\mathbf{A}_1}\right\|_F^2  + \mathcal{\Bar{L}}(\mathbf{A}_1) +  h_2(\xi_{\mathbf{A}_1}) \\
    & + \lambda \|\mathbf{A}_1\mathbf{\eta} - \xi_{\mathbf{A}_1} \, \mathbf{x}_0 \|_2^2  + \mathbf{u}^\top(\mathbf{\eta}-\mathbf{z}) + \mathbf{v}^\top(\mathbf{w}-\mathbf{z}) \\
    & + \frac{\rho}{2}(\|\mathbf{\eta} - \mathbf{z}\|_2^2 + \|\mathbf{w} - \mathbf{z}\|_2^2).
\end{align*}

Thereafter, ADMM updates are given as follows:

\begin{align*}
 \{\mathbf{\eta}^{k+1},\mathbf{w}^{k+1}\} &= \text{argmin}_{\mathbf{\eta},\mathbf{w}} \mathcal{L}(\mathbf{\eta}, \mathbf{w}, \mathbf{z}^k,\xi_{\mathbf{A}_1}^k, \mathbf{u}^k,\mathbf{v}^k),\\
 \mathbf{z}^{k+1} &= \text{argmin}_\mathbf{z} \mathcal{L} (\mathbf{\eta}^{k+1},\mathbf{w}^{k+1},\mathbf{z},\xi_{\mathbf{A}_1}^k,\mathbf{u}^k,\mathbf{v}^k),\\ 
 \xi_{\mathbf{A}_1}^{k+1} &= \text{argmin}_{\xi_{\mathbf{A}_1}} \mathcal{L}(\mathbf{\eta}^{k+1},\mathbf{w}^{k+1},\mathbf{z}^{k+1}, \xi_{\mathbf{A}_1}, \mathbf{u}^k, \mathbf{v}^k).
\end{align*}
\begin{align*}
    \mathbf{u}^{k+1} = \mathbf{u}^k + \rho(\mathbf{\eta}^{k+1} - \mathbf{z}^{k+1}), \\
    \mathbf{v}^{k+1} = \mathbf{v}^k + \rho(\mathbf{w}^{k+1} - \mathbf{z}^{k+1}).
\end{align*}

\textbf{Updating $\eta$}:

\begin{align*}
    \mathbf{\eta} ^{k+1} &= \text{argmin}_\mathbf{\eta} \|\mathbf{\eta}\|_2^2+\lambda \|\mathbf{A}_1\mathbf{\eta} - \xi_{\mathbf{A}_1} \, \mathbf{x}_0 \|_2^2 \\
    & \qquad \qquad \qquad +\mathbf{u}^\top \mathbf{\eta} + \frac{\rho}{2}\|\mathbf{\eta} - \mathbf{z}\|_2^2 \\ 
    &= \Big(2\lambda \mathbf{A}_1^\top \mathbf{A}_1 + (2+\rho)\mathbf{I}\Big)^{-1} \\
    & \qquad \qquad \qquad \qquad \Big(2\lambda \mathbf{A}_1^\top \xi_{\mathbf{A}_1}^k \mathbf{x}_0 + \rho \mathbf{z}^k - \mathbf{u}^k\Big).
\end{align*}

\textbf{Updating $\mathbf{w}$}:

\begin{align*}
    \mathbf{w}^{k+1} &= \text{argmin}_\mathbf{w} \mathbf{v}^{k^\top} \mathbf{w} + h_1(\mathbf{w}) + \frac{\rho}{2} \|\mathbf{w}-\mathbf{z}^k\|_2^2 \\
     &=\text{argmin}_\mathbf{w} \frac{1}{2}\left\|\mathbf{w}-\left(\mathbf{z}^k - \frac{\mathbf{v}^k}{\rho}\right)\right\|_2^2 + \frac{1}{\rho} h_1(\mathbf{w}).
\end{align*}

The update $\mathbf{w}$ is separable in coordinates as follows:

\begin{align*}
\mathbf{w}^{k+1} = 
\begin{cases}
\min(1-\mathbf{x}_0,\epsilon_1) &: \mathbf{z}^k - \nicefrac{1}{\rho} \mathbf{v}^k  > \min(1-\mathbf{x}_0,\epsilon_1) \\
\max(-\mathbf{x}_0,-\epsilon_1) &: \mathbf{z}^k - \nicefrac{1}{\rho} \mathbf{v}^k < \max(-\mathbf{x}_0,-\epsilon_1) \\
\mathbf{z}^k - \nicefrac{1}{\rho} \mathbf{v}^k &: \textit{otherwise}
\end{cases}
\end{align*}

\noindent \textbf{Updating $\mathbf{z}$}:

\begin{align*}
\mathbf{z}^{k+1} = &\text{argmin}_\mathbf{z} \mathcal{L}(\mathbf{x}_0+\mathbf{z}) - {\mathbf{u}^k}^\top\mathbf{z} -{\mathbf{v}^k}^\top\mathbf{z} \\
& \qquad \qquad + \frac{\rho}{2}\left(\|\eta^{k+1} - \mathbf{z}\|_2^2 + \|\mathbf{w}^{k+1} - \mathbf{z}\|_2^2\right). 
\end{align*}

% \noindent Liu \etal \cite{liu2019linearized} showed that the linearized ADMM converges for some non-convex problems. Therefore, 

\noindent By linearizing $\mathcal{L}$, as per \cite{liu2019linearized},  and adding Bergman divergence term $\nicefrac{\eta^k}{2} \|\mathbf{z}-\mathbf{z}^k\|_2^2$, we can then update $z$ as follows:

\begin{align*}
 \mathbf{z}^{k+1} &= \frac{1}{\mathbf{\eta}^k + 2\rho}\Big(\eta^k \mathbf{z}^k + \rho \big(\mathbf{\eta}^{k+1} \\
 & \qquad \qquad  + \frac{1}{\rho}\mathbf{u}^k + \mathbf{w}^{k+1} + \frac{1}{\rho} \mathbf{v}^k \big) - \nabla \mathcal{L}(\mathbf{z}^k + \mathbf{x}_0)\Big).
\end{align*}

% \noindent It is worthy to mention that the analysis until this step is inspired by \cite{xu2018structured} with modifications to adapt our new formulation.

\noindent \textbf{Updating $\xi_{\mathbf{A}}$}:

\begin{align*}
    \xi_{\mathbf{A}}^{k+1} = &\text{argmin}_{\xi_{\mathbf{A}}} \|\xi_{\mathbf{A}_1}\|_F^2 + \lambda \|\xi_{\mathbf{A}_1} \mathbf{x}_0 - \mathbf{A}_1 \mathbf{\eta} \|_2^2  +\mathcal{\Bar{L}}(\mathbf{A}_1) \\
    & \text{s.t.} ~~ \|\xi_{\mathbf{A}_1}\|_{\infty,\infty} \leq \epsilon_2,
\end{align*}

\noindent which can be solved with proximal gradient methods.

\section*{Tropical Adversarial Attacks: Experimental Setup}

\textbf{Experimental Setup.} There are five different hyper parameters which are
\begin{align*}
    \epsilon_1 &: \text{The upper bound on the input perturbation}. \\
    \epsilon_2 &: \text{The upper bound on the perturbation in the first layer}. \\
    \lambda &: \text{The trade off between input and first layer perturbations}.\\
    \eta &: \text{Bergman divergence constant}.\\
    \rho &: \text{ADMM constant}.
\end{align*}

\noindent For all of the experiments, we set the values of $\epsilon_2,\lambda , \eta$ and $\rho$ to $1,10^{-3},2.5$ and $1$, respectively. As for $\epsilon_1$ it is set to $0.1$ upon attacking MNIST images of digit 4 set to $0.2$ for all other MNIST images.

\section*{Does Tropical Pruning Enhance Adversarial Robustness?}
\textcolor{black}{
Recent works showed promising direction of improving the adversarial robustness when pruning the adversarially trained models. While we do not consider adversarially trained models in our work, we analyze the effect of various pruning methods on the adversarial robustness of the pruned model.
We conducted several other experiments of measuring the robust accuracy of both AlexNet and VGG16 at different pruning rate. In particular, in Figures \ref{fig:alexnet_nominal} and \ref{fig:vgg16_nominal}, we show the clean accuracy and robust accuracy measured with 20 iterations PGD with 5 restarts of attacks of size $\nicefrac{8}{255}$ at different pruning rates. The fine tuning after the pruning is carried with a nominal cross entropy loss similar to the experiments in Figures \ref{fig:supp_pruning} and \ref{fig:retraining}. We observe that while AlexNet does not particularly enjoy any robustness properties using our pruning scheme, VGG16 pruned use our approach enjoys considerable robustness improvements. In both settings, and under all tested pruning rates, the clean accuracy is preserved for all methods. At last, we show in Figure \ref{fig:vgg16_robust} that starting with a robustly trained VGG16 and finetuned using PGD training, results in an inconclusive comparisons. This is since while our approach slightly down performs in robustness compared to other methods, it also performs slightly better in clean accuracy. To that end, a direct conclusion regarding expecting necessarily an improved robustness following our approach over existing methods is not obvious despite that empirically we do observe that some gains can be attained.
}

% use section* for acknowledgment
\ifCLASSOPTIONcompsoc
  % The Computer Society usually uses the plural form
  \section*{Acknowledgments}
\else
  % regular IEEE prefers the singular form
  \section*{Acknowledgment}
\fi

This work was supported by the King Abdullah University of Science and Technology (KAUST) Office of Sponsored Research (OSR) under Award No. OSR-CRG2019-4033, as well as, the SDAIA-KAUST Center of Excellence in Data Science and Artificial Intelligence (SDAIA-KAUST AI).
We would like to thank Modar Alfadly and Humam Alwassel for the help and discussion.

\ifCLASSOPTIONcaptionsoff
  \newpage
\fi

\bibliographystyle{IEEEtran}
\bibliography{references.bib}

% biography section
\begin{IEEEbiography}[{\includegraphics[width=1in,height=1.25in,clip,keepaspectratio]{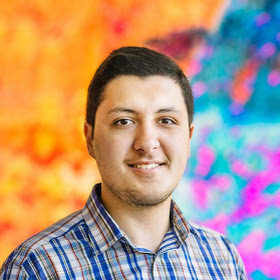}}]{Motasem Alfarra} is a PhD candidate in electrical and computer engineering at King Abdullah University of Science and Technology (KAUST). He recieved his BSc degree in electrical engineering from Kuwait University in 2018 with class honors. He obtained his MSc in electrical engineering with the focus on machine learning and computer vision from KAUST in 2020. Alfarra has published several papers in top tier conferences such as CVPR, ECCV, AAAI, BMVC, and UAI. 
\end{IEEEbiography}

\begin{IEEEbiography}[{\includegraphics[width=1in,height=1.25in,clip,keepaspectratio]{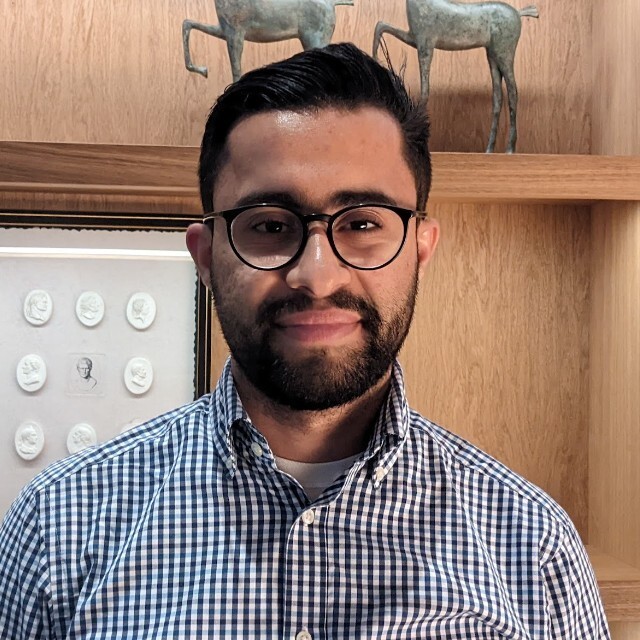}}]{Adel Bibi} is a senior research fellow in machine learning and computer vision at the Department of Engineering Science of the University of Oxford. He is also a Junior Research Fellow (JRF) of Kellogg College. Prior to that, he was a postdoctoral researcher at the same group for a year since October 2020. Adel received his MSc and PhD degrees from King Abdullah University of Science and Technology (KAUST) in 2016 and 2020, respectively. He received his BSc degree in electrical engineering with class honors from Kuwait university in 2014. Adel has been recognized as an outstanding reviewer for CVPR18, CVPR19, ICCV19, and ICLR22, and won the best paper award at the optimization and big data conference in KAUST. He has published more than 20 papers in CVPRs, ECCVs, ICCVs and ICLRs some which were selected as orals and spotlights and is going to serve as an area chair for AAAI23.
\end{IEEEbiography}

\begin{IEEEbiography}[{\includegraphics[width=1in,height=1.25in,clip,keepaspectratio]{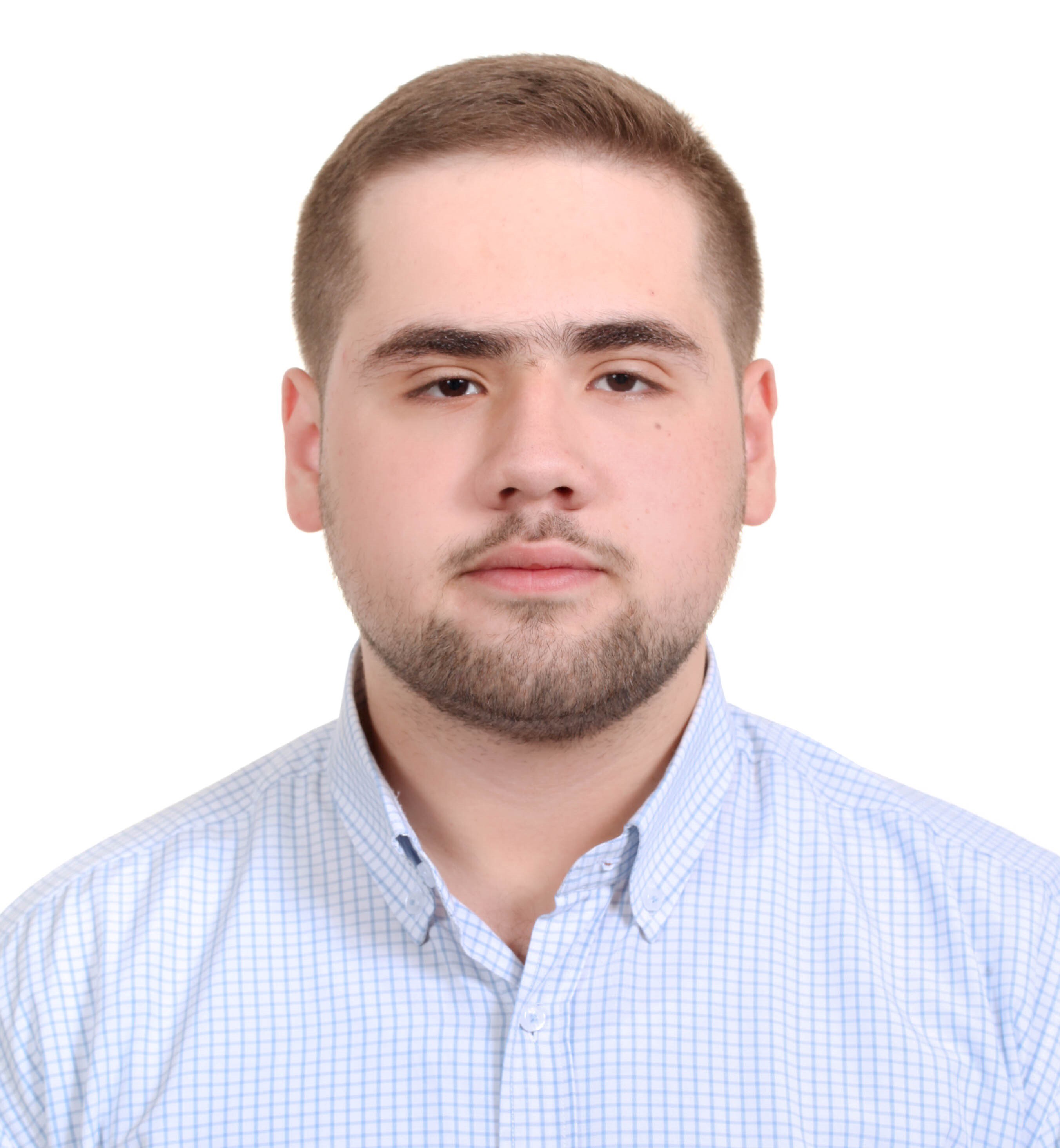}}]{Hasan Abed Al Kader Hammoud} received his BE degree in electrical and computer engineering with high distinction from American University of Beirut (AUB) in 2020. He was awarded both distinguished graduate and Mohamad Ali Safieddine Academic excellence awards for his performance in his undergraduate studies. Currently, he is pursuing his MSc and PhD degrees in electrical and computer engineering at King Abdullah University of Science and Technology.
\end{IEEEbiography}

\begin{IEEEbiography}[{\includegraphics[width=1in,height=1.25in,clip,keepaspectratio]{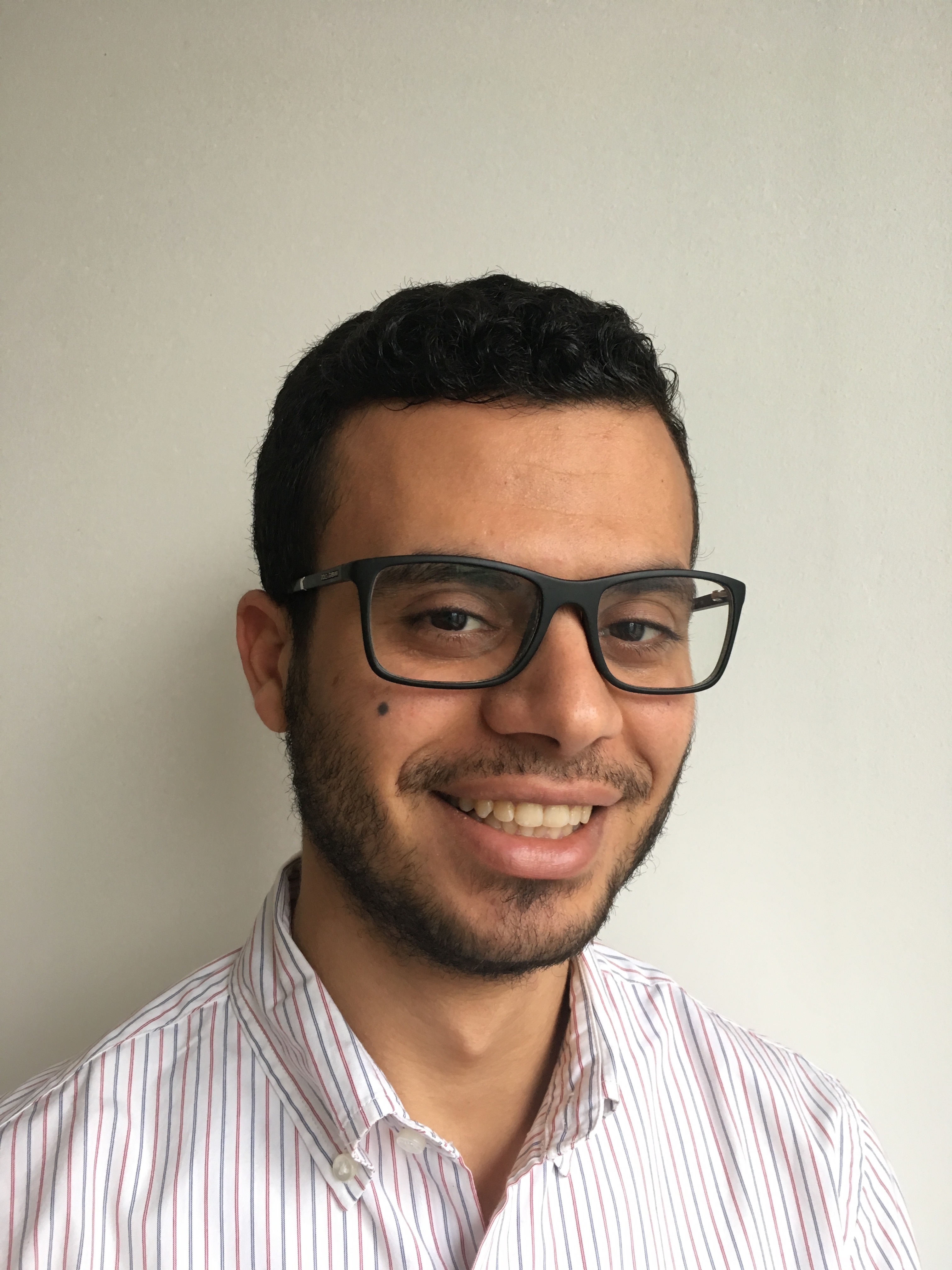}}]{Mohamed Gaafar} received the BSc degree in Information Engineering and Technology (Major in Communications) with Highest Honours from the German University in Cairo (GUC) in 2013. He later obtained his MSc degree in Electrical Engineering from King Abdullah University of Science and Technology (KAUST) in 2016. During 2016-2019, he held several research/industry positions in Berlin, a Research Associate (RA) at the Technical University (TU Berlin), a visiting RA at the Machine Learning Group at Fraunhofer Heinrich Hertz Institute (HHI) and a Data Scientist at itemis AG, a German IT consulting company. He is currently working as an Applied Scientist at Zalando SE, Europe's leading online fashion platform based in Berlin. His research interests fall into Machine Learning, Natural Language Processing, Recommendation Systems and Wireless Communications.
\end{IEEEbiography}

\begin{IEEEbiography}[{\includegraphics[width=1in,height=1.25in,clip,keepaspectratio]{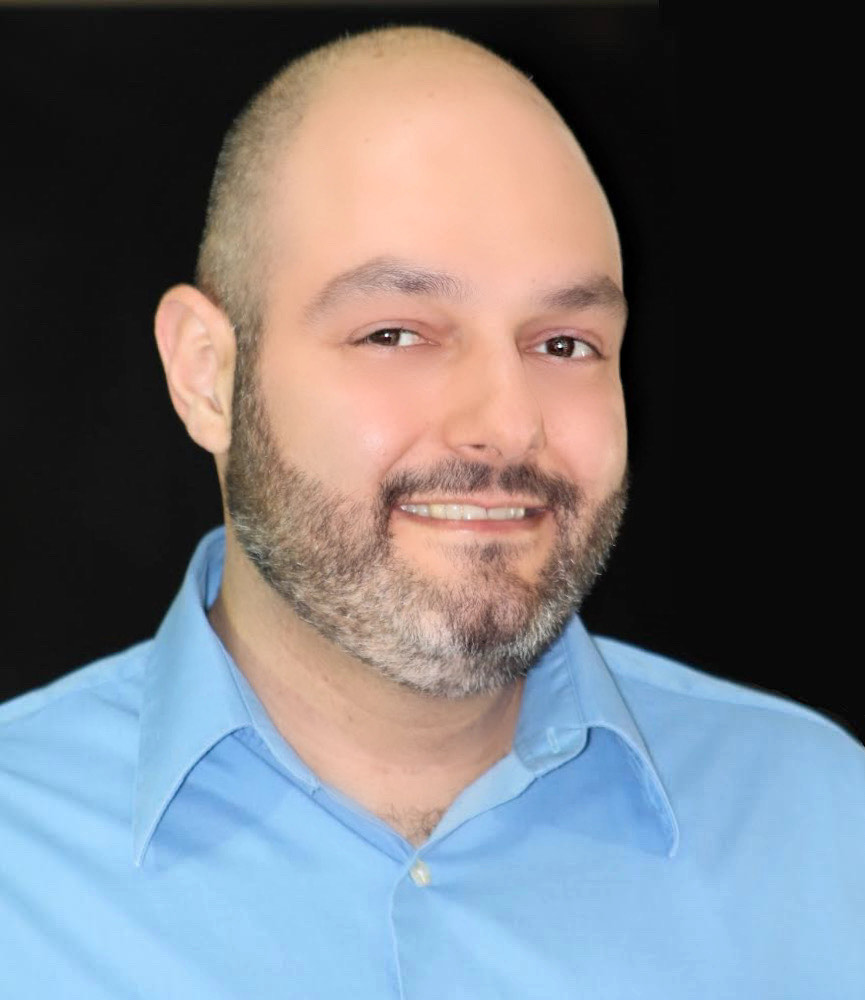}}]{Bernard Ghanem} is  a Professor of Electrical and Computer Engineering, a theme leader at the Visual Computing Center (VCC), and the Deputy Director of the AI Initiative at King Abdullah University of Science and Technology (KAUST) in Thuwal, Saudi Arabia. His research interests lie in computer vision and machine learning with emphasis on topics in video understanding, 3D recognition, and theoretical foundations of deep learning. He received his Bachelor’s degree from the American University of Beirut (AUB) in 2005 and his MS/PhD from the University of Illinois at Urbana-Champaign (UIUC) in 2010. His work has received several awards and honors, including six Best Paper Awards for workshops in CVPR/ECCV/ICCV, a Google Faculty Research Award in 2015 (1st in MENA for Machine Perception), and a Abdul Hameed Shoman Arab Researchers Award for Big Data and Machine Learning in 2020. He has co-authored more than 160 papers in his field. He serves as an Associate Editor for IEEE Transactions on Pattern Analysis and Machine Intelligence (TPAMI) and has served as Area Chair (AC) for the main computer vision and AI/ML conferences including CVPR, ICCV, ECCV, NeurIPS, ICLR, and AAAI. \end{IEEEbiography}

\end{document}